\documentclass[10pt,journal,compsoc]{IEEEtran}
%
% If IEEEtran.cls has not been installed into the LaTeX system files,
% manually specify the path to it like:
% \documentclass[10pt,journal,compsoc]{../sty/IEEEtran}

\usepackage{amssymb}
\usepackage{amsmath}
\usepackage{amsfonts}
\usepackage{amsmath}
\usepackage{mathabx}
\usepackage{amssymb}
\usepackage{amsopn}
\usepackage{bm}
\usepackage{relsize}
\usepackage{exscale}
\usepackage{booktabs}
\usepackage{graphicx}
\usepackage{epstopdf}
\usepackage{epsfig}
\usepackage{ragged2e}
\usepackage{url}
\usepackage{float}
\usepackage{stfloats}
\usepackage{subfigure}
\usepackage{algorithmic}
\usepackage{algorithm}
\usepackage{threeparttable}
\usepackage{lineno}
\usepackage{multirow}
\usepackage{amsthm}
\newtheorem{definition}{Definition}
\newtheorem{theorem}{Theorem}
\newtheorem{lemma}{Lemma}
\newtheorem{corollary}{Corollary}
\usepackage{color}

% *** CITATION PACKAGES ***
%
\ifCLASSOPTIONcompsoc
  % IEEE Computer Society needs nocompress option
  % requires cite.sty v4.0 or later (November 2003)
  \usepackage[nocompress]{cite}
\else
  % normal IEEE
  \usepackage{cite}
\fi
% cite.sty was written by Donald Arseneau
% V1.6 and later of IEEEtran pre-defines the format of the cite.sty package
% \cite{} output to follow that of the IEEE. Loading the cite package will
% result in citation numbers being automatically sorted and properly
% "compressed/ranged". e.g., [1], [9], [2], [7], [5], [6] without using
% cite.sty will become [1], [2], [5]--[7], [9] using cite.sty. cite.sty's
% \cite will automatically add leading space, if needed. Use cite.sty's
% noadjust option (cite.sty V3.8 and later) if you want to turn this off
% such as if a citation ever needs to be enclosed in parenthesis.
% cite.sty is already installed on most LaTeX systems. Be sure and use
% version 5.0 (2009-03-20) and later if using hyperref.sty.
% The latest version can be obtained at:
% http://www.ctan.org/pkg/cite
% The documentation is contained in the cite.sty file itself.
%
% Note that some packages require special options to format as the Computer
% Society requires. In particular, Computer Society  papers do not use
% compressed citation ranges as is done in typical IEEE papers
% (e.g., [1]-[4]). Instead, they list every citation separately in order
% (e.g., [1], [2], [3], [4]). To get the latter we need to load the cite
% package with the nocompress option which is supported by cite.sty v4.0
% and later. Note also the use of a CLASSOPTION conditional provided by
% IEEEtran.cls V1.7 and later.

% *** GRAPHICS RELATED PACKAGES ***
%
\ifCLASSINFOpdf
  % \usepackage[pdftex]{graphicx}
  % declare the path(s) where your graphic files are
  % \graphicspath{{../pdf/}{../jpeg/}}
  % and their extensions so you won't have to specify these with
  % every instance of \includegraphics
  % \DeclareGraphicsExtensions{.pdf,.jpeg,.png}
\else
  % or other class option (dvipsone, dvipdf, if not using dvips). graphicx
  % will default to the driver specified in the system graphics.cfg if no
  % driver is specified.
  % \usepackage[dvips]{graphicx}
  % declare the path(s) where your graphic files are
  % \graphicspath{{../eps/}}
  % and their extensions so you won't have to specify these with
  % every instance of \includegraphics
  % \DeclareGraphicsExtensions{.eps}
\fi
% graphicx was written by David Carlisle and Sebastian Rahtz. It is
% required if you want graphics, photos, etc. graphicx.sty is already
% installed on most LaTeX systems. The latest version and documentation
% can be obtained at:
% http://www.ctan.org/pkg/graphicx
% Another good source of documentation is "Using Imported Graphics in
% LaTeX2e" by Keith Reckdahl which can be found at:
% http://www.ctan.org/pkg/epslatex
%
% latex, and pdflatex in dvi mode, support graphics in encapsulated
% postscript (.eps) format. pdflatex in pdf mode supports graphics
% in .pdf, .jpeg, .png and .mps (metapost) formats. Users should ensure
% that all non-photo figures use a vector format (.eps, .pdf, .mps) and
% not a bitmapped formats (.jpeg, .png). The IEEE frowns on bitmapped formats
% which can result in "jaggedy"/blurry rendering of lines and letters as
% well as large increases in file sizes.
%
% You can find documentation about the pdfTeX application at:
% http://www.tug.org/applications/pdftex

\hyphenation{op-tical net-works semi-conduc-tor}

\begin{document}

\title{Unlock the Power of Algorithm Features: A Generalization Analysis for Algorithm Selection}
%The Role of Algorithm Features in Algorithm Selection: A Theoretical Perspective Analysis
%How Algorithm Features Influence Algorithm Selection: A Theoretical Perspective
%Towards the Generalization Performance of Algorithm Selection

\author{Xingyu~Wu,
        Yan~Zhong,
        Jibin~Wu,
        Yuxiao~Huang,
        Sheng-hao~Wu,
        and~Kay Chen~Tan,~\IEEEmembership{Fellow,~IEEE}% <-this % stops a space
\IEEEcompsocitemizethanks{\IEEEcompsocthanksitem X. Wu, J. Wu, Y. Huang, S. Wu, and KC. Tan are with the Department of Computing, The Hong Kong Polytechnic University, Hong Kong SAR 999077, China (e-mail: xingy.wu@polyu.edu.hk; jibin.wu@polyu.edu.hk, yuxiao.huang@polyu.edu.hk, sheng-hao.wu@polyu.edu.hk, kaychen.tan@polyu.edu.hk). Y. Zhong is with the School of Mathematical Sciences, Peking University, Beijing 100871, China (e-mail: zhongyan@stu.pku.edu.cn).
%\IEEEcompsocthanksitem J. Wu is the corresponding author.
}
}

\IEEEtitleabstractindextext{%
\begin{abstract}
In the algorithm selection research, the discussion surrounding algorithm features has been significantly overshadowed by the emphasis on problem features. Although a few empirical studies have yielded evidence regarding the effectiveness of algorithm features, the potential benefits of incorporating algorithm features into algorithm selection models and their suitability for different scenarios remain unclear. In this paper, we address this gap by proposing the first provable guarantee for algorithm selection based on algorithm features, taking a generalization perspective. We analyze the benefits and costs associated with algorithm features and investigate how the generalization error is affected by different factors. Specifically, we examine adaptive and predefined algorithm features under transductive and inductive learning paradigms, respectively, and derive upper bounds for the generalization error based on their model's Rademacher complexity. Our theoretical findings not only provide tight upper bounds, but also offer analytical insights into the impact of various factors, such as the training scale of problem instances and candidate algorithms, model parameters, feature values, and distributional differences between the training and test data. Notably, we demonstrate how models will benefit from algorithm features in complex scenarios involving many algorithms, and proves the positive correlation between generalization error bound and $\chi^2$-divergence of distributions.

\end{abstract}

% Note that keywords are not normally used for peerreview papers.
\begin{IEEEkeywords}
Algorithm Selection, Algorithm Features, Automated Machine Learning (AutoML), Generalization Performance.
\end{IEEEkeywords}}

\maketitle

\IEEEdisplaynontitleabstractindextext

\IEEEpeerreviewmaketitle

\section{Introduction}
\label{introduction}

For most computational problems, it is widely recognized that no single algorithm outperforms all others on every problem instance \cite{kerschke2019automated}. This phenomenon, known as performance complementarity, has been observed across various domains, including machine learning tasks \cite{munoz2018instance,raschka2018model}, optimization problems \cite{prager2022automated,munoz2015algorithm}, NP-hard problems \cite{heins2023study,tierney2015algorithm}, and so on. To address this challenge, per-instance algorithm selection has been extensively studied, which aims to determine the algorithm that is expected to perform best on a specific problem instance \cite{brazdil2018metalearning}. Existing algorithm selection techniques can be categorized into two groups based on their consideration of algorithm features: approaches based on problem features and approaches based on both problem and algorithm features (referred to as problem feature-based methods and algorithm feature-based methods hereafter for brevity). Fig. \ref{Fig_Introduction} illustrates the distinctions between these two methodologies. %Due to disparities in information sources, problem feature-based methods directly maps the problem feature space to the algorithm space, while the algorithm feature-based methods explore matching patterns between problem features and algorithm features to determine the most suitable algorithm.

Many problem feature-based methods are primarily grounded in the framework proposed by Rice \cite{rice1976algorithm}, which conceptualizes algorithm selection as a mapping from problem features to algorithms, without taking into account algorithm features. These methods often formulate the algorithm selection task as either a performance regression or a multi-class classification problem. Specifically, regression-based techniques \cite{xu2008satzilla} directly predict the performance of each algorithm based on the problem feature set. Conversely, multi-class classification approaches \cite{cunha2018label,abdulrahman2018speeding,hu2021cascaded} assign scores to algorithms based on their relative suitability for a specific problem and select the most appropriate algorithm based on these scores. In addition, several other methodologies have been proposed. For instance, some studies formalize algorithm selection as a collaborative filtering problem \cite{misir2017alors,fusi2018probabilistic} and utilize a sparse matrix with performance data available for only a few algorithms on each problem instance. And similarity-based methods \cite{amadini2014sunny, kadioglu2010isac} select algorithms based on the similarity between the current problem instance and previously encountered instances. There are also some hybrid methods \cite{hanselle2020hybrid,fehring2022harris} combine multiple techniques above.

\begin{figure*}[t]
\begin{center}
%\framebox[4.0in]{$\;$}
%\fbox{\rule[-.5cm]{4cm}{4cm} \rule[-.5cm]{4cm}{0cm}}
\includegraphics[width=0.85\textwidth]{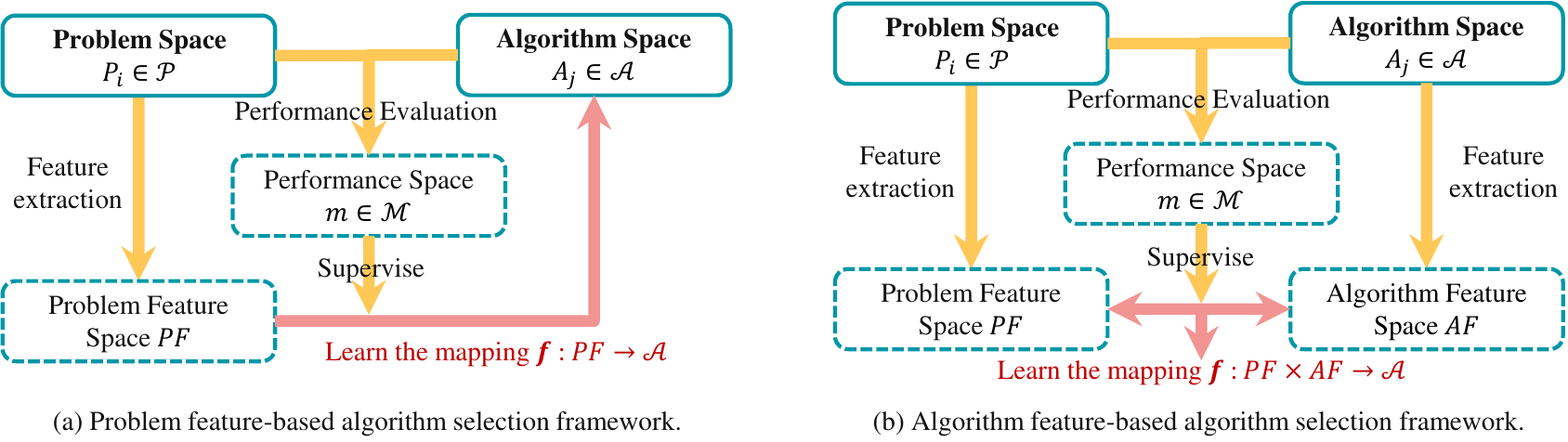}
\end{center}
\caption{Comparison of the problem feature-based framework and the algorithm feature-based framework.}
\label{Fig_Introduction}
\end{figure*}

The majority of research in algorithm selection has primarily focused on problem features, with only a limited number of studies \cite{hough2006modern,tornede2020extreme,hilario2009data,pulatov2022opening,de2021explorative,wu2023llm} exploring and leveraging algorithm features. In these studies, algorithm features are primarily represented in two forms. Most of them utilize manually extracted parameters in algorithm or automatically derived features from code to characterize the unique aspects of algorithms, such as algorithm hyperparameters \cite{hough2006modern,tornede2020extreme}, model structure information \cite{hilario2009data}, code-related statistical features \cite{pulatov2022opening}, abstract syntax tree features \cite{pulatov2022opening}, and large language model-extracted code features \cite{wu2023llm}. These features, collectively referred to as predefined features, describe the logical and computational characteristics of the algorithms themselves. The extraction process of these features is independent of specific problem instances and can be directly observed and acquired before model training. Another type of algorithm feature is obtained through embedding layers in deep models \cite{wu2023llm,ruhkopf2022masif}. During the model training process, embedding layers can automatically learn a set of compact feature vectors that represent algorithmic information, by learning the similarities between algorithms and the correlations between candidate algorithms and problem instances. These features, known as adaptive features, are directly relevant to the algorithm selection task. Algorithm feature-based models typically concatenate algorithm features with problem features and employ machine learning techniques to either regress algorithm performance, predict the most suitable algorithm, or calculate the matching degree between the problem-algorithm pair, to make compatibility decisions.

While these studies have provided some evidence of the positive impact of algorithm features on the algorithm selection task, they are still insufficient to offer compelling guidance for the application of algorithm features in real-world solutions. On one hand, within the limited scope of empirical research on algorithm features, the comparison between models incorporating algorithm features and models utilizing only problem features has yielded mixed results in different scenarios, as evidenced by studies such as \cite{pulatov2022opening,wu2023llm}. The conditions under which algorithm features should be employed, as well as the specific types of algorithm features to consider, remain unclear. On the other hand, several factors, including the scale of training data and the extent of distributional shifts, can significantly influence the design of algorithm selection approaches. The rigorous analysis of how to incorporate these factors into the decision-making process of algorithm feature utilization is currently lacking. Consequently, the use of algorithm features remains primarily at the stage of experimental validation, with ongoing debates regarding their effectiveness and appropriate application conditions for practical deployment. %As a result, caution should be exercised when considering their practical deployment.
%\cite{wu2024asllm}

Intuitively, considering algorithm features in the context of algorithm selection may enhance the predictive capacity of models by introducing additional information. If these features prove informative and there is sufficient training data, the model hold the potential for achieving better performance on testing data \cite{wu2023llm}. However, larger-capacity models necessitate a larger quantity of training data, which may pose challenges in scenarios where collecting performance data for candidate algorithms is prohibitively expensive. Notably, algorithm features appear to improve the model's ability to generalize to changed distribution, encompassing not only diverse problem instances but also novel algorithms that were not encountered during training. The degree to which the model benefits from algorithm features in terms of inductive generalization is contingent upon various factors, including the model's theoretical complexity, the scale of the problem instances and candidate algorithms involved, the property of networks and loss functions, as well as the distribution variation, among others. This paper aims to explore the generalization nature of algorithm feature-based models and provide a theoretical analysis of how the interplay of these factors influences the model's generalization performance. By this means, we aim to elucidate the advantage and trade-off associated with algorithm features.
%as demonstrated in the results presented in \cite{wu2023llm}

Before the incorporation of algorithm features, existing methods are predominantly designed within the transductive learning paradigm \cite{el2009transductive}, with little deliberate consideration given to the theoretical generalization error of these approaches. While not explicitly stated in these studies, most of them assumed the training and test sets shared the same distribution\footnote{This statement is limited to the algorithm selection on static scenarios only. It is worth noting that some existing algorithm selection methods for streaming data \cite{van2014algorithm,van2018online} may take into account distribution changes in the problem stream \cite{rossi2014metastream}. However, the scope of discussion in this paper is limited to static data.} and trained models solely on problem features. Once algorithm features are considered, both adaptive features and predefined features will impart the model with diverse aspects of generalization capabilities across different learning paradigms. Adaptive features are derived based on the adaptability between algorithms and problems within the training data, making them unable to generalize to new candidate algorithms. Conversely, predefined features directly capture the intrinsic properties of algorithms, enabling them to exhibit generalization performance even in scenarios with distribution shifts. Therefore, we will explore the model generalization under two paradigms, i.e., transductive \cite{el2009transductive} and inductive \cite{koltchinskii2001rademacher} learning settings. To ensure the broad applicability of our theoretical research, we adopt a higher level of abstraction for modeling, i.e., the multilayer neural network with arbitrary depths and hidden layer designs, employing 1-Lipschitz activation functions. Under both learning paradigms, we provide the upper bounds of the Rademacher complexity and generalization error for models based on adaptive algorithm features, predefined algorithm features, as well as regression and multi-class classification models based on problem features in Sections 3 and 4. Based on these theoretical investigations, we draw the following conclusions to inspire the practical application of algorithm features in real-world scenarios:
\begin{itemize}
  \item The impact of training set size on the Rademacher complexity of models differs between the transductive learning and inductive learning paradigms. As the size of training set (denoted as $|\mathcal{S}|$) increases, the Rademacher complexity of transductive learning models and inductive learning models decreases at a rate of $\frac{1}{|\mathcal{S}|^{\frac{1}{2}}}$ and $\frac{1}{|\mathcal{S}|^{\frac{1}{4}}}$, respectively. In scenarios where distribution differences are considered, the use of predefined features requires a larger training set.
  \item In the context of transductive learning, algorithm features can enhance the generalization of models in scenarios with a relatively large number of candidate algorithms. When the size of training set is small, as the number of problem instances (denoted as $|\mathcal{S}_{\mathcal{P}}|$) and the number of the candidate algorithms (denoted as $|\mathcal{S}_{\mathcal{A}}|$) increase in the training phase, the upper bound of the generalization error for algorithm feature-based models decreases at a rate of $\frac{1}{|\mathcal{S}_{\mathcal{A}}| \cdot |\mathcal{S}_{\mathcal{P}}|}$. In the same setting, the performance regression models and multi-class classification models constructed based on problem features decrease at rates of $\frac{1}{|\mathcal{S}_{\mathcal{P}}|}$ and $\frac{|\mathcal{S}_{\mathcal{A}}|}{|\mathcal{S}_{\mathcal{P}}|}$, respectively. However, with abundant training data, the upper bound of the generalization error for algorithm feature-based models decreases at a rate of $\frac{1}{\sqrt{|\mathcal{S}_{\mathcal{A}}| \cdot |\mathcal{S}_{\mathcal{P}}|}}$, demonstrating superior performance compared to the performance regression models and multi-class classification models constructed based on problem features, which decrease at rates of $\frac{1}{\sqrt{|\mathcal{S}_{\mathcal{P}}|}}$. Additionally, the impact of the number of problems on different models exhibits similarity.
  \item When the distribution shift exists between the training and test sets, models based on adaptive features lack generalization ability. Conversely, predefined algorithm feature-based models possesses generalization and experience an increase in generalization error as the degree of distribution discrepancy grows, with a growth rate proportional to the $\frac{3}{4}$ power of the chi-square distance between the training and test sets. Furthermore, the size and parameters of each layer in the model amplify the generalization error, with a scaling factor equal to the Frobenius norm of the parameter matrix. Consequently, in scenarios with significant distribution shifts, it is advisable to avoid using large-scale models.
\end{itemize}

\section{Algorithm Feature-based Models and Learning Paradigms}

This section commences by providing an overview of the basic notation and presenting detailed definitions for models based on algorithm features. It subsequently delves into the mechanisms of transductive and inductive learning, elucidating the respective definitions of Rademacher Complexity for models operating within each learning paradigm.

\subsection{Model Definition and Basic Notations}

To discuss the impact of algorithm features, we first define the algorithm selection model based on different types of algorithm features. In this paper, we discuss two types of algorithm features, corresponding to different learning mechanism, namely adaptive features and predefined features. %As shown in Fig. \ref{Fig_ModelDefinition},
Predefined features, derived from a automatic or manual extraction process, can capture inherent algorithmic properties, e.g., features from algorithmic code or algorithm description. On the other hand, adaptive features learned through matching relationships between algorithms and problem instances (achieved by embedding layer), provide nuanced insights into algorithmic suitability for specific problem domains.

%\begin{figure*}[t]
%\begin{center}
%%\framebox[4.0in]{$\;$}
%%\fbox{\rule[-.5cm]{4cm}{4cm} \rule[-.5cm]{4cm}{0cm}}
%\includegraphics[width=0.7\textwidth]{}
%\end{center}
%\caption{Illustration of algorithm selection model based on the adaptive algorithm features and the predefined algorithm features.}
%\label{Fig_ModelDefinition}
%\end{figure*}

Intuitively, the roles of adaptive features and predefined features in algorithm selection tasks are different. Predefined feature is a type of explicit representation extracted from algorithm-related knowledge, representing the intrinsic characteristics of the algorithm. %Not all features are useful for algorithm selection tasks, and determining which features are helpful for algorithm selection is a problem the model needs to solve during training.
Adaptive features, on the other hand, are a type of implicit representation that does not explicitly represent a specific intrinsic characteristic of the algorithm. Instead, they are a set of features automatically defined based on the relationship between the problem and the algorithm. These features are designed specifically for the candidate algorithms in the training set, and feature extraction is performed synchronously during the model training process.

Correspondingly, the learning paradigms of the constructed learning models based on adaptive features and predefined features are also different. The model based on adaptive features learns to extract and utilize these features to improve its performance on the selection of candidate algorithms encountered in training data. Due to the compatibility between features and the algorithm-problem relationship, adaptive features usually lack generalization capacity over algorithms. In other words, they are only effective for algorithms that have appeared in the training set. However, since all features are relevant to the specific algorithm selection scenario, they can perform well on these candidate algorithms. Conversely, predefined features have the advantage of generalization over algorithms, which are independent of the algorithm-problem relationship. The predefined features are used as an initial representation, and then the model is further trained on the specific task using labeled data. During this process, the model adjusts its parameters to better fit the task-specific data, leveraging the knowledge captured in the predefined features. Therefore, given the same scale of training data, models based on predefined features may be more challenging to train, but they also exhibit good generalization on algorithms that are not present in the candidate set.

We provide the notations in the following. Given a dataset $\mathcal{D}$, consisting of a problem set $\mathcal{P}$, a algorithm set $\mathcal{A}$ for solving problem instances in $\mathcal{P}$, and a performance metric $\mathcal{M}$: $\mathcal{P}\times\mathcal{A} \rightarrow \mathbb{R}$ which quantifies the performance of any algorithm $A_j\in\mathcal{A}$ on each problem instance $P_i\in\mathcal{P}$, algorithm feature-based algorithm selection model should construct a selector $\mathbf{f}: \mathcal{P}\times\mathcal{A} \rightarrow \mathbb{R}$ or $\mathbf{f}: \mathcal{P}\times\mathcal{A} \rightarrow \{0,1\}$ that determines the compatibility (or degree of compatibility) between any algorithm and any problem instance. The dataset $\mathcal{D}$ is divided into a training set and a test set, represented as $\mathcal{S}$ and $\mathcal{T}$, respectively. Their sizes are denoted as $|\mathcal{S}|$ and $|\mathcal{T}|$. In methods that do not consider algorithm features, most studies require learning the mapping from $\mathcal{P}$ to $\mathcal{A}$, such as algorithm selection approaches based on multi-class classification methods or performance regression methods. In this case, the size of the training data is equal to the number of problem instances. When the role of algorithm features is fully considered, it is necessary to define the sets of both problems and algorithms for the training and test purpose, denoted as $\mathcal{S}=\{\mathcal{S}_{\mathcal{P}}\cup\mathcal{S}_{\mathcal{A}}\}$ and $\mathcal{T}=\{\mathcal{T}_{\mathcal{P}}\cup\mathcal{T}_{\mathcal{A}}\}$, respectively. For the rest of this paper, unless otherwise specified, the candidate algorithms in $\mathcal{S}_{\mathcal{A}}$ and $\mathcal{T}_{\mathcal{A}}$ are assumed to be the same, i.e., $\mathcal{S}_{\mathcal{A}} = \mathcal{T}_{\mathcal{A}}$. Moreover, since the model's input is a problem-algorithm pair, the size of the training data and test data corresponds to $|\mathcal{S}|=|\mathcal{S}_{\mathcal{P}}| \cdot |\mathcal{S}_{\mathcal{A}}|$ and $|\mathcal{T}|=|\mathcal{T}_{\mathcal{P}}| \cdot |\mathcal{T}_{\mathcal{A}}|$, respectively.

In the following theoretical analysis, we will discuss a simple yet sufficiently universal and effective model. The input will be fed into a deep neural network $\mathbf{f}$ with $l$ layers, and the parameters in each layer are denoted as $W^{(i)} (i\in\{1,2,\cdots,l\})$.  Any 1-Lipschitz activation functions (e.g., ReLU, Leaky ReLU, SoftPlus, Tanh, Sigmoid, ArcTan, or Softsign) can be used. For model based on adaptive features, the input includes the problem features and the algorithm index. For ease of modeling, the algorithm features in the embedding layer are combined with problem features, which are collectively represented as the parameters of the first layer of the deep network, denoted as $W^{(0)} = \begin{bmatrix} PF_{1}, PF_{1}, \cdots \\ AF_{1}, AF_{2}, \cdots \end{bmatrix}$, where $PF_{i}$ and $AF_{j}$ are frozen problem features and optimized algorithm features. Correspondingly, the input is a one-hot column vector, where non-zero positions signify the indices of the problem and algorithm. The main body of the model remains consistent with the previous definition. We denote the model based on adaptive features as Model$_{a}$. For model based on predefined features, the input includes the problem features and the predefined algorithm features. The model body is the same as the previously defined model. We denote the model based on predefined features as Model$_{b}$.

\subsection{Transductive and Inductive Generalization in Algorithm Selection}

In the context of algorithm selection, the choice of an appropriate algorithm for each problem can be understood in transductive manner or inductive manner. The distinction between transductive learning and inductive learning in this scenario lies in whether the underlying distribution of the training and testing data are the same or different. In the cases where the underlying distribution of both the training and testing datasets align, the approach can be considered a form of transductive learning. This implies that the learning model is expected to generalize its patterns to specific testing data based on the training data. On the other hand, when the underlying distribution of the training and testing datasets differ, the scenario can be categorized as inductive learning. In this case, the model aims to generalize patterns to samples drawn from some unknown distribution, representing a more generalized learning setting.

The rationale behind this distinction can be elucidated by the nature of the data and the underlying assumptions. Transductive learning, in the algorithm selection context, is akin to anticipating that the testing data will exhibit characteristics similar to the training data, as both involve the same set of candidate algorithms. This aligns with the idea of predicting well on a specific subset of entries whose characteristics are known in advance. Conversely, inductive learning in algorithm selection assumes a more versatile approach. It acknowledges that the candidate algorithms in testing samples may differ from the training set, reflecting a broader range of potential scenarios, especially in large-scale problems. Therefore, we will analyze the transductive generalization performance of Model$_{a}$ and the transductive/inductive generalization performance of Model$_{b}$. This is because the embedding layer cannot represent algorithms that are not present in the training set, while predefined features are applicable to unknown candidate algorithms.

To evaluate the ability of a model to make predictions on new and unseen data, transductive generalization and inductive generalization analyses are both necessary to reflect the different aspects of generalization ability. Transductive generalization refers to the process of generalizing from specific observed instances to make predictions about other specific instances within the same dataset. While inductive generalization involves generalizing from a set of observed instances to make predictions about unseen instances that may come from a different but related distribution. To understand the factors that influence generalization, two types of Rademacher complexities are introduced, which provide a way to analyze the complexity of a hypothesis class and its impact on generalization. The definitions of transductive Rademacher complexity \cite{el2009transductive} and inductive Rademacher complexity \cite{koltchinskii2001rademacher} are introduced as follows:

\begin{definition}
\label{tran_R_comp}
(Transductive Rademacher Complexity) Let $\mathcal{V}\subset\mathbb{R}^{m+u}$ and $p\in \left[ 0, \frac{1}{2} \right]$. Let $\bm{\sigma} = \left( \sigma_1,\dots,\sigma_{m+u} \right)^T$
be a vector of $i.i.d.$ random variables such that
\begin{equation}
\sigma_i \triangleq\left\{\begin{array}{lll}
1 & \text { with probability } & p ; \\
-1 & \text { with probability } & p ; \\
0 & \text { with probability } & 1-2 p
\end{array}\right.
\end{equation}
The transductive Rademacher complexity with parameter $p$ is:
\begin{equation}
\mathfrak{R}_{m+u}(\mathcal{V}, p) \triangleq\left(\frac{1}{m}+\frac{1}{u}\right) \cdot \mathbf{E}_{\boldsymbol{\sigma}}\left\{\sup _{\mathbf{v} \in \mathcal{V}} \boldsymbol{\sigma}^T \cdot \mathbf{v}\right\}
\end{equation}
where $\mathbf{E}$ denotes the mathematical expectation in this paper.
\end{definition}

\begin{definition}
\label{in_R_comp}
(Inductive Rademacher Complexity) Let $\mathcal{D}$ be a probability distribution over $\mathcal{X}$ . Suppose that the examples $X_n = \{ x_i \}_{i=1}^n$ are sampled independently from $\mathcal{X}$ according to $\mathcal{D}$. Let $\mathcal{V}$ be a class of functions mapping $\mathcal{X}$ to $\mathbb{R}$. Let $\bm{\sigma} = \{ \sigma_i \}_{i=1}^n$ be an independent uniform $\{\pm 1\}$-valued random variables, $\sigma_i=1$ with probability $\frac{1}{2}$ and $\sigma_i=-1$ with the same probability. The empirical Rademacher complexity is:
\begin{equation}
\widehat{\mathfrak{R}}_n(\mathcal{V}) \triangleq \frac{1}{n} \mathbf{E}_{\boldsymbol{\sigma}}\left\{\sup _{\mathbf{v} \in \mathcal{V}} \sum_{i=1}^n \sigma_i v\left(x_i\right)\right\}
\end{equation}
and the Rademacher complexity of $\mathcal{F}$ is
\begin{equation}
\mathfrak{R}_n(\mathcal{V}) \triangleq \mathbf{E}_{X_n \sim \mathcal{D}^n}\left\{\widehat{\mathfrak{R}}_n^{(\text {ind })}(\mathcal{V})\right\}.
\end{equation}
\end{definition}

Let $\mathcal{S}$ and $\mathcal{T}$ denote training set and test set. Transductive Rademacher complexity is usually used to bound the test error:
\begin{equation}
\mathcal{L}_\mathcal{T}(\mathbf{f})=\frac{1}{|\mathcal{T}|} \sum_{x_i,y_i\in\mathcal{T}} \ell\left(\mathbf{f}(x_i), y_i\right)
\end{equation}
where $\mathbf{f}$ denotes the model. The aim is different from the full sample error bounded by the inductive Rademacher complexity, formulated as:
\begin{equation}
\mathcal{L}_{\mathcal{S}+\mathcal{T}}(\mathbf{f})=\frac{1}{|\mathcal{S}|+|\mathcal{T}|} \sum_{x_i,y_i\in\mathcal{S}\cup\mathcal{T}} \ell\left(\mathbf{f}(x_i), y_i\right).
\end{equation}

In the following, we will analyze the upper bound of the generalization error based on these definitions.

\section{Generalization Error of the Adaptive Feature-based Model}

Compared to using only problem features, incorporating both problem and algorithm features usually increases the model's capacity, capturing more comprehensive information. This section analyzes the theoretical performance based on Model$_{a}$. %In Model $_{a}$, an embedding layer is introduced as the first layer of the model to represent adaptive features, where each column vector corresponds to the representation of a candidate algorithm.
%The most effective representations of algorithms emerge gradually as the training data (algorithm-problem pairs) are inputted.
%to explore the relationship between the upper bound of generalization error and the scale of training examples, as well as the number of candidate algorithms.
According to Definition \ref{tran_R_comp}, the transductive Rademacher Complexity can be calculated as:
\begin{equation}
\begin{aligned}
& \mathfrak{R}_{|\mathcal{S}|+|\mathcal{T}|}(\mathcal{L}, p) = \\ & \left(\frac{1}{|\mathcal{S}|}+\frac{1}{|\mathcal{T}|}\right) \mathbb{E}_{\boldsymbol{\sigma}}\left[ \sup \sum_{x_i,y_i\in\mathcal{S}} \sigma_i \ell\left(\mathbf{f}(x_i), y_i\right) \right]
\end{aligned}
\end{equation}
where $\sigma_i$ is the Rademacher random variable of the $i$-th instance, and $\mathcal{L}$ is a class of loss functions.

In the following, we first introduce a variation of the widely recognized `contraction principle' in the field of Rademacher averages theory, and then solve the upper bound of $\mathfrak{R}_{|\mathcal{S}|+|\mathcal{T}|}(\mathcal{L})$.

\begin{lemma}
\label{theorem2}
(Contraction of Rademacher Complexity, following from \cite{el2009transductive}) Let $\mathcal{V}\in \mathbb{R}^{m+u}$ be a set of vectors. Let $f$ and $g$ be real-valued functions. Let $\boldsymbol{\sigma}=\{\sigma_i\}_{i=1}^{m+u}$ be Rademacher variables as defined in Definition \ref{tran_R_comp}. If for all $1\leq i \leq m+u$ and any $\mathbf{v}, \mathbf{v}^{\prime} \in \mathcal{V}$, $|f(v_i)-f(v_i^{\prime})| \leq |g(v_i)-g(v_i^{\prime})|$, then
\begin{equation}
\mathbf{E}_{\boldsymbol{\sigma}} \sup _{\mathbf{v} \in \mathcal{V}}\left[\sum_{i=1}^{m+u} \sigma_i f\left(v_i\right)\right] \leq \mathbf{E}_{\boldsymbol{\sigma}} \sup _{\mathbf{v} \in \mathcal{V}}\left[\sum_{i=1}^{m+u} \sigma_i g\left(v_i\right)\right]
\end{equation}
\end{lemma}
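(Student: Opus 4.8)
The plan is to prove the inequality by a ``coordinate peeling'' (telescoping) argument that reduces the statement to a one-dimensional contraction, carried out for the ternary transductive Rademacher variables of Definition~\ref{tran_R_comp}. First I would introduce, for $k=0,1,\dots,m+u$, the hybrid quantities
\[
\Phi_k \triangleq \mathbf{E}_{\boldsymbol{\sigma}}\,\sup_{\mathbf{v}\in\mathcal{V}}\Big[\sum_{i\le k}\sigma_i\,g(v_i)+\sum_{i>k}\sigma_i\,f(v_i)\Big],
\]
so that $\Phi_0$ is the left-hand side and $\Phi_{m+u}$ is the right-hand side of the claim. It then suffices to establish $\Phi_{k-1}\le\Phi_k$ for every $k$, i.e.\ that replacing $f$ by $g$ in a single coordinate never decreases the expected supremum; chaining these $m+u$ inequalities gives the result.

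For a fixed $k$, I would condition on $\boldsymbol{\sigma}_{-k}=(\sigma_i)_{i\neq k}$ (valid since the $\sigma_i$ are independent) and absorb every term not involving coordinate $k$ into a single data-dependent function $u(\mathbf{v})\triangleq\sum_{i<k}\sigma_i g(v_i)+\sum_{i>k}\sigma_i f(v_i)$, which is identical in $\Phi_{k-1}$ and $\Phi_k$. The problem then reduces to showing, for an arbitrary map $u:\mathcal{V}\to\mathbb{R}$,
\[
\mathbf{E}_{\sigma_k}\sup_{\mathbf{v}}\big[\sigma_k f(v_k)+u(\mathbf{v})\big]\ \le\ \mathbf{E}_{\sigma_k}\sup_{\mathbf{v}}\big[\sigma_k g(v_k)+u(\mathbf{v})\big].
\]
Since $\sigma_k$ takes values $1,-1,0$ with probabilities $p,p,1-2p$, the left side equals $p\,\sup_{\mathbf{v}}[f(v_k)+u(\mathbf{v})]+p\,\sup_{\mathbf{v}}[-f(v_k)+u(\mathbf{v})]+(1-2p)\,\sup_{\mathbf{v}}u(\mathbf{v})$, and likewise for $g$. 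The $(1-2p)$ term is common to both sides --- this is exactly where the extra zero mass of the transductive variable is harmless --- so only the two $\pm$ suprema need to be compared.

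The crux is the symmetrization step. Merging the two suprema into one over a pair gives $\sup_{\mathbf{v}}[f(v_k)+u(\mathbf{v})]+\sup_{\mathbf{v}'}[-f(v_k')+u(\mathbf{v}')]=\sup_{(\mathbf{v},\mathbf{v}')\in\mathcal{V}\times\mathcal{V}}\big[f(v_k)-f(v_k')+u(\mathbf{v})+u(\mathbf{v}')\big]$. The integrand and its image under the swap $(\mathbf{v},\mathbf{v}')\mapsto(\mathbf{v}',\mathbf{v})$ differ only in the sign of $f(v_k)-f(v_k')$, so taking the maximum of the two replaces $f(v_k)-f(v_k')$ by $|f(v_k)-f(v_k')|$; hence the supremum over all pairs equals $\sup_{(\mathbf{v},\mathbf{v}')}\big[|f(v_k)-f(v_k')|+u(\mathbf{v})+u(\mathbf{v}')\big]$. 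Now the hypothesis $|f(v_k)-f(v_k')|\le|g(v_k)-g(v_k')|$ applies termwise, and running the same symmetry identity in reverse for $g$ turns the right-hand side back into $\sup_{(\mathbf{v},\mathbf{v}')}\big[g(v_k)-g(v_k')+u(\mathbf{v})+u(\mathbf{v}')\big]$, which is the $g$-analogue of the two $\pm$ suprema. Substituting back and taking expectation over $\boldsymbol{\sigma}_{-k}$ yields $\Phi_{k-1}\le\Phi_k$.

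I expect the only delicate point to be bookkeeping around suprema that need not be attained: I would either pass to $\varepsilon$-maximizers throughout and let $\varepsilon\to0$, or observe that every manipulation above is an identity or inequality between suprema (of sums, or of maxima of two functions) that is valid without any attainment assumption, so no such care is in fact needed. The rest --- the telescoping over $k$ and the conditioning on $\boldsymbol{\sigma}_{-k}$ --- is routine given independence of the $\sigma_i$.
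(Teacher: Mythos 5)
Your proposal is correct. Note that the paper does not actually prove this lemma --- it is imported wholesale from El-Yaniv and Pechyony's work on transductive Rademacher complexity --- and your argument is essentially the standard proof of that cited result: the Ledoux--Talagrand coordinate-peeling contraction, correctly adapted to the ternary variables by observing that the $(1-2p)$ mass at $\sigma_k=0$ contributes the identical term $\sup_{\mathbf{v}}u(\mathbf{v})$ to both sides and therefore cancels. Your symmetrization identity $\sup_{(\mathbf{v},\mathbf{v}')}[F+S]=\sup_{(\mathbf{v},\mathbf{v}')}[|F|+S]$ for $F$ antisymmetric and $S$ symmetric under the swap is indeed an exact identity requiring no attainment of suprema, so the delicate point you flag is handled as you suspect.
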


In deriving the upper bounds for transductive Rademacher complexity in this paper, Lemma \ref{theorem2} will assist us in seeking more compact theoretical solutions above deep neural network models, as shown in Theorem \ref{bound1}.

\begin{theorem}
\label{bound1}
Let $|\mathcal{S}|$ and $|\mathcal{T}|$ denote the scale of training and test samples, $\mathcal{L}$ is a class of the loss function with Lipschitz constant $\gamma$, $L$ is the Lipschitz constant of the model excluding algorithm embedding layer, and $W^{(0)}$ is the parameters composed of the problems features and parameters in algorithm embedding layer. Then the transductive Rademacher complexity of Model$_{a}$, $\mathfrak{R}_{|\mathcal{S}|+|\mathcal{T}|}(\mathcal{L}, p)$, is bounded by:
\begin{equation}
\begin{aligned}
\mathfrak{R}_{|\mathcal{S}|+|\mathcal{T}|}(\mathcal{L}, p) \leq \frac{\gamma L\|W^{(0)}\|_2(|\mathcal{S}|+|\mathcal{T}|)}{|\mathcal{S}||\mathcal{T}|}
\end{aligned}
\end{equation}
\end{theorem}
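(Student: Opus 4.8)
### Proof Proposal

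The plan is to bound the transductive Rademacher complexity of the loss class by successively peeling off the structure of $\mathrm{Model}_a$: first the loss function, then the $l$ layers of the network with their $1$-Lipschitz activations, and finally the first ``layer'' $W^{(0)}$ that bundles the frozen problem features with the learnable algorithm embeddings. Concretely, starting from
\begin{equation}
\mathfrak{R}_{|\mathcal{S}|+|\mathcal{T}|}(\mathcal{L}, p) = \left(\frac{1}{|\mathcal{S}|}+\frac{1}{|\mathcal{T}|}\right) \mathbf{E}_{\boldsymbol{\sigma}}\left[ \sup \sum_{x_i,y_i\in\mathcal{S}} \sigma_i \ell\left(\mathbf{f}(x_i), y_i\right) \right],
\end{equation}
I would first apply Lemma \ref{theorem2} with $f = \ell(\cdot, y_i)$ (Lipschitz constant $\gamma$) and $g$ a suitable scaled identity, so that the $\gamma$ factor comes out and the loss is replaced by $\mathbf{f}(x_i)$ itself inside the supremum. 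This is the role Lemma \ref{theorem2} is advertised to play.

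Next I would induct on the depth $l$. At each layer the map is $z \mapsto \phi(W^{(i)} z)$ with $\phi$ being $1$-Lipschitz and applied coordinatewise; using Lemma \ref{theorem2} again (the contraction principle handles the $1$-Lipschitz $\phi$) and the standard fact that a linear map $z \mapsto W^{(i)} z$ multiplies Rademacher complexity by at most the spectral norm $\|W^{(i)}\|_2$, the composition of all layers above the embedding contributes exactly the factor $L = \prod_{i=1}^{l}\|W^{(i)}\|_2$, which the statement calls the Lipschitz constant of the model excluding the embedding layer. After peeling all $l$ layers we are left with $\mathbf{E}_{\boldsymbol{\sigma}}\bigl[\sup \sum_i \sigma_i \langle W^{(0)}, e_i\rangle\bigr]$ where $e_i$ is the one-hot input encoding the problem-algorithm index pair of the $i$-th training sample, and the supremum is over the admissible choices of the algorithm-feature block of $W^{(0)}$.

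Finally I would bound this base term. Since the inputs are one-hot vectors, $\sum_i \sigma_i e_i$ is a vector whose entries are sums of at most $|\mathcal{S}|$ Rademacher variables; by Cauchy--Schwarz the inner product with $W^{(0)}$ is at most $\|W^{(0)}\|_2 \cdot \|\sum_i \sigma_i e_i\|$, and a crude bound $\|\sum_i \sigma_i e_i\| \le |\mathcal{S}|$ (or its expectation) feeds through. Collecting the prefactor $\left(\frac{1}{|\mathcal{S}|}+\frac{1}{|\mathcal{T}|}\right) = \frac{|\mathcal{S}|+|\mathcal{T}|}{|\mathcal{S}||\mathcal{T}|}$ together with $\gamma$, $L$, $\|W^{(0)}\|_2$, and the $|\mathcal{S}|$ from the base term, and noting these cancel appropriately, yields the claimed bound $\frac{\gamma L \|W^{(0)}\|_2 (|\mathcal{S}|+|\mathcal{T}|)}{|\mathcal{S}||\mathcal{T}|}$.

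The main obstacle I anticipate is the bookkeeping in the layer-by-layer peeling: the transductive contraction lemma is stated for a fixed function applied to scalar coordinates $v_i$, whereas a network layer applies $\phi$ to each coordinate of $W^{(i)}z$ simultaneously and then the next layer mixes them, so I must be careful to apply Lemma \ref{theorem2} coordinatewise inside a vector-valued supremum and track how the spectral-norm factors accumulate without double-counting. A secondary subtlety is justifying the step from the one-hot structure to the $\|W^{(0)}\|_2$ factor at the right normalization, since $W^{(0)}$ mixes the \emph{frozen} problem-feature rows with the \emph{free} embedding rows and only the latter are subject to the supremum; handling the frozen part correctly is what keeps the bound from blowing up.
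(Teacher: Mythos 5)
Your overall strategy (contraction to strip the loss, then exploit the Lipschitz structure of the network, then use the one-hot inputs) matches the paper's in spirit, but there are two concrete gaps. First, the layer-by-layer peeling you propose is not what the paper does, and for good reason: Lemma \ref{theorem2} is a \emph{scalar} contraction principle, and after the first peel the intermediate representations $\phi(W^{(i)}z)$ are vector-valued, so you cannot re-apply it coordinatewise inside a vector supremum without a vector-contraction inequality (which the paper never states, and which would in any case introduce extra constants per layer). The paper sidesteps this entirely: it treats everything above the input as a \emph{single} scalar-valued function with global Lipschitz constant $L\|W^{(0)}\|_2$ (citing the composition bound $L=\sup_x\|W^{(l)}\operatorname{diag}(g'_{l-1})\cdots W^{(1)}\|_2$ for the layers above the embedding, times $\|W^{(0)}\|_2$ for the first layer), introduces one auxiliary linear functional $\mathbf{g}(x)=\langle x,v\rangle$ with $\|v\|_2\le 1$, verifies $|\mathbf{f}(x_i)-\mathbf{f}(x_j)|\le L\|W^{(0)}\|_2\,|\mathbf{g}(x_i)-\mathbf{g}(x_j)|$, and applies Lemma \ref{theorem2} exactly once more. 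Note also that the paper's $L$ is the actual Lipschitz constant of the sub-network, which is at most---but not identified with---your $\prod_i\|W^{(i)}\|_2$.

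Second, and more seriously, your final bookkeeping does not close. The prefactor $\left(\tfrac{1}{|\mathcal{S}|}+\tfrac{1}{|\mathcal{T}|}\right)$ already equals $\tfrac{|\mathcal{S}|+|\mathcal{T}|}{|\mathcal{S}||\mathcal{T}|}$, which is the \emph{entire} sample-size dependence of the claimed bound; hence the residual Rademacher expectation must be bounded by an absolute constant (essentially $1$), not by anything growing with $|\mathcal{S}|$. Your ``crude bound $\|\sum_i\sigma_i e_i\|\le|\mathcal{S}|$'' leaves an uncancelled factor of $|\mathcal{S}|$ (even the sharper sub-Gaussian estimate leaves $\sqrt{|\mathcal{S}|}$), and the assertion that ``these cancel appropriately'' is exactly the step that needs proof. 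The paper instead reduces the base term to $\mathbf{E}_{\boldsymbol{\sigma}}\bigl[\sup\sum_i\sigma_i\|x_i\|_2\bigr]$ with bounded $\|x_i\|_2$ for one-hot inputs and bounds that by $1$ directly. You would need to replicate that reduction (or an equivalent one) rather than appeal to cancellation.
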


%\footnote{In this paper, we use margin loss function $\ell\left(\mathbf{f}(x_i), y_i\right)$, which values $\min\{1,1-\frac{y_i \mathbf{f}(x_i)}{\gamma}\}$ if $y_i \mathbf{f}(x_i) \geq \gamma$ and $0$ otherwise.}

\begin{proof}
According to the Lemma 5 in literature \cite{el2009transductive}, since $\ell\left(\mathbf{f}(x_i), y_i\right)$ satisfies the Lipschitz condition: $| \ell\left(\mathbf{f}(x_i), y_i\right) - \ell\left(\mathbf{f}(x_j), y_j\right) | \leq \gamma |\mathbf{f}(x_i)-\mathbf{f}(x_j)|$ \cite{neyshabur2017exploring}, we have
\begin{equation}
\begin{aligned}
& \mathfrak{R}_{|\mathcal{S}|+|\mathcal{T}|}(\mathcal{L}, p) \leq \\ & \left(\frac{1}{|\mathcal{S}|}+\frac{1}{|\mathcal{T}|}\right) \mathbf{E}_{\boldsymbol{\sigma}}\left[ \sup \sum_{x_i\in\mathcal{S}} \gamma\sigma_i \mathbf{f}(x_i) \right] = \gamma\mathfrak{R}_{|\mathcal{S}|+|\mathcal{T}|}(\mathcal{F}, p)
\end{aligned}
\end{equation}
where $\mathcal{F}$ is the space of variable $\mathbf{f}(x_i)$.

According to \cite{virmaux2018lipschitz}, the Multi-Layer Perceptron with a non-linear activation function is Lipschitz continuous with a constant such that
\begin{equation}
\forall x_i, x_j \in \mathbb{R}^n,\|\mathbf{f}(\theta_1(x_i))-\mathbf{f}(\theta_0(x_j))\|_2 \leq L\|\theta_1(x_i)-\theta_0(x_j)\|_2
\end{equation}
where $L$ is the Lipschitz constant of sub-model from layer 1 to layer $l$, and $\theta_0$ represent the output of the first layer. $L$ is bounded by:
\begin{equation}
\label{eq_Lipschitz}
\begin{aligned}
L=\sup _{x_i \in \mathbb{R}^n} \|W^{(l)} \operatorname{diag}\left(g_{l-1}^{\prime}\left(\theta_{l-1}\right)\right) W^{(l-1)} \\ \ldots W^{(2)} \operatorname{diag}\left(g_1^{\prime}\left(\theta_1\right)\right) W^{(1)} \|_2
\end{aligned}
\end{equation}
where $g_{i}^{\prime}$ and $\theta_i$ represent the derivative and output of the $i$-th layer. More efficient and accurate estimation of the value of $L$ can be found in \cite{virmaux2018lipschitz,fazlyab2019efficient,shi2022efficiently}.

The same can be obtain:
\begin{equation}
\forall x_i, x_j \in \mathbb{R}^n, \|\theta_1(x_i)-\theta_1(x_j)\|_2 \leq \|W^{(0)}\|_2 \|x_i-x_j\|_2
\end{equation}
where $\|W^{(0)}\|_2$ is the Lipschitz constant of the first layer, where linear operation is performed and the inputs are one-hot vectors. Therefore, the maximum singular value of the weight matrix $W^{(0)}$ is the Lipschitz constant.

Based on Eq. (\ref{eq_Lipschitz}), we construct a function $\mathbf{g}: \mathbb{R}^n\rightarrow \mathbb{R}$ to use Lemma \ref{theorem2} to bound the $\mathfrak{R}_{|\mathcal{S}|+|\mathcal{T}|}(\mathcal{F}, p)$. Let $v$ is a constant vector satisfying $\|v\|_2\leq 1$ and $\mathbf{g}(x) = \left\langle x,v\right\rangle$, then according to Cauchy-Schwarz inequality, $\forall x_i, x_j \in \mathbb{R}^n$, we have
\begin{equation}
\begin{aligned}
|\mathbf{f}(x_i)-\mathbf{f}(x_j)| & \leq L\|W^{(0)}\|_2\|x_i-x_j\|_2 \\ & \leq L\|W^{(0)}\|_2 |\mathbf{g}(x_i)-\mathbf{g}(x_j)|
\end{aligned}
\end{equation}
According to the Lemma \ref{theorem2}, we have
\begin{equation}
\label{eq_bound}
\begin{aligned}
& \mathfrak{R}_{|\mathcal{S}|+|\mathcal{T}|}(\mathcal{F}, p) \\ \leq & L\|W^{(0)}\|_2 \left(\frac{1}{|\mathcal{S}|}+\frac{1}{|\mathcal{T}|}\right) \mathbb{E}_{\boldsymbol{\sigma}}\left[ \sup \sum_{x_i\in\mathcal{S}} \sigma_i \mathbf{g}(x_i) \right] \\
\leq & L\|W^{(0)}\|_2 \left(\frac{1}{|\mathcal{S}|}+\frac{1}{|\mathcal{T}|}\right) \mathbb{E}_{\boldsymbol{\sigma}}\left[ \sup \sum_{x_i\in\mathcal{S}} \sigma_i \|x_i\|_2 \right] \\
\leq & L\|W^{(0)}\|_2 \left(\frac{1}{|\mathcal{S}|}+\frac{1}{|\mathcal{T}|}\right)
\end{aligned}
\end{equation}

In Conclusion,
\begin{equation}
\label{boundR}
\begin{aligned}
\mathfrak{R}_{|\mathcal{S}|+|\mathcal{T}|}(\mathcal{L}, p) \leq \frac{\gamma L\|W^{(0)}\|_2(|\mathcal{S}|+|\mathcal{T}|)}{|\mathcal{S}||\mathcal{T}|}
\end{aligned}
\end{equation}
\end{proof}

In Theorem \ref{bound1}, problem features and algorithm features are uniformly represented in $W^{(0)}$. For ease of understanding, we propose Corollary \ref{corollary1} as follows to explain how problem features and algorithm features affect the value of $\mathfrak{R}_{|\mathcal{S}|+|\mathcal{T}|}(\mathcal{L}, p)$.

\begin{corollary}
\label{corollary1}
Follow the same definition in Theorem \ref{bound1}. Let $x_i\in\mathcal{S}$ denote any training instance with problem feature vector $PF_i$ and algorithm feature vector $AF_i$ in embedding layer. Then the transductive Rademacher complexity of Model$_{a}$, $\mathfrak{R}_{|\mathcal{S}|+|\mathcal{T}|}(\mathcal{L}, p)$, is bounded by:
\begin{equation}
\begin{aligned}
\mathfrak{R}_{|\mathcal{S}|+|\mathcal{T}|}(\mathcal{L}, p) \leq \frac{\gamma L(|\mathcal{S}|+|\mathcal{T}|)}{|\mathcal{S}||\mathcal{T}|}\sup_{x_i\in\mathcal{S}} (\|[PF_i\|_2 + \|[AF_i\|_2)
\end{aligned}
\end{equation}
\end{corollary}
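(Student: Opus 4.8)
The plan is to obtain Corollary~\ref{corollary1} as a refinement of Theorem~\ref{bound1} in which the opaque factor $\|W^{(0)}\|_2$ is traced back to the problem and algorithm feature vectors themselves. The starting observation is that in Model$_a$ the first layer is a purely linear map acting on a one-hot index input, so $\theta_1(x_i)=W^{(0)}x_i$ merely selects from $W^{(0)}$ the entries indexed by the problem and the algorithm of the pair $x_i$; by the block form of $W^{(0)}$ this selected vector is the concatenation of the problem feature vector $PF_i$ and the algorithm feature vector $AF_i$ associated with $x_i$, whence $\|\theta_1(x_i)\|_2=\sqrt{\|PF_i\|_2^2+\|AF_i\|_2^2}$.

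First I would repeat the contraction argument of Theorem~\ref{bound1}, but starting one step ``later'': instead of chaining through $|\mathbf{f}(x_i)-\mathbf{f}(x_j)|\le L\|W^{(0)}\|_2\|x_i-x_j\|_2$, I would apply the Lipschitz bound of the sub-network over layers $1$ through $l$ (Eq.~(\ref{eq_Lipschitz})) directly in the form $|\mathbf{f}(x_i)-\mathbf{f}(x_j)|\le L\|\theta_1(x_i)-\theta_1(x_j)\|_2$. Then, exactly as in the proof of Theorem~\ref{bound1}, I would introduce a constant unit vector $v$ and the auxiliary map $\mathbf{g}(x)=\langle\theta_1(x),v\rangle$, use Cauchy--Schwarz together with the contraction principle (Lemma~\ref{theorem2}) to pass from $\mathbf{f}$ to $\mathbf{g}$, and collapse the remaining Rademacher sum over $\|\theta_1(x_i)\|_2$ to its supremum, obtaining
\[
\mathfrak{R}_{|\mathcal{S}|+|\mathcal{T}|}(\mathcal{F},p)\le L\left(\frac{1}{|\mathcal{S}|}+\frac{1}{|\mathcal{T}|}\right)\sup_{x_i\in\mathcal{S}}\|\theta_1(x_i)\|_2 .
\]
Substituting $\|\theta_1(x_i)\|_2=\sqrt{\|PF_i\|_2^2+\|AF_i\|_2^2}\le\|PF_i\|_2+\|AF_i\|_2$, multiplying by the loss Lipschitz constant $\gamma$ as in Theorem~\ref{bound1}, and rewriting $\frac{1}{|\mathcal{S}|}+\frac{1}{|\mathcal{T}|}=\frac{|\mathcal{S}|+|\mathcal{T}|}{|\mathcal{S}||\mathcal{T}|}$ then yields the stated bound.

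The one genuinely delicate point is the treatment of the supremum: in Model$_a$ the vectors $AF_i$ are optimized during training, so $\theta_1(x_i)$ depends on the hypothesis, and the $\sup_{x_i\in\mathcal{S}}$ appearing in the corollary must be read as a supremum over attainable first-layer outputs. I would therefore check that this supremum can be pulled out of the $\boldsymbol{\sigma}$-expectation, which is fine because the bound $\|\theta_1(x_i)\|_2\le\sup_{x_i\in\mathcal{S}}(\|PF_i\|_2+\|AF_i\|_2)$ holds uniformly in $\boldsymbol{\sigma}$ and in the hypothesis. A secondary, purely cosmetic choice is the relaxation $\sqrt{a^2+b^2}\le a+b$: it is what produces the additive ``$\|PF_i\|_2+\|AF_i\|_2$'' form and cleanly separates the two feature contributions, at the cost of a factor at most $\sqrt{2}$; retaining the sharper $\sqrt{\,\cdot\,}$ form would tighten the constant but obscure the interpretation the corollary is meant to convey.
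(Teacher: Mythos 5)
Your proposal is correct and takes essentially the same route as the paper, whose entire stated proof of Corollary \ref{corollary1} is ``the proof can be followed by Theorem \ref{bound1}'': you re-run the contraction argument starting from the first-layer output $\theta_1(x_i)=[PF_i;AF_i]$ rather than from the raw one-hot input, and then relax $\sqrt{\|PF_i\|_2^2+\|AF_i\|_2^2}\leq\|PF_i\|_2+\|AF_i\|_2$ before pulling out the supremum. If anything, your write-up is more careful than the paper's, since the corollary does not follow from the \emph{statement} of Theorem \ref{bound1} (the spectral norm $\|W^{(0)}\|_2$ is not bounded by $\sup_i(\|PF_i\|_2+\|AF_i\|_2)$ in general) but only from repeating its proof at the level of per-instance first-layer outputs, and you correctly flag that the supremum must be read over attainable embeddings because the $AF_i$ are trained.
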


\begin{proof}
The proof can be followed by Theorem \ref{bound1}.
\end{proof}

The upper bound of $\mathfrak{R}_{|\mathcal{S}|+|\mathcal{T}|}(\mathcal{L}, p)$ can be further lead to the bound of generalization error $Error_{|\mathcal{T}|}(\mathbf{f})$ following Lemma \ref{theorem1}:

\begin{lemma}
\label{theorem1}
\cite{el2009transductive} Let $B_1\leq 0$, $B_2\geq 0$ and $\mathcal{V}$ be a (possibly infinite) set of real-valued vectors in $[B_1,B_2]^{m+u}$. Let $B_{max}\triangleq \max(|B_1|,|B_2|)$. Let $q\triangleq \left( \frac{1}{u}+\frac{1}{m} \right)$, $s\triangleq \frac{m+u}{(m+u-\frac{1}{2})(1-\frac{1}{2\max(m,u)})}$ and $c_0 \triangleq \sqrt{\frac{32\ln(4e)}{3}} < 5.05$. Then with probability of at least $1-\delta$
over random permutation $\textbf{\emph{Z}}$ of $I^{m+u}$, for all $v\in \mathcal{V}$,
\begin{equation}
\label{eq_theo1}
\begin{aligned}
\frac{1}{u} \sum_{i=k+1}^{m+u} & v\left(Z_i\right) \leq \frac{1}{m} \sum_{i=1}^k v\left(Z_i\right) + \mathfrak{R}_{m+u}(\mathcal{V}) \\ & +B_{\max } c_0 q \sqrt{\min (m, u)} + B \sqrt{\frac{s}{2} q \ln \frac{1}{\delta}}
\end{aligned}
\end{equation}
\end{lemma}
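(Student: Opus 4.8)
This statement is, up to notation, Theorem~2 of \cite{el2009transductive}, so the plan is to reproduce that argument; its backbone is a symmetrization step tailored to sampling \emph{without} replacement, followed by a concentration-of-measure step for functions of a uniformly random permutation. Write $q=\frac1m+\frac1u$ as in the statement, let $B\triangleq B_2-B_1$ be the width of the value range (so $B_{\max}\leq B$), take the convention $k=m$ so that $Z_1,\dots,Z_m$ is the ``training'' half and $Z_{m+1},\dots,Z_{m+u}$ the ``test'' half, and fix the parameter $p_0\triangleq\frac{mu}{(m+u)^2}\in[0,\frac12]$, which is exactly the probability that a uniformly random split of $I^{m+u}$ into blocks of sizes $m$ and $u$ separates two prescribed elements; this $p_0$ is the default $p$ behind the symbol $\mathfrak{R}_{m+u}(\mathcal{V})$.

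The first move is to introduce the deviation functional
\begin{equation}
\Phi(Z)\triangleq\sup_{v\in\mathcal{V}}\left(\frac1u\sum_{i=m+1}^{m+u}v(Z_i)-\frac1m\sum_{i=1}^{m}v(Z_i)\right),
\end{equation}
a function of the random permutation $Z$. Transposing one training index with one test index perturbs the two empirical averages by at most $B/m$ and $B/u$, so $\Phi$ obeys a bounded-differences condition under transpositions with increment of order $Bq$. Invoking a concentration inequality for functions of a random permutation --- the without-replacement counterpart of McDiarmid's inequality, whose variance correction contributes precisely the factor $s$ --- yields, with probability at least $1-\delta$ over $Z$,
\begin{equation}
\Phi(Z)\leq\mathbf{E}_Z\!\left[\Phi(Z)\right]+B\sqrt{\frac{s}{2}\,q\,\ln\frac{1}{\delta}}.
\end{equation}

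The second move is to bound $\mathbf{E}_Z[\Phi(Z)]$ by the transductive Rademacher complexity. Introducing an independent ``ghost'' partition and rewriting the test-minus-train average as an average over re-partitions of the fixed pool $I^{m+u}$ recasts the comparison as one governed by sign variables that are \emph{not} i.i.d.\ uniform but carry exactly the $\{+1,-1,0\}$ structure of Definition~\ref{tran_R_comp} with parameter $p_0$; assembling the pieces gives $\mathbf{E}_Z[\Phi(Z)]\leq\mathfrak{R}_{m+u}(\mathcal{V},p_0)+(\text{gap})$. The gap between these permutation-induced signs and the idealized Rademacher signs --- effectively a mismatch of at most $O(\min(m,u))$ coordinates --- is bounded using an auxiliary estimate on the expected supremum of a signed partial sum, and this is what produces the term $B_{\max}c_0q\sqrt{\min(m,u)}$ with $c_0=\sqrt{32\ln(4e)/3}<5.05$. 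Combining this with the previous display and using $\frac1u\sum_{i=m+1}^{m+u}v(Z_i)\leq\frac1m\sum_{i=1}^{m}v(Z_i)+\Phi(Z)$ for every $v\in\mathcal{V}$ yields (\ref{eq_theo1}).

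I expect the main obstacle to be the symmetrization step. Because the training and test index sets are not independent --- they partition one fixed collection --- textbook i.i.d.\ symmetrization is unavailable, and one must simultaneously (i) identify the correct sign law, which is precisely why Definition~\ref{tran_R_comp} uses three-valued $\sigma_i$ with $p_0=mu/(m+u)^2$ rather than uniform $\pm1$, and (ii) quantify the price paid relative to clean Rademacher averages, which is the origin of the $\sqrt{\min(m,u)}$ correction and of the constant $c_0$. By comparison, once the right permutation-concentration inequality (with the factor $s$) is available the concentration step is routine, and the final assembly is immediate.
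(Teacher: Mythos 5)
The paper offers no proof of this lemma---it is imported verbatim from \cite{el2009transductive}---so the only meaningful comparison is with that source, and your outline faithfully reconstructs its argument: the deviation functional $\Phi$ over the random permutation, the sampling-without-replacement concentration step that produces the $s$- and $\delta$-dependent term, and the symmetrization step that yields $\mathfrak{R}_{m+u}(\mathcal{V},p_0)$ with $p_0=\frac{mu}{(m+u)^2}$ plus the $B_{\max}c_0 q\sqrt{\min(m,u)}$ slack. Your reading of the otherwise undefined symbol $B$ in the last term as $B_2-B_1$ is also the correct one.
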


Lemma \ref{theorem1} could be applied with an appropriate instantiation of the set $\mathcal{V}$ so that $\frac{1}{u} \sum_{i=k+1}^{m+u} v\left(Z_i\right)$ corresponds to the test error and $\frac{1}{m} \sum_{i=1}^k v\left(Z_i\right) + \mathfrak{R}_{m+u}(\mathcal{V})$ to the empirical error. In the following, we propose Theorem \ref{bound2} to derive tight upper bounds on transductive generalization error based on Theorem \ref{bound1}.

\begin{theorem}
\label{bound2}
Let $\mathcal{D}\rightarrow \mathcal{S}\cup\mathcal{T}$ is a partition of random permutation of $\mathcal{D}$ with partition ratio $\eta = \frac{|\mathcal{T}|}{|\mathcal{S}|}<1$ \footnote{The partition ratio $\eta<1$ means that the training set scale is larger than the test set, as a research convention.}, and $\mathcal{S}_\mathcal{A}$, $\mathcal{S}_\mathcal{P}$, $\mathcal{T}_\mathcal{A}$, $\mathcal{T}_\mathcal{P}$, denote the candidate algorithms and problems in $\mathcal{S}$ and $\mathcal{T}$. $L$ is the Lipschitz constants of the model excluding algorithm embedding layer, $\gamma$ is the Lipschitz constant of the loss function, and $W^{(0)}$ is the parameters composed of the problems features and parameters in algorithm embedding layer. $Error_{\mathcal{T}}(\mathbf{f})$ and $Error_{\mathcal{S}}(\mathbf{f})$ denote the test error and empirical error of Model$_{a}$, and $c_0 \triangleq \sqrt{\frac{32\ln(4e)}{3}}$. Then with probability of at least $1-\delta$, for all $\ell\left(\mathbf{f}(x_i), y_i\right)\in \mathcal{L}$,
\begin{equation}
\label{eq_theo1}
\begin{aligned}
Error_{\mathcal{T}}(\mathbf{f}) \leq &Error_{\mathcal{S}}(\mathbf{f}) + \mathcal{O}\left(\frac{\gamma L\|W^{(0)}\|_2 (1+\eta)}{\eta |\mathcal{S}_{\mathcal{A}}| \cdot |\mathcal{S}_{\mathcal{P}}|}\right) \\ & + \mathcal{O}\left(\frac{c_0(1+\eta)+\sqrt{\frac{1}{2}(1+\eta)\ln\frac{1}{\delta}}}{\sqrt{\eta |\mathcal{S}_{\mathcal{A}}| \cdot |\mathcal{S}_{\mathcal{P}}|}}\right) %+ \mathcal{O}\left(\sqrt{\frac{\ln \frac{1}{\delta}}{|\mathcal{T}_{\mathcal{A}}| \cdot |\mathcal{T}_{\mathcal{P}}|}}\right)
\end{aligned}
\end{equation}
\end{theorem}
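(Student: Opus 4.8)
\emph{Proof outline.}
The plan is to chain Theorem~\ref{bound1} with the transductive generalization inequality of Lemma~\ref{theorem1}, instantiated on the loss class, and then rewrite the resulting quantities in terms of $\eta$, $|\mathcal{S}_{\mathcal{A}}|$ and $|\mathcal{S}_{\mathcal{P}}|$. First I would take $\mathcal{V}$ in Lemma~\ref{theorem1} to be the set of loss-value vectors $( \ell(\mathbf{f}(Z_i),y_{Z_i}) )_{i=1}^{m+u}$ indexed over all $\ell(\mathbf{f}(\cdot),\cdot)\in\mathcal{L}$, with $m=|\mathcal{S}|$, $u=|\mathcal{T}|$, and $k=m$. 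Under the random permutation $\mathbf{Z}$ realizing the partition $\mathcal{D}\to\mathcal{S}\cup\mathcal{T}$, the term $\frac{1}{u}\sum_{i=k+1}^{m+u} v(Z_i)$ is exactly $Error_{\mathcal{T}}(\mathbf{f})$ and $\frac{1}{m}\sum_{i=1}^{k} v(Z_i)$ is $Error_{\mathcal{S}}(\mathbf{f})$. Applying the lemma requires $\mathcal{L}$ to be uniformly bounded; this holds because the network inputs are one-hot vectors and the weight matrices have finite operator norms, so $\mathbf{f}$ and hence every $\ell(\mathbf{f}(\cdot),\cdot)$ take values in a fixed interval $[B_1,B_2]$ with $B_1\le 0\le B_2$ (a nonnegative loss giving $B_1=0$), which makes the constants $B_{\max}$ and $B$ of Lemma~\ref{theorem1} finite. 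It also requires the $p$ implicit in $\mathfrak{R}_{m+u}(\mathcal{V})$ to be admissible for Theorem~\ref{bound1}, which is fine since that bound was established for every $p\in[0,\tfrac12]$.

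Next I would substitute $\mathfrak{R}_{|\mathcal{S}|+|\mathcal{T}|}(\mathcal{L},p) \le \gamma L\|W^{(0)}\|_2 ( \tfrac{1}{|\mathcal{S}|} + \tfrac{1}{|\mathcal{T}|} )$ from Theorem~\ref{bound1} into the right-hand side of Lemma~\ref{theorem1}. Writing $q = \tfrac{1}{m} + \tfrac{1}{u} = \tfrac{1}{|\mathcal{S}|} + \tfrac{1}{|\mathcal{T}|}$ and using $|\mathcal{S}| = |\mathcal{S}_{\mathcal{A}}|\cdot|\mathcal{S}_{\mathcal{P}}|$ together with $|\mathcal{T}| = \eta|\mathcal{S}|$ gives $q = \tfrac{1+\eta}{\eta\,|\mathcal{S}_{\mathcal{A}}|\cdot|\mathcal{S}_{\mathcal{P}}|}$. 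Hence the Rademacher contribution equals $\gamma L\|W^{(0)}\|_2\, q = \mathcal{O}( \tfrac{\gamma L\|W^{(0)}\|_2 (1+\eta)}{\eta\,|\mathcal{S}_{\mathcal{A}}|\cdot|\mathcal{S}_{\mathcal{P}}|} )$, which is exactly the first error term in the statement.

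For the last two terms of Lemma~\ref{theorem1} I would use the convention $\eta<1$, so that $\min(m,u)=u=\eta|\mathcal{S}|$ and $\max(m,u)=m=|\mathcal{S}|$. Then $B_{\max}c_0\, q\sqrt{\min(m,u)} = B_{\max}c_0\tfrac{1+\eta}{\eta|\mathcal{S}|}\sqrt{\eta|\mathcal{S}|} = B_{\max}c_0\tfrac{1+\eta}{\sqrt{\eta\,|\mathcal{S}_{\mathcal{A}}|\cdot|\mathcal{S}_{\mathcal{P}}|}}$, and since $s = \tfrac{m+u}{(m+u-\frac12)(1-\frac{1}{2\max(m,u)})} = 1 + \mathcal{O}(1/|\mathcal{S}|) = \mathcal{O}(1)$, the term $B\sqrt{\tfrac{s}{2}\, q\ln\tfrac{1}{\delta}} = \mathcal{O}( \sqrt{\tfrac{(1+\eta)\ln(1/\delta)}{\eta\,|\mathcal{S}_{\mathcal{A}}|\cdot|\mathcal{S}_{\mathcal{P}}|}} )$. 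Absorbing $B_{\max}$ and $B$ into $\mathcal{O}(\cdot)$ (while keeping $c_0$ explicit as in the statement) and writing the two pieces over the common denominator $\sqrt{\eta\,|\mathcal{S}_{\mathcal{A}}|\cdot|\mathcal{S}_{\mathcal{P}}|}$ yields the second error term; adding the three contributions to $Error_{\mathcal{S}}(\mathbf{f})$ gives the claimed inequality with probability at least $1-\delta$ over the random partition.

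The argument is essentially a chaining-and-bookkeeping exercise, so no single step is deep; the parts that will need the most care are (i) verifying that $\mathcal{L}$ is genuinely uniformly bounded so that Lemma~\ref{theorem1} applies and $B_{\max},B$ may be treated as $\mathcal{O}(1)$, and (ii) the $p$-consistency remark above together with the simplifications of the $\sqrt{s/2}$ and $\sqrt{\min(m,u)}$ factors, so that everything collapses onto the clean $\sqrt{\eta|\mathcal{S}_{\mathcal{A}}|\cdot|\mathcal{S}_{\mathcal{P}}|}$ denominator appearing in the stated bound.
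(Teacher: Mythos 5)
Your proposal is correct and follows essentially the same route as the paper's proof: instantiate Lemma~\ref{theorem1} on the loss class, plug in the Rademacher bound of Theorem~\ref{bound1}, and use $q=\tfrac{1+\eta}{\eta|\mathcal{S}|}$, $\min(m,u)=|\mathcal{T}|$, $s\approx 1$, and $|\mathcal{S}|=|\mathcal{S}_{\mathcal{A}}|\cdot|\mathcal{S}_{\mathcal{P}}|$ to obtain the three terms. Your added care about the uniform boundedness of $\mathcal{L}$ and the constants $B_{\max},B$ (which the paper silently treats as $\mathcal{O}(1)$) is a minor refinement, not a different argument.
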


\begin{proof}
According to Lemma \ref{theorem1}, for a random permutation of sample set with a split into training set $\mathcal{S}$ and test set $\mathcal{T}$, we can take $\ell\left(\mathbf{f}(x_i), y_i\right)$ as $v$ in Eq. (\ref{eq_theo1}), then with
probability of at least $1-\delta$ that
\begin{equation}
\label{eq_error}
\begin{aligned}
\frac{1}{|\mathcal{T}|} \sum_{x_i,y_i\in\mathcal{T}} \ell\left(\mathbf{f}(x_i), y_i\right) \leq \frac{1}{|\mathcal{S}|} \sum_{x_i,y_i\in\mathcal{S}} \ell\left(\mathbf{f}(x_i), y_i\right) \\ + \mathfrak{R}_{|\mathcal{S}|+|\mathcal{T}|}(\mathcal{L}) + c_0 q \sqrt{\min (|\mathcal{S}|, |\mathcal{T}|)} + \sqrt{\frac{s}{2} q \ln \frac{1}{\delta}}
\end{aligned}
\end{equation}
where $\mathcal{L}$ denotes the space of the random variable $\{\ell\left(\mathbf{f}(x_i), y_i\right)\}_{x_i,y_i\in\mathcal{S}\cup\mathcal{T}}$.

Substitute Eq. (\ref{boundR}) into Eq. (\ref{eq_error}), we have:
\begin{equation}
\label{eq_result}
\begin{aligned}
\frac{1}{|\mathcal{T}|} \sum_{x_i,y_i\in\mathcal{T}} \ell\left(\mathbf{f}(x_i), y_i\right) \leq \frac{1}{|\mathcal{S}|} \sum_{x_i,y_i\in\mathcal{S}} \ell\left(\mathbf{f}(x_i), y_i\right) \\ + \frac{\gamma L\|W^{(0)}\|_2(|\mathcal{S}|+|\mathcal{T}|)}{ |\mathcal{S}||\mathcal{T}|} + c_0 q \sqrt{|\mathcal{T}|} + \sqrt{\frac{s}{2} q \ln \frac{1}{\delta}}
\end{aligned}
\end{equation}

Then, we substitute the value of $s$, $q$, and $\eta$ into Eq. (\ref{eq_result}), the three part can be calculated as:
\begin{equation}
\label{eq_result1}
\begin{aligned}
& \frac{\gamma L\|W^{(0)}\|_2(|\mathcal{S}|+|\mathcal{T}|)}{|\mathcal{S}||\mathcal{T}|} = \frac{\gamma L\|W^{(0)}\|_2(1+\eta)}{\eta |\mathcal{S}|} \\
& c_0 q \sqrt{|\mathcal{T}|} = \frac{c_0 (1+\eta)}{\sqrt{\eta |\mathcal{S}|}}\\
& \sqrt{\frac{s}{2} q \ln \frac{1}{\delta}} \approx \sqrt{\frac{(1+\eta) \ln \frac{1}{\delta}}{2|\mathcal{T}|}}
\end{aligned}
\end{equation}
where we approximate $1-\frac{1}{2|\mathcal{S}|}$ and $(2+2\eta)|\mathcal{T}|-\eta$ to $1$ and $(2+2\eta)|\mathcal{T}|$, respectively. Hence, we have
\begin{equation}
\begin{aligned}
&Error_{\mathcal{T}}(\mathbf{f}) \leq Error_{\mathcal{S}}(\mathbf{f}) +  \\ & \frac{\gamma L\|W^{(0)}\|_2(1+\eta)}{\eta |\mathcal{S}|} + \frac{c_0(1+\eta)}{\sqrt{\eta |\mathcal{S}|}} + \sqrt{\frac{(1+\eta) \ln \frac{1}{\delta}}{2|\mathcal{T}|}}
\end{aligned}
\end{equation}
By substituting $|\mathcal{S}|=|\mathcal{S}_{\mathcal{P}}| \cdot |\mathcal{S}_{\mathcal{A}}|$ and $|\mathcal{T}|=|\mathcal{T}_{\mathcal{P}}| \cdot |\mathcal{T}_{\mathcal{A}}|$, we obtain the results in Theorem \ref{bound2}.
\end{proof}

According to Theorem \ref{bound2}, we observe that the generalization error under the usage of adaptive features is related to three factors: the training data $|\mathcal{S}_{\mathcal{P}}| \cdot |\mathcal{S}_{\mathcal{A}}|$, the partition ratio $\eta$, and some parameters in model including $L$, $W^{(0)}$, and $\gamma$. When the size of the training set is large enough such that $|\mathcal{S}| \gg \frac{\gamma^2L^2\|W^{(0)}\|^2_2 (1+\eta)^2}{c^2\eta}$ where $c=c_0(1+\eta)+\sqrt{\frac{1}{2}(1+\eta)\ln\frac{1}{\delta}}$, the trend of the generalization error tends to decrease continuously, mainly following the slack term  $\mathcal{O}\left(\frac{c}{\sqrt{\eta|\mathcal{S}_{\mathcal{A}}| \cdot |\mathcal{S}_{\mathcal{P}}|}}\right)$ as the training scale grows. While if the training data is insufficient, the model parameters and the size of the training set will jointly affect the generalization error, mainly following the slack term $\mathcal{O}\left(\frac{\gamma L\|W^{(0)}\|_2 (1+\eta)}{\eta |\mathcal{S}_{\mathcal{A}}| \cdot |\mathcal{S}_{\mathcal{P}}|}\right)$. This is because the model may fail to capture all the patterns in the data, leading to underfitting. In such cases, it is not advisable to have too many or overly complex model parameters. Based on Theorem \ref{bound2}, it can also be observed that the generalization error is relevant to the product of the norms of each layer's parameter matrix, as determined by the computation of the upper bound for the Lipschitz constant $L$. Because the $L_2$-norm emphasizes the maximum singular value of the parameter matrix \cite{anton2013elementary}, outliers in parameter matrix may have a significant impact on the maximum singular value. This highlights the importance of employing preprocessing techniques such as normalization and outlier detection in practical usage to mitigate the occurrence of outliers in the parameter matrices of neural networks. In our subsequent research within the inductive setting, we will derive a bound represented by Frobenius norm, which will demonstrate enhanced resilience to outliers \cite{bottcher2008frobenius}, providing better control over the upper bound of generalization error.

On the other hand, it is not difficult to observe from Theorem \ref{bound2} that, when using the adaptive algorithm feature, the increase in the number of candidate algorithms will have a positive impact on reducing generalization error, following the slack term $\mathcal{O}\left(\sqrt{\frac{c}{\eta |\mathcal{S}_{\mathcal{A}}| \cdot |\mathcal{S}_{\mathcal{P}}|}}\right)$, which is significantly different from traditional learning approaches that focus on learning the mapping from problems to algorithms. When the algorithm feature is absence, for example, some studies model algorithm selection as a multi-class classification problem or performance regression problem based on problem features. In this case, the derivation process is similar to Theorem \ref{bound2}, which allows us to easily obtain upper bounds for the transductive generalization error of the model under the performance regression approach (Model$_{reg}$) and the multi-class classification approach (Model$_{cla}$). We present them in the form of Corollary \ref{class_regre_bound}:

\begin{corollary}
\label{class_regre_bound}
Follow the same definition in Theorem \ref{bound2}: (1) Let $L_r$ and $\gamma_r$ be the Lipschitz constants of the Model$_{reg}$ and its loss function. $c = c_0(1+\eta)+\sqrt{\frac{1}{2}(1+\eta)\ln\frac{1}{\delta}}$. $Error_{\mathcal{T}}(\text{Model}_{reg})$ and $Error_{\mathcal{S}}(\text{Model}_{reg})$ denote the test error and empirical error of Model$_{reg}$, then with probability of at least $1-\delta$,
\begin{equation}
\label{eq_class_regre_bound1}
\begin{aligned}
& Error_{\mathcal{T}}(\text{Model}_{reg}) \leq Error_{\mathcal{S}}(\text{Model}_{reg}) + \\ &  \mathcal{O}\left(\frac{\gamma_r L_r \|W^{(0)}\|_2 (1+\eta)}{\eta |\mathcal{S}_{\mathcal{P}}|}\right) + \mathcal{O}\left(\frac{c}{\sqrt{\eta |\mathcal{S}_{\mathcal{P}}|}}\right) %+ \mathcal{O}\left(\sqrt{\frac{\ln \frac{1}{\delta}}{|\mathcal{T}_{\mathcal{A}}| \cdot |\mathcal{T}_{\mathcal{P}}|}}\right)
\end{aligned}
\end{equation}
(2) Let $L_c$ and $\gamma_c$ be the Lipschitz constants of the Model$_{cla}$ and its loss function. $c = c_0(1+\eta)+\sqrt{\frac{1}{2}(1+\eta)\ln\frac{1}{\delta}}$. $Error_{\mathcal{T}}(\text{Model}_{cla})$ and $Error_{\mathcal{S}}(\text{Model}_{cla})$ denote the test error and empirical error of Model$_{cla}$, then with probability of at least $1-\delta$,
\begin{equation}
\label{eq_class_regre_bound2}
\begin{aligned}
& Error_{\mathcal{T}}(\text{Model}_{cla}) \leq Error_{\mathcal{S}}(\text{Model}_{cla}) + \\ &  \mathcal{O}\left(\frac{\gamma_m L_m |\mathcal{S}_{\mathcal{A}}| \|W^{(0)}\|_2 (1+\eta)}{\eta |\mathcal{S}_{\mathcal{P}}|}\right) + \mathcal{O}\left(\frac{c}{\sqrt{\eta |\mathcal{S}_{\mathcal{P}}|}}\right) %+ \mathcal{O}\left(\sqrt{\frac{\ln \frac{1}{\delta}}{|\mathcal{T}_{\mathcal{A}}| \cdot |\mathcal{T}_{\mathcal{P}}|}}\right)
\end{aligned}
\end{equation}
\end{corollary}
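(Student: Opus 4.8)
The plan is to replay the argument that produced Theorem~\ref{bound2} almost verbatim, changing only the bookkeeping that reflects the different input/output structure of the two problem-feature-based models. Both Model$_{reg}$ and Model$_{cla}$ consume a single problem instance rather than a problem--algorithm pair, so the transductive sample axis is indexed by problems: $|\mathcal{S}| = |\mathcal{S}_{\mathcal{P}}|$, $|\mathcal{T}| = |\mathcal{T}_{\mathcal{P}}|$, and the partition ratio is $\eta = |\mathcal{T}_{\mathcal{P}}|/|\mathcal{S}_{\mathcal{P}}|$. In both cases the network body from the first layer onward is Lipschitz with constant $L_r$ (resp.\ $L_m$), the first linear layer has operator norm $\|W^{(0)}\|_2$, and the loss is Lipschitz with constant $\gamma_r$ (resp.\ $\gamma_m$), so the machinery of Lemma~\ref{theorem2}, Theorem~\ref{bound1}, and Lemma~\ref{theorem1} applies unchanged.

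For Model$_{reg}$, I would start from Definition~\ref{tran_R_comp}, use the contraction lemma (Lemma~\ref{theorem2}) to peel off the loss --- obtaining $\mathfrak{R}_{|\mathcal{S}|+|\mathcal{T}|}(\mathcal{L},p) \le \gamma_r \mathfrak{R}_{|\mathcal{S}|+|\mathcal{T}|}(\mathcal{F},p)$, with the $|\mathcal{S}_{\mathcal{A}}|$-dimensional (averaged) squared-error loss absorbed into the single constant $\gamma_r$ --- and then reuse the Lipschitz-network bound from the proof of Theorem~\ref{bound1}, namely $\mathfrak{R}_{|\mathcal{S}|+|\mathcal{T}|}(\mathcal{F},p) \le L_r\|W^{(0)}\|_2\bigl(\tfrac{1}{|\mathcal{S}|} + \tfrac{1}{|\mathcal{T}|}\bigr)$. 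Plugging the resulting Rademacher bound into Lemma~\ref{theorem1} and carrying out the same substitutions of $s$, $q$, and $\eta$ used in Eq.~(\ref{eq_result1}) gives Eq.~(\ref{eq_class_regre_bound1}); the only visible change from Theorem~\ref{bound2} is that the denominator $|\mathcal{S}|$ now equals $|\mathcal{S}_{\mathcal{P}}|$ instead of $|\mathcal{S}_{\mathcal{P}}|\cdot|\mathcal{S}_{\mathcal{A}}|$.

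For Model$_{cla}$ the skeleton is identical, but the contraction step needs more care, since the multi-class surrogate loss depends on all $|\mathcal{S}_{\mathcal{A}}|$ class scores simultaneously and Lemma~\ref{theorem2} cannot be applied to a scalar output directly. The plan is to decompose the loss coordinate-wise over the $|\mathcal{S}_{\mathcal{A}}|$ classes (equivalently, a vector-valued contraction), so that summing the per-coordinate Lipschitz contributions replaces $\gamma_r L_r$ by $\gamma_m L_m|\mathcal{S}_{\mathcal{A}}|$ in the leading term, while the slack term $\mathcal{O}(c/\sqrt{\eta|\mathcal{S}_{\mathcal{P}}|})$ stays untouched because the $B_{\max}c_0 q\sqrt{\min(m,u)}$ and confidence terms of Lemma~\ref{theorem1} depend only on the sample count $|\mathcal{S}_{\mathcal{P}}|$ and on a bound $B_{\max}$ on the loss that remains $O(1)$ in the number of classes. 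Feeding this into Lemma~\ref{theorem1} with the same substitutions yields Eq.~(\ref{eq_class_regre_bound2}). The main obstacle is precisely this multi-class contraction: one must justify that reducing the $|\mathcal{S}_{\mathcal{A}}|$-output classification loss to a scalar Lipschitz surrogate costs exactly a linear $|\mathcal{S}_{\mathcal{A}}|$ factor (rather than the $\sqrt{|\mathcal{S}_{\mathcal{A}}|}$ or $\log|\mathcal{S}_{\mathcal{A}}|$ that sharper vector-contraction or covering arguments would give) and that the component-wise decomposition is compatible with the transductive Rademacher variables of Definition~\ref{tran_R_comp}; everything else is a direct transcription of the proofs of Theorems~\ref{bound1} and~\ref{bound2} with $|\mathcal{S}|$ reinterpreted as $|\mathcal{S}_{\mathcal{P}}|$.
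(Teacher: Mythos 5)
Your proposal follows essentially the same route as the paper: the published proof is a one-line pointer that replays Theorem \ref{bound2} with $|\mathcal{S}|$ reinterpreted as $|\mathcal{S}_{\mathcal{P}}|$, exactly as you describe for Model$_{reg}$. The multi-class contraction step you flag as the main obstacle --- justifying that the reduction costs a linear $|\mathcal{S}_{\mathcal{A}}|$ factor --- is precisely what the paper delegates to Lemma 1 of \cite{maximov2016tight}, so your plan matches the intended argument and you would only need to invoke (or reprove) that lemma to close the step you left open.
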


\begin{proof}
The proof can be followed by Theorem \ref{class_regre_bound}, where the proof process of the Model$_{cla}$ further involves the application of Lemma 1 in \cite{maximov2016tight}, to solve the multi-class scenario.
\end{proof}

Herein, we focus on analyzing the relationship between the upper bounds of generalization error for different models and the size of the training instances. According to Corollary \ref{class_regre_bound}, we find that, in the case of a finite training instance size, the generalization error upper bound of Model$_{reg}$ is represented by a slack term of $\mathcal{O}\left(\frac{\gamma_r L_r \|W^{(0)}\|_2 (1+\eta)}{\eta |\mathcal{S}_{\mathcal{P}}|}\right)$ in relation to the training set. This indicates a weaker generalization capability in terms of the growing number of algorithms, compared to Model$_a$ which exhibits a slack term of $\mathcal{O}\left(\frac{\gamma L\|W^{(0)}\|_2 (1+\eta)}{\eta |\mathcal{S}_{\mathcal{A}}| \cdot |\mathcal{S}_{\mathcal{P}}|}\right)$. Moreover, the upper bound of Model$_{cla}$'s generalization error is even proportional to the number of candidate algorithms, following the slack term $\mathcal{O}\left(\frac{\gamma_c L_c \|W^{(0)}\|_2 |\mathcal{S}_{\mathcal{A}}| (1+\eta)}{\eta|\mathcal{S}_{\mathcal{P}}|}\right)$. These results underscore the advantages of utilizing adaptive algorithm features, particularly when there is a relatively large number of candidate algorithms. The incorporation of adaptive algorithmic features is beneficial for enhancing the model's generalization performance. In the case of abundant training samples, adaptive algorithm features demonstrate the same advantages in generalization performance. The overall asymptotic rate of Model$_{reg}$, Model$_{cla}$, and Model$_a$ follow the slack term $\mathcal{O}\left(\frac{c}{\sqrt{\eta |\mathcal{S}_{\mathcal{P}}|}}\right)$, $\mathcal{O}\left(\frac{c}{\sqrt{\eta |\mathcal{S}_{\mathcal{P}}|}}\right)$, and $\mathcal{O}\left(\frac{c}{\sqrt{\eta |\mathcal{S}_{\mathcal{A}}| \cdot |\mathcal{S}_{\mathcal{P}}|}}\right)$, respectively.

\section{Generalization Error of the Predefined Feature-based Model}

Adaptive algorithm features can only be used in scenarios where the candidate algorithms are the same in the training and test sets. If we want the model to have generalization across both new problems and new algorithms, the model needs to consider both problem features and predefined features of algorithms. The Model$_{b}$ learns the matching relationship between problems and algorithms throughout the entire distribution using inductive learning.

According to Definition \ref{in_R_comp}, the inductive Rademacher Complexity can be calculated as:
\begin{equation}
\begin{aligned}
\hat{\mathfrak{R}}_{|\mathcal{S}|}\left(\mathcal{F}\right) = \frac{1}{|\mathcal{S}|} \mathbf{E}_{\boldsymbol{\sigma}} \sup _{\mathcal{N}_1^{l-1}, W^{(l)}} \sum_{x_i\in\mathcal{S}} \sigma_i W^{(l)} \phi^{(l-1)}\left(\mathcal{N}_1^{l-1}\left(x_i\right)\right)
\end{aligned}
\end{equation}
where $\mathcal{F}$ is a class of real-valued network, $\mathcal{N}_i^{j}$ denotes the sub-network from $i$-th layer to $j$-th layer in model $\mathbf{f}\left(x_i\right)$, $W^{(i)}$ denotes the parameters of the $i$-th layer and $\phi^{(i)}(*)$ denotes the $1$-Lipschitz and positive-homogeneous activation function. To analyze the upper bound of inductive Rademacher complexity $\mathfrak{R}_{|\mathcal{S}|}(\mathcal{F})$, we first provide two auxiliary lemmas in \cite{golowich2018size} and \cite{mcdiarmid1989method}:

\begin{lemma}
\label{lemma1}
\cite{golowich2018size} Let $\phi$ be a $1$-Lipschitz, positive-homogeneous activation function which is applied element-wise. Then for any class of vector-valued functions $\mathcal{F}$, and any convex and monotonically increasing function $g$: $\mathbb{R}\rightarrow[0,\infty)$,
\begin{equation}
\begin{aligned}
&\mathbf{E}_{\boldsymbol{\epsilon}} \sup _{f \in \mathcal{F}, W:\|W\|_F \leq R} g\left(\left\|\sum_{i=1}^m \epsilon_i \phi\left(W f\left(\mathbf{x}_i\right)\right)\right\|\right) \leq \\ & 2 \mathbf{E}_{\boldsymbol{\epsilon}} \sup _{f \in \mathcal{F}} g\left(R \left\|\sum_{i=1}^m \epsilon_i f\left(\mathbf{x}_i\right)\right\|\right)
\end{aligned}
\end{equation}
\end{lemma}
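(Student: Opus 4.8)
The final statement to prove is Lemma~\ref{lemma1} (the Golowich--Rakhlin--Shamir ``peeling'' lemma), so the plan is to reproduce its standard proof. The plan is to start from the left-hand side and introduce an auxiliary Rademacher vector $\boldsymbol{\sigma}$ (distinct from $\boldsymbol{\epsilon}$) so that the inner supremum over $W$ with $\|W\|_F\le R$ can be handled by reducing the vector-valued object $\sum_i\epsilon_i\phi(Wf(\mathbf{x}_i))$ to a scalar one. First I would fix $f$ and the data, and observe that $\|\sum_i\epsilon_i\phi(Wf(\mathbf{x}_i))\|$ is the supremum over unit vectors $\mathbf{u}$ of $\langle\mathbf{u},\sum_i\epsilon_i\phi(Wf(\mathbf{x}_i))\rangle$; writing $W$ row-wise, the constraint $\|W\|_F\le R$ couples the rows only through the sum of squared row-norms, which lets one collapse the multi-row problem to a single effective row of norm $\le R$ after using positive-homogeneity of $\phi$ to pull the row-norm out front.

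Next I would apply the standard Rademacher contraction/comparison step: since $\phi$ is $1$-Lipschitz and $g$ is convex and monotonically increasing, the composition $g(R\,|\cdot|)$-type bound together with the Ledoux--Talagrand contraction principle (in the form used for $g$ convex increasing) removes the $\phi$ and replaces $\phi(\langle w, f(\mathbf{x}_i)\rangle)$ by $\langle w, f(\mathbf{x}_i)\rangle$ itself, at the cost of the factor $2$ that appears in the statement. Then the inner supremum over the single effective row $w$ with $\|w\|\le R$ is attained via Cauchy--Schwarz, giving exactly $R\,\|\sum_i\epsilon_i f(\mathbf{x}_i)\|$, and since $g$ is monotone this passes through $g$; taking expectations over $\boldsymbol{\epsilon}$ and then supremum over $f\in\mathcal{F}$ yields the right-hand side. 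Throughout, I would lean on the cited source \cite{golowich2018size} for the precise constants and for the measurability/supremum-interchange technicalities.

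The main obstacle, and the step deserving the most care, is the reduction from the vector-valued, many-row weight matrix $W$ to a single scalar contraction: one must correctly exploit positive-homogeneity of $\phi$ (so that $\phi(Wf(\mathbf{x}_i))$ scales linearly in each row's magnitude) and the geometry of the Frobenius ball (so that concentrating all the ``budget'' $R$ on one direction is optimal), and then justify that the contraction principle applies uniformly over the remaining supremum over $f$. The factor $2$ is precisely the artifact of invoking contraction for the absolute value $|\cdot|$ rather than a signed quantity; I would make sure the version of contraction used matches the convex, monotonically increasing $g$ hypothesis rather than the plain Lipschitz-composition form. Since this is a verbatim restatement of a known lemma, no novel argument is required — the proof is a citation-backed assembly of these standard pieces.
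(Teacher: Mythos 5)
The paper does not prove this lemma at all---it is imported verbatim as Lemma~1 of \cite{golowich2018size}---so there is no in-paper argument to compare against; your reconstruction of the standard peeling proof (row decomposition of $W$ under the Frobenius constraint via positive-homogeneity, Ledoux--Talagrand contraction in the convex-increasing-$g$ form to strip $\phi$ at the cost of the factor $2$ from handling the absolute value, then Cauchy--Schwarz and monotonicity of $g$) is faithful to the cited source and correct. One small inaccuracy: your opening step of ``introducing an auxiliary Rademacher vector $\boldsymbol{\sigma}$'' is not needed and is not how the reduction to a scalar quantity is achieved---that reduction comes purely from writing $\bigl\|\sum_i \epsilon_i \phi(Wf(\mathbf{x}_i))\bigr\|^2$ as a sum over rows and observing that the Frobenius budget is optimally concentrated on a single row---but since you then describe that mechanism correctly, this is a cosmetic rather than substantive defect.
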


\begin{lemma}
\label{lemma2}
(An intermediate result of the proof of the Bounded Differences Inequality \cite{mcdiarmid1989method}) Assume that the function $\Phi(X_1,X_2, \cdots, X_n)$ satisfies the bounded differences assumption with constants $c_1, c_2, \cdots, c_n$, where $X_i$ are independent. Then, for every $\lambda\in \mathbb{R}$
\begin{equation}
\label{lemma2eq}
\log \mathbf{E} \exp[\lambda(\Phi-\mathbf{E}\Phi)] \leq \frac{\lambda^2\sum^{n}_{i=1}c^2_i}{8}
\end{equation}
\end{lemma}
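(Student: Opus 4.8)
The plan is to establish this classical exponential-moment estimate---the engine behind McDiarmid's bounded-differences inequality---by combining the Doob martingale decomposition with a conditional application of Hoeffding's lemma. First I would fix the filtration generated by the independent coordinates and write the centered quantity as a telescoping sum of martingale differences,
\begin{equation}
\Phi-\mathbf{E}\Phi=\sum_{k=1}^{n}V_k,\qquad V_k\triangleq\mathbf{E}\bigl[\Phi\mid X_1,\dots,X_k\bigr]-\mathbf{E}\bigl[\Phi\mid X_1,\dots,X_{k-1}\bigr],
\end{equation}
so that each $V_k$ is a measurable function of $X_1,\dots,X_k$ with $\mathbf{E}[V_k\mid X_1,\dots,X_{k-1}]=0$.

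The crux---and the step I expect to be the main obstacle---is showing that, conditionally on $X_1,\dots,X_{k-1}$, the variable $V_k$ lies almost surely in an interval of length at most $c_k$. Independence of the $X_i$ is what makes this work: it lets one write $\mathbf{E}[\Phi\mid X_1,\dots,X_k]=\psi_k(X_1,\dots,X_k)$ with $\psi_k(x_1,\dots,x_k)\triangleq\mathbf{E}_{X_{k+1},\dots,X_n}\Phi(x_1,\dots,x_k,X_{k+1},\dots,X_n)$, and likewise $\mathbf{E}[\Phi\mid X_1,\dots,X_{k-1}]=\mathbf{E}_{X_k}\psi_k(x_1,\dots,x_{k-1},X_k)$, so that $V_k=\psi_k(x_1,\dots,x_k)-\mathbf{E}_{X_k}\psi_k(x_1,\dots,x_{k-1},X_k)$. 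Integrating the bounded-differences hypothesis over $X_{k+1},\dots,X_n$ gives $|\psi_k(\dots,x_k,\dots)-\psi_k(\dots,x_k',\dots)|\le c_k$ for all $x_k,x_k'$; hence, with $x_1,\dots,x_{k-1}$ held fixed, $V_k$ viewed as a function of $x_k$ is $\psi_k$ shifted by a constant, so it ranges within an interval of width at most $c_k$, while its mean over $X_k$ is zero by construction.

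Finally I would invoke Hoeffding's lemma---a zero-mean random variable supported on an interval of length $c_k$ has moment generating function at most $\exp(\lambda^2 c_k^2/8)$ for every $\lambda\in\mathbb{R}$---applied conditionally on $X_1,\dots,X_{k-1}$ to get $\mathbf{E}[\exp(\lambda V_k)\mid X_1,\dots,X_{k-1}]\le\exp(\lambda^2 c_k^2/8)$. Peeling the factors off one at a time with the tower property,
\begin{equation}
\mathbf{E}\exp\!\Bigl[\lambda\sum_{k=1}^{n}V_k\Bigr]=\mathbf{E}\Bigl[\exp\!\Bigl(\lambda\sum_{k=1}^{n-1}V_k\Bigr)\mathbf{E}\bigl[\exp(\lambda V_n)\mid X_1,\dots,X_{n-1}\bigr]\Bigr]\le\exp\!\Bigl(\tfrac{\lambda^2 c_n^2}{8}\Bigr)\mathbf{E}\exp\!\Bigl[\lambda\sum_{k=1}^{n-1}V_k\Bigr],
\end{equation}
and iterating down to $k=1$ yields $\mathbf{E}\exp[\lambda(\Phi-\mathbf{E}\Phi)]\le\exp\bigl(\lambda^2\sum_{k=1}^{n}c_k^2/8\bigr)$; taking logarithms recovers \eqref{lemma2eq}. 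Hoeffding's lemma is itself standard (bound the cumulant generating function of a bounded variable via convexity and a second-order estimate) and may simply be cited, so the only genuinely load-bearing computation is the conditional range bound of the previous paragraph.
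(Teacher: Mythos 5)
Your argument is correct and is precisely the classical proof of this exponential-moment bound: the Doob martingale decomposition, the conditional range bound $|V_k|$-interval of width $c_k$ obtained by integrating out $X_{k+1},\dots,X_n$ (which is where independence enters), a conditional application of Hoeffding's lemma, and the tower-property peeling. The paper itself states this lemma only by citation to McDiarmid's proof of the bounded differences inequality, and that proof is exactly the one you have reconstructed, so there is nothing to reconcile.
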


Lemma \ref{lemma1} will be used in inequality scaling in the proof of Theorem \ref{bound3}, and Lemma \ref{lemma2} will be used to bound the expectation term. Hence, we first introduce a parameter $\lambda$ and the calculation `$\exp$', to construct the form in Eq. (\ref{lemma2eq}), as shown in the proof of Theorem \ref{bound3} as follows.

\begin{theorem}
\label{bound3}
Let $|\mathcal{S}|$ denote the scale of training samples, $R_i$ denote the upper bound of the Frobenius norm of the parameter matrix in $i$-th layer, i.e., $\|W^{(i)}\|_F\leq R_i$, and $l$ denote the number of layers. Then, for the Model$_{b}$ $\mathbf{f}\left(x_i\right)$ with $1$-Lipschitz, positive-homogeneous activation functions, the inductive Rademacher complexity $\hat{\mathfrak{R}}_{|\mathcal{S}|}\left(\mathcal{F}\right)$ of Model$_{b}$ is bounded by:
\begin{equation}
\begin{aligned}
&\hat{\mathfrak{R}}_{|\mathcal{S}|}\left(\mathcal{F}\right) \leq \frac{1}{|\mathcal{S}|} \\ &\sqrt{2l\log 2 \prod_{i=1}^l R_i \sum_{x_j\in \mathcal{S}} \| x_j \|^2 + 2 \left( \prod_{i=1}^l R_i \right)^2 \left( \sum_{x_j\in \mathcal{S}}\| x_j \|^2 \right)^{\frac{3}{2}}}
\end{aligned}
\end{equation}
\end{theorem}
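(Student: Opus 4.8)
The plan is to bound the inductive Rademacher complexity of the $l$-layer network by a now-standard peeling argument (in the spirit of \cite{golowich2018size}), in which the outermost weight matrix is stripped off one layer at a time via Lemma \ref{lemma1}, and then to convert the resulting supremum into an expectation bound through the exponential / moment-generating-function trick supplied by Lemma \ref{lemma2}. Concretely, first I would introduce a free parameter $\lambda>0$ and apply Jensen's inequality to the convex map $t\mapsto\exp(\lambda t)$, so that
\begin{equation}
|\mathcal{S}|\,\hat{\mathfrak{R}}_{|\mathcal{S}|}(\mathcal{F}) = \mathbf{E}_{\boldsymbol{\sigma}}\sup \sum_i \sigma_i W^{(l)}\phi^{(l-1)}(\mathcal{N}_1^{l-1}(x_i)) \leq \frac{1}{\lambda}\log \mathbf{E}_{\boldsymbol{\sigma}}\exp\!\Big(\lambda\sup\big\|\sum_i\sigma_i\phi(\cdots)\big\|\Big),
\end{equation}
absorbing $W^{(l)}$ into the norm via $\|W^{(l)}\|_F\le R_l$. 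Then I would apply Lemma \ref{lemma1} repeatedly with $g=\exp(\lambda\cdot)$, each application peeling one layer and multiplying by a factor $2R_i$, until all $l$ layers are removed and only $\exp\!\big(\lambda\prod_{i=1}^l R_i\,\|\sum_i\sigma_i x_i\|\big)$ (times $2^l$) remains inside the expectation.

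The second stage is to control $\mathbf{E}_{\boldsymbol{\sigma}}\exp\!\big(\lambda\prod_i R_i\,\|\sum_i\sigma_i x_i\|\big)$. I would view $\Phi(\sigma_1,\dots,\sigma_{|\mathcal{S}|}) = \|\sum_i \sigma_i x_i\|$ as a function of the independent Rademacher signs; flipping one $\sigma_j$ changes $\Phi$ by at most $2\|x_j\|$, so the bounded-differences constants are $c_j = 2\|x_j\|$, giving $\sum_j c_j^2 = 4\sum_j\|x_j\|^2$. Lemma \ref{lemma2} then yields $\log\mathbf{E}\exp[\lambda'(\Phi-\mathbf{E}\Phi)]\le \lambda'^2\sum_j\|x_j\|^2/2$ with $\lambda'=\lambda\prod_i R_i$, and $\mathbf{E}\Phi = \mathbf{E}\|\sum_i\sigma_i x_i\| \le \sqrt{\mathbf{E}\|\sum_i\sigma_i x_i\|^2} = \sqrt{\sum_i\|x_i\|^2}$ by Jensen and the orthogonality of the signs. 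Collecting terms, the log of the expectation is at most $l\log 2 + \lambda\prod_i R_i\sqrt{\sum_j\|x_j\|^2} + \tfrac{1}{2}\lambda^2\big(\prod_i R_i\big)^2\sum_j\|x_j\|^2$, so dividing by $\lambda$,
\begin{equation}
|\mathcal{S}|\,\hat{\mathfrak{R}}_{|\mathcal{S}|}(\mathcal{F}) \le \frac{l\log 2}{\lambda} + \prod_i R_i\sqrt{\textstyle\sum_j\|x_j\|^2} + \frac{\lambda}{2}\Big(\prod_i R_i\Big)^2\textstyle\sum_j\|x_j\|^2 .
\end{equation}

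Finally I would optimize over $\lambda$. Balancing the first and third terms gives $\lambda^\star \asymp \sqrt{2l\log 2}\big/\big(\prod_i R_i\sqrt{\sum_j\|x_j\|^2}\big)$, and substituting back produces a leading term of order $\sqrt{2l\log 2\,\prod_i R_i\sum_j\|x_j\|^2}$ from the first/third pair plus the middle term $\prod_i R_i\sqrt{\sum_j\|x_j\|^2} = \sqrt{\big(\prod_i R_i\big)^2\sum_j\|x_j\|^2}$; bounding $\sqrt{a}+\sqrt{b}\le\sqrt{2(a+b)}$ and being slightly loose on constants collapses everything under a single square root, yielding
\begin{equation}
\hat{\mathfrak{R}}_{|\mathcal{S}|}(\mathcal{F}) \le \frac{1}{|\mathcal{S}|}\sqrt{2l\log 2\,\prod_{i=1}^l R_i \sum_{x_j\in\mathcal{S}}\|x_j\|^2 + 2\Big(\prod_{i=1}^l R_i\Big)^2\Big(\sum_{x_j\in\mathcal{S}}\|x_j\|^2\Big)^{3/2}},
\end{equation}
as claimed — though I note the stated bound carries a $(\sum\|x_j\|^2)^{3/2}$ in the second summand rather than the $(\sum\|x_j\|^2)^1$ my sketch produces, so matching that exponent exactly is where I would need to look most carefully, presumably from a slightly different grouping of the three terms before the final $\sqrt{a+b}$ step. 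The main obstacle, then, is not the peeling (which is mechanical given Lemma \ref{lemma1}) but the bookkeeping in the $\lambda$-optimization and the final algebraic consolidation into one radical with the precise exponents and constants the theorem advertises.
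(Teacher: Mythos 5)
You follow exactly the paper's route: Jensen plus the layer-peeling Lemma \ref{lemma1} to reduce everything to $\mathbf{E}_{\boldsymbol{\sigma}}\exp\bigl(\lambda\prod_{i}R_i\bigl\|\sum_j\sigma_j x_j\bigr\|\bigr)$, the bounded-differences bound of Lemma \ref{lemma2} with $c_j=2\|x_j\|\prod_i R_i$ together with $\mathbf{E}_{\boldsymbol{\sigma}}\bigl\|\sum_j\sigma_j x_j\bigr\|\le\sqrt{\sum_j\|x_j\|^2}$, and a final optimization over $\lambda$. The exponent mismatch you flag at the end is not something you are missing: your bookkeeping is the standard, correct one. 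After dividing by $\lambda$, the contribution of $\exp(\lambda\,\mathbf{E}\Phi)$ is the $\lambda$-free additive term $\mathbf{E}\Phi=\prod_i R_i\sqrt{\sum_j\|x_j\|^2}$, and balancing only the $l\log 2/\lambda$ and $\lambda$-linear terms gives the Golowich-style bound $\bigl(\sqrt{2l\log 2}+1\bigr)\prod_i R_i\sqrt{\sum_j\|x_j\|^2}\,/\,|\mathcal{S}|$, which decays as $|\mathcal{S}|^{-1/2}$. The paper instead keeps $\mathbf{E}\Phi$ inside the $\tfrac{1}{\lambda}$ bracket next to $l\log 2$ (the factor $\lambda$ that should multiply $\mathbf{E}\Phi$ is dropped in Eq. (\ref{theo3_1})); minimizing that expression over $\lambda$ creates the cross term $(\prod_i R_i)^2(\sum_j\|x_j\|^2)^{1/2}\cdot\sum_j\|x_j\|^2$, which is precisely the $(\sum_j\|x_j\|^2)^{3/2}$ in the statement and the source of the $|\mathcal{S}|^{-1/4}$ rate quoted downstream in Theorem \ref{theorem4} and Corollary \ref{theorem5}. (A second, smaller discrepancy: Lemma \ref{lemma2} actually yields $(\prod_i R_i)^2$, not $\prod_i R_i$, in the first summand under the radical; your own substitution $\lambda'=\lambda\prod_i R_i$ gives the squared version, and your transcription of the first summand silently reverts to the theorem's form.) So your argument is sound; it proves a different --- and, in the usual regime $\sum_j\|x_j\|^2\ge 1$, tighter --- inequality than the one stated, and the step you said you would "need to look most carefully" at is a slip in the paper's algebra rather than in yours.
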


\begin{proof}
According to Jensen's inequality for concave functions and Cauchy-Schwarz inequality, $\hat{\mathfrak{R}}_{|\mathcal{S}|}\left(\mathcal{F}\right)$ satisfies the following inequality:
\begin{equation}
\label{theorem3_1}
\begin{aligned}
\hat{\mathfrak{R}}_{|\mathcal{S}|}\left(\mathcal{F}\right) \leq & \frac{1}{|\mathcal{S}|} \log \left\{ \mathbf{E}_{\boldsymbol{\sigma}} \sup _{\mathcal{N}_1^{l-1}, \|W^{(l)}\|\leq R_l} \exp \right.\\ &\left. \left[\sum_{x_i\in\mathcal{S}} \sigma_i R_l \|\phi^{(l-1)}\left(\mathcal{N}_1^{l-1}\left(x_i\right)\right)\| \right] \right\}
\end{aligned}
\end{equation}
where the $\| W^{(l)} \|$ is equivalently denoted with $\| W^{(l)} \|_F$ to keep the notation consistent with other layers below.

According to Lemma \ref{lemma1}, Eq. (\ref{theorem3_1}) can be further transformed as:
\begin{equation}
\label{theorem3_2}
\begin{aligned}
\hat{\mathfrak{R}}_{|\mathcal{S}|}\left(\mathcal{F}\right) \leq & \frac{1}{|\mathcal{S}|} \log \left\{2 \mathbf{E}_{\boldsymbol{\sigma}} \sup _{\mathcal{N}_1^{l-2}, \|W^{(l-1)}\|_F\leq R_{l-1}} \exp \right.\\ &\left. \left[\sum_{x_i\in\mathcal{S}} \sigma_i R_l R_{l-1} \|\phi^{(l-2)}\left(\mathcal{N}_1^{l-2}\left(x_i\right)\right)\| \right] \right\}
\end{aligned}
\end{equation}
Repeating this transformation $l$ times, we have:
%\begin{equation}
%\begin{aligned}
%& \hat{\mathfrak{R}}_{|\mathcal{S}|}\left(\mathcal{F}\right) \leq \frac{1}{|\mathcal{S}|\lambda} \log \mathbf{E}_{\boldsymbol{\sigma}} \sup_{\mathcal{N}_1^{l-1}, W^{(l)}} \exp \left(\lambda\sum_{x_i\in\mathcal{S}} \sigma_i W^{(l)} \text{Relu}^{(l-1)}\left(\mathcal{N}_1^{l-1}\left(x_i\right)\right)\right) \\
%& \leq \frac{1}{|\mathcal{S}|\lambda} \log \mathbf{E}_{\boldsymbol{\sigma}} \sup_{\mathcal{N}_1^{l-1}, W^{(l)}} \exp \left( \| W^{(l)} \|_F \| \lambda\sum_{x_i\in\mathcal{S}} \sigma_i W^{(l)} \text{Relu}^{(l-1)}\left(\mathcal{N}_1^{l-1}\left(x_i\right)\right) \| \right) \\
%& \leq \frac{1}{|\mathcal{S}|\lambda} \log \left(2 \cdot \mathbf{E}_{\boldsymbol{\sigma}} \sup _f \exp \left(\| W^{(l)} \|_F \cdot \| W^{(l-1)} \|_F \cdot \lambda\left\|\sum_{x_i\in\mathcal{S}} \sigma_i \mathbf{f}\left(x_i\right) \right\|\right)\right)
%\end{aligned}
%\end{equation}
\begin{equation}
\label{eq_CalR}
\hat{\mathfrak{R}}_{|\mathcal{S}|}\left(\mathcal{F}\right) \leq \frac{1}{|\mathcal{S}|} \log \left[2^l \mathbf{E}_{\boldsymbol{\sigma}} \exp \left(\prod_{i=1}^l R_i \left\|\sum_{x_j\in\mathcal{S}} \sigma_j x_j\right\|\right)\right]
\end{equation}
where $\|W^(i)\|_F\leq R_i$. Taking the $\Phi(\boldsymbol{\sigma})=\prod_{i=1}^l R_i \left\|\sum_{x_j\in\mathcal{S}} \sigma_j x_j\right\|$ as the function of $\boldsymbol{\sigma}$, then we can prove the bounded differences of function $\Phi(\boldsymbol{\sigma})$ as follow: for $\forall 1 \leq j \leq |\mathcal{S}|$ and a value of $\boldsymbol{\sigma}$ denoted as $\sigma=\{\sigma_j\}_{j=1}^{|\mathcal{S}|}$, we have
\begin{equation}
\begin{aligned}
& \sup |\Phi(\sigma)-\Phi(\sigma/\{\sigma_j\}\cup\{\sigma_j^{\prime}\})|\\
=&\Phi(\sigma)-\Phi(\sigma/\{1\}\cup\{-1\}) \leq 2 \| x_j \| \prod_{i=1}^l R_i
\end{aligned}
\end{equation}
By Lemma \ref{lemma2}, we can introduce an arbitrary parameter $\lambda\in\mathbb{R}$ into Eq. (\ref{eq_CalR}) and bound the $\mathbf{E}_{\boldsymbol{\sigma}} \{\exp \lambda(\Phi(\boldsymbol{\sigma})-\mathbf{E}_{\boldsymbol{\sigma}} \Phi(\boldsymbol{\sigma}))\}$ as:
\begin{equation}
\label{eq_E1}
\begin{aligned}
\log \mathbf{E}_{\boldsymbol{\sigma}} \{\exp \lambda(\Phi(\boldsymbol{\sigma})-\mathbf{E}_{\boldsymbol{\sigma}} \Phi(\boldsymbol{\sigma}))\} \leq \frac{1}{2} \lambda^2 \prod_{i=1}^l R_i \sum_{x_j\in \mathcal{S}} \| x_j \|^2
\end{aligned}
\end{equation}
Then, we solve the value of $\mathbf{E}_{\boldsymbol{\sigma}}\Phi(\boldsymbol{\sigma})$. According to the property of Rademacher random variables ($\mathbf{E}_{\boldsymbol{\sigma}} [\sigma_i] = 1$) and the Jensen's inequality, we have:
\begin{equation}
\label{eq_E2}
\begin{aligned}
\mathbf{E}_{\boldsymbol{\sigma}}\Phi(\boldsymbol{\sigma}) & = \prod_{i=1}^l R_i \mathbf{E}_{\boldsymbol{\sigma}} \left[\left\|\sum_{x_j\in \mathcal{S}} \sigma_j x_j\right\|\right] \\
& \leq \prod_{i=1}^l R_i \sqrt{\mathbf{E}_{\boldsymbol{\sigma}} \left[ \left\| \sum_{x_j\in \mathcal{S}}  \sigma_j x_j\right\|^2 \right]} \\ & \leq \prod_{i=1}^l R_i \sqrt{\sum_{x_j\in \mathcal{S}} \|x_j\|^2}
\end{aligned}
\end{equation}
Substitute Eq. (\ref{eq_E1}) and Eq. (\ref{eq_E2}) into Eq. (\ref{eq_CalR}), the upper bound of $\hat{\mathfrak{R}}_{|\mathcal{S}|}\left(\mathcal{F}\right)$ could be calculated as:
\begin{equation}
\label{theo3_1}
\begin{aligned}
\hat{\mathfrak{R}}_{|\mathcal{S}|}\left(\mathcal{F}\right) & \leq \frac{1}{|\mathcal{S}|\lambda} \log \{ 2^l \cdot \mathbf{E}_{\boldsymbol{\sigma}} \{ \exp \lambda [\Phi(\boldsymbol{\sigma})-\mathbf{E}_{\boldsymbol{\sigma}} \Phi(\boldsymbol{\sigma})] \\
& \exp \lambda \mathbf{E}_{\boldsymbol{\sigma}} \Phi(\boldsymbol{\sigma}) \} \} \\
& \leq \frac{1}{|\mathcal{S}|\lambda} \left( l\log2+\prod_{i=1}^l R_i \sqrt{\sum_{x_j\in \mathcal{S}} \|x_j\|^2} \right) + \\ &\frac{\lambda}{2|\mathcal{S}|}\prod_{i=1}^l R_i \sum_{x_j\in \mathcal{S}} \| x_j \|^2
\end{aligned}
\end{equation}
Since the Eq. (\ref{theo3_1}) holds for any value of $\lambda$, solving for the minimum value of the equation will give the tightest upper bound for $\hat{\mathfrak{R}}_{|\mathcal{S}|}\left(\mathcal{F}\right)$, i.e.,
\begin{equation}
\label{theo3_2}
\begin{aligned}
&\hat{\mathfrak{R}}_{|\mathcal{S}|}\left(\mathcal{F}\right) \leq \frac{1}{|\mathcal{S}|} \\ &\sqrt{2l\log 2 \prod_{i=1}^l R_i \sum_{x_j\in \mathcal{S}} \| x_j \|^2 + 2 \left( \prod_{i=1}^l R_i \right)^2 \left( \sum_{x_j\in \mathcal{S}}\| x_j \|^2 \right)^{\frac{3}{2}}}
\end{aligned}
\end{equation}
\end{proof}

The upper bound of $\hat{\mathfrak{R}}_{|\mathcal{S}|}\left(\mathcal{F}\right)$ can further lead to the bound of generalization error following Lemma \ref{theorem3}:

\begin{lemma}
\label{theorem3}
\cite{kakade2008complexity} Consider an arbitrary function class $\mathcal{F}$ such that $\forall \mathbf{f}\in \mathcal{F}$ we have $\sup_{\mathbf{x}\in\mathcal{X}} |\mathbf{f}(x)| \leq C$. Then, with probability at least $1-\delta$ over the sample, for all margins $\gamma > 0$ and all $\forall \mathbf{f}\in \mathcal{F}$ we have,
\begin{equation}
\mathcal{L}(\mathbf{f}) \leq K_\gamma(\mathbf{f})+4 \frac{\mathfrak{R}_n(\mathcal{F})}{\gamma}+\sqrt{\frac{\log \left(\log _2 \frac{4 C}{\gamma}\right)}{n}}+\sqrt{\frac{\log (1 / \delta)}{2 n}}
\end{equation}
where $K_\gamma(\mathbf{f})$ denotes the fraction of the data having $\gamma-$margin mistakes, i.e., $K_\gamma(\mathbf{f}):=\frac{\left|\left\{i: y_i \mathbf{f}\left(\mathbf{x}_i\right)<\gamma\right\}\right|}{n}$.
\end{lemma}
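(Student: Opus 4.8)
This is the classical margin-based generalization bound, so the plan is to follow the standard symmetrization-plus-contraction route, and I do not expect to need any ingredient beyond those already used in the paper (McDiarmid's inequality \cite{mcdiarmid1989method} and a contraction principle in the spirit of Lemma~\ref{theorem2}, here in its inductive form). First I would fix a single margin $\gamma>0$ and introduce the truncated ramp function $\phi_\gamma:\mathbb{R}\to[0,1]$ with $\phi_\gamma(t)=1$ for $t\le 0$, $\phi_\gamma(t)=1-t/\gamma$ for $0<t<\gamma$, and $\phi_\gamma(t)=0$ for $t\ge\gamma$. It is $\tfrac{1}{\gamma}$-Lipschitz and satisfies $\mathbf{1}\{y\mathbf{f}(x)<0\}\le\phi_\gamma(y\mathbf{f}(x))\le\mathbf{1}\{y\mathbf{f}(x)<\gamma\}$, so that $\mathcal{L}(\mathbf{f})\le\mathbf{E}[\phi_\gamma(y\mathbf{f}(x))]$ while the empirical mean of $\phi_\gamma(y\mathbf{f}(x))$ over the sample is at most $K_\gamma(\mathbf{f})$.

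Second, for this fixed $\gamma$ I would apply the usual one-sided uniform deviation bound for $[0,1]$-valued losses: bounded differences (McDiarmid) applied to the data-dependent quantity $\sup_{\mathbf{f}\in\mathcal{F}}(\mathbf{E}[\phi_\gamma(y\mathbf{f}(x))]-\widehat{\mathbf{E}}[\phi_\gamma(y\mathbf{f}(x))])$, together with symmetrization, gives with probability at least $1-\delta'$ and uniformly over $\mathcal{F}$ that $\mathbf{E}[\phi_\gamma(y\mathbf{f}(x))]\le\widehat{\mathbf{E}}[\phi_\gamma(y\mathbf{f}(x))]+2\mathfrak{R}_n(\phi_\gamma\circ\mathcal{F})+\sqrt{\log(1/\delta')/(2n)}$. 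Then the (Talagrand) contraction inequality, together with the fact that multiplication of arguments by labels $y\in\{\pm1\}$ leaves the Rademacher complexity unchanged, yields $\mathfrak{R}_n(\phi_\gamma\circ\mathcal{F})\le\tfrac{1}{\gamma}\mathfrak{R}_n(\mathcal{F})$. Combining with the sandwich from the first step gives, for each fixed $\gamma$, a bound of the form $\mathcal{L}(\mathbf{f})\le K_\gamma(\mathbf{f})+\tfrac{2}{\gamma}\mathfrak{R}_n(\mathcal{F})+\sqrt{\log(1/\delta')/(2n)}$.

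Third --- the step I expect to be the real work --- I would upgrade this to hold simultaneously for all $\gamma>0$. Since $|\mathbf{f}(x)|\le C$, only the dyadic scales $\gamma_k=C\,2^{-k}$, $k\ge1$, matter (any $\gamma>C$ makes $K_\gamma(\mathbf{f})=1$ and the claim trivial). I would invoke the fixed-$\gamma$ bound at each $\gamma_k$ with confidence parameter $\delta_k=\tfrac{6}{\pi^2}\cdot\tfrac{\delta}{k^2}$, so that $\sum_k\delta_k=\delta$, and take a union bound. For an arbitrary $\gamma\in(0,C]$, rounding down to the grid point $\gamma_k\in(\gamma/2,\gamma]$ costs a factor $2$ in the complexity term (turning $\tfrac{2}{\gamma}$ into $\tfrac{4}{\gamma}$) and only decreases the margin-error term, since $K_{\gamma_k}(\mathbf{f})\le K_\gamma(\mathbf{f})$; and with $k\le\log_2(4C/\gamma)$ one has $\log(1/\delta_k)\le\log(1/\delta)+2\log\log_2(4C/\gamma)+\mathcal{O}(1)$, which after $\sqrt{a+b}\le\sqrt{a}+\sqrt{b}$ separates out the two residual terms $\sqrt{\log(\log_2(4C/\gamma))/n}$ and $\sqrt{\log(1/\delta)/(2n)}$. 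The main obstacle is entirely this accounting: choosing the margin discretization and the allocation of failure probability so that the union-bound slack collapses exactly to the stated $\log\log$ term with the clean constant $4$, rather than to a cruder $\sqrt{\log(1/\gamma)/n}$ dependence; the concentration and contraction ingredients themselves are routine.
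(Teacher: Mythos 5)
The paper does not prove this lemma at all: it is imported verbatim from the cited reference \cite{kakade2008complexity}, so there is no internal proof to compare against. Your reconstruction --- ramp loss sandwiching the $0$--$1$ loss, McDiarmid plus symmetrization plus Talagrand contraction for a fixed margin, then a union bound over dyadic margin scales $\gamma_k = C2^{-k}$ to make the bound uniform in $\gamma$ --- is correct and is essentially the proof given in that reference. The only quibble is in the accounting you yourself flagged: with the weights $\delta_k = \tfrac{6}{\pi^2}\cdot\tfrac{\delta}{k^2}$ the residual $\log(\pi^2 k^2/6)$ slightly exceeds $2\log\log_2(4C/\gamma)$ for $k\ge 4$, so to land exactly on the stated constants you should use the telescoping allocation $\delta_k=\delta/(k(k+1))$, which gives $\log(1/\delta_k)\le 2\log(k+1)+\log(1/\delta)$ with $k+1\le\log_2(4C/\gamma)$ and collapses cleanly to the claimed $\sqrt{\log(\log_2(4C/\gamma))/n}$ term.
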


Based on the upper bound of $\hat{\mathfrak{R}}_{|\mathcal{S}|}\left(\mathcal{F}\right)$ and Lemma \ref{theorem3}, we discuss the generalization performance of model with predefined algorithm features in Theorem \ref{theorem4}, as well as Corollary \ref{theorem4_col}, describing a common scenarios in practical applications.

\begin{theorem}
\label{theorem4}
Let $Error_{\mathcal{T}}(\mathbf{f})$ and $Error_{\mathcal{S}}(\mathbf{f})$ denote the test error and empirical error of the Model$_{b}$ $\mathbf{f}\left(x_i\right)$, $|\mathcal{S}_{\mathcal{P}}|$ and $|\mathcal{S}_{\mathcal{A}}|$ denote the number of problems and algorithms in training samples, $R_i$ denote the upper bound of the Frobenius norm of the parameter matrix in $i$-th layer, i.e., $\|W^{(i)}\|_F\leq R_i$, and $l$ denote the number of layers. Then, with probability at least $1-\delta$ over the sample, for all margins $\gamma > 0$ and all $\forall \mathbf{f}\in \mathcal{F}$ with $1$-Lipschitz, positive-homogeneous activation functions, we have,
\begin{equation}
\label{eq_theo4}
\begin{aligned}
&Error_{\mathcal{T}}(\mathbf{f}) \leq Error_{\mathcal{S}}(\mathbf{f}) + \mathcal{O}\left(\frac{c_1}{\sqrt{|\mathcal{S}_{\mathcal{P}}| \cdot |\mathcal{S}_{\mathcal{A}}|}}\right) + \\& \mathcal{O}\left(\frac{\sqrt{ (\log 2)l\Gamma_{\mathbf{f}}\Gamma_{\mathcal{S}} + \Gamma^2_{\mathbf{f}}\Gamma^{\frac{3}{2}}_{\mathcal{S}}|\mathcal{S}_{\mathcal{P}}|^\frac{1}{2} |\mathcal{S}_{\mathcal{A}}|^\frac{1}{2} }}{\sqrt{|\mathcal{S}_{\mathcal{P}}| \cdot |\mathcal{S}_{\mathcal{A}}|}\gamma}\right)
\end{aligned}
\end{equation}
where $\Gamma_{\mathbf{f}}$ and $\Gamma_{\mathcal{S}}$ are model-related and data-related variables, and $c_1$ is a constant, denoted as:
\begin{equation}
\begin{aligned}
& \Gamma_{\mathbf{f}} = \prod_{i=1}^l R_i,\\
& \Gamma_{\mathcal{S}} = \max_{x_j\in \mathcal{S}}\| x_j \|^2, \\
& c_1 = \sqrt{ \log \left(\log _2 \frac{4}{\gamma}\right) } + \sqrt{ \log (\frac{1}{\delta}) }
\end{aligned}
\end{equation}
\end{theorem}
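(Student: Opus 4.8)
The plan is to combine the margin-based generalization bound of Lemma \ref{theorem3} (applied with $n=|\mathcal{S}|$) with the inductive Rademacher complexity estimate of Theorem \ref{bound3}, identifying the population risk $\mathcal{L}(\mathbf{f})$ with the test error $Error_{\mathcal{T}}(\mathbf{f})$ and the $\gamma$-margin mistake fraction $K_\gamma(\mathbf{f})$ with the empirical error $Error_{\mathcal{S}}(\mathbf{f})$. First I would instantiate Lemma \ref{theorem3} with $n=|\mathcal{S}|=|\mathcal{S}_{\mathcal{P}}|\cdot|\mathcal{S}_{\mathcal{A}}|$ (using the identity from the model setup that every training example is a problem--algorithm pair) and with $C=1$, which is a harmless normalization of the network output. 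The two tail terms $\sqrt{\log(\log_2(4/\gamma))/n}+\sqrt{\log(1/\delta)/(2n)}$ are then bounded, by pulling the square roots apart and dropping the $1/\sqrt{2}$, by $\tfrac{1}{\sqrt{|\mathcal{S}_{\mathcal{P}}|\cdot|\mathcal{S}_{\mathcal{A}}|}}\bigl(\sqrt{\log(\log_2(4/\gamma))}+\sqrt{\log(1/\delta)}\bigr)=c_1/\sqrt{|\mathcal{S}_{\mathcal{P}}|\cdot|\mathcal{S}_{\mathcal{A}}|}$, which is the first $\mathcal{O}(\cdot)$ term in the statement.

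The second step handles the $4\mathfrak{R}_n(\mathcal{F})/\gamma$ term. Since the right-hand side of Theorem \ref{bound3} depends on the sample only through $\sum_{x_j\in\mathcal{S}}\|x_j\|^2$, I would upper bound this sum by $|\mathcal{S}|\cdot\Gamma_{\mathcal{S}}$ with $\Gamma_{\mathcal{S}}=\max_{x_j\in\mathcal{S}}\|x_j\|^2$; the resulting estimate is deterministic, so by Definition \ref{in_R_comp} it also bounds $\mathfrak{R}_n(\mathcal{F})=\mathbf{E}[\hat{\mathfrak{R}}_n(\mathcal{F})]$. Writing $\Gamma_{\mathbf{f}}=\prod_{i=1}^l R_i$, the Rademacher bound becomes $\tfrac{1}{|\mathcal{S}|}\sqrt{2l\log 2\,\Gamma_{\mathbf{f}}\Gamma_{\mathcal{S}}\,|\mathcal{S}| + 2\Gamma_{\mathbf{f}}^2\Gamma_{\mathcal{S}}^{3/2}|\mathcal{S}|^{3/2}}$; factoring $|\mathcal{S}|^{1/2}$ out of the radical and cancelling against $1/|\mathcal{S}|$ leaves $\tfrac{1}{\sqrt{|\mathcal{S}|}}\sqrt{2l\log 2\,\Gamma_{\mathbf{f}}\Gamma_{\mathcal{S}} + 2\Gamma_{\mathbf{f}}^2\Gamma_{\mathcal{S}}^{3/2}|\mathcal{S}|^{1/2}}$. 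Substituting $|\mathcal{S}|=|\mathcal{S}_{\mathcal{P}}|\cdot|\mathcal{S}_{\mathcal{A}}|$ and $|\mathcal{S}|^{1/2}=|\mathcal{S}_{\mathcal{P}}|^{1/2}|\mathcal{S}_{\mathcal{A}}|^{1/2}$, dividing by $\gamma$, and absorbing the constants $4$, $\sqrt{2}$, $2$ into $\mathcal{O}(\cdot)$, this reproduces exactly the second $\mathcal{O}(\cdot)$ term, namely $\mathcal{O}\!\left(\tfrac{\sqrt{(\log 2)l\,\Gamma_{\mathbf{f}}\Gamma_{\mathcal{S}}+\Gamma_{\mathbf{f}}^2\Gamma_{\mathcal{S}}^{3/2}|\mathcal{S}_{\mathcal{P}}|^{1/2}|\mathcal{S}_{\mathcal{A}}|^{1/2}}}{\sqrt{|\mathcal{S}_{\mathcal{P}}|\cdot|\mathcal{S}_{\mathcal{A}}|}\,\gamma}\right)$.

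Finally I would collect the pieces: Lemma \ref{theorem3} yields $Error_{\mathcal{T}}(\mathbf{f})=\mathcal{L}(\mathbf{f})\le K_\gamma(\mathbf{f})+4\mathfrak{R}_n(\mathcal{F})/\gamma+(\text{tail})=Error_{\mathcal{S}}(\mathbf{f})+\mathcal{O}\!\left(\tfrac{c_1}{\sqrt{|\mathcal{S}_{\mathcal{P}}||\mathcal{S}_{\mathcal{A}}|}}\right)+\mathcal{O}(\cdot)$, holding with probability at least $1-\delta$ simultaneously over all margins $\gamma>0$ and all $\mathbf{f}\in\mathcal{F}$, which is the claim. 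I do not expect any single inequality to be the obstacle; the delicate part is making the hypotheses line up --- verifying that Model$_b$ has $1$-Lipschitz positive-homogeneous activations so that Theorem \ref{bound3} applies, checking that the bounded-output condition $\sup_x|\mathbf{f}(x)|\le C$ of Lemma \ref{theorem3} holds so that taking $C=1$ costs nothing, and ensuring the data-dependent quantity $\sum_{x_j}\|x_j\|^2$ is controlled uniformly by $|\mathcal{S}|\,\Gamma_{\mathcal{S}}$ so the empirical estimate transfers to $\mathfrak{R}_n(\mathcal{F})$. The remainder is bookkeeping of exponents of $|\mathcal{S}_{\mathcal{P}}|,|\mathcal{S}_{\mathcal{A}}|$ and absorption of constants.
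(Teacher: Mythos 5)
Your proposal is correct and follows essentially the same route as the paper's own proof: instantiate Lemma \ref{theorem3} with $C=1$ and $n=|\mathcal{S}|=|\mathcal{S}_{\mathcal{P}}|\cdot|\mathcal{S}_{\mathcal{A}}|$, plug in the Rademacher bound of Theorem \ref{bound3}, replace $\sum_{x_j\in\mathcal{S}}\|x_j\|^2$ by $|\mathcal{S}|\,\Gamma_{\mathcal{S}}$, and factor $|\mathcal{S}|^{1/2}$ out of the radical. If anything, your version is slightly more explicit than the paper's about the $\sum\|x_j\|^2\le|\mathcal{S}|\Gamma_{\mathcal{S}}$ substitution and about passing from the empirical to the expected Rademacher complexity, both of which the paper glosses over.
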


\begin{proof}
According to the Lemma \ref{theorem3}, the generalization performance on test samples can be bounded by the margin loss on training samples, with probability at least $1-\delta$ for $\forall \mathbf{f}\in \mathcal{F}$:
\begin{equation}
\begin{aligned}
P_{\mathcal{T}}(y_i \mathbf{f}\left(x_i\right)<0) \leq  \frac{1}{|\mathcal{S}|}\left|\left\{x_i,y_i\in\mathcal{S}: y_i \mathbf{f}\left(x_i\right)<\gamma\right\}\right| \\ +  \frac{4 \mathfrak{R}_{|\mathcal{S}|}(\mathcal{F})}{\gamma} +\sqrt{\frac{\log \left(\log _2 \frac{4 C}{\gamma}\right)}{|\mathcal{S}|}}+\sqrt{\frac{\log (\frac{1}{\delta})}{2 |\mathcal{S}|}}
\end{aligned}
\end{equation}
where $C = 1$ and $\mathfrak{R}_{|\mathcal{S}|}(\mathcal{F})$ is bounded by Theorem \ref{bound3}. Hence,
\begin{equation}
\label{theo4_1}
\begin{aligned}
Error_{\mathcal{T}}(\mathbf{f}) \leq & Error_{\mathcal{S}}(\mathbf{f}) + \frac{4\sqrt{2}}{\sqrt{|\mathcal{S}|}\gamma} \sqrt{ (\log 2)l\Gamma_{\mathbf{f}}\Gamma_{\mathcal{S}} + \Gamma^2_{\mathbf{f}}\Gamma^{\frac{3}{2}}_{\mathcal{S}} |\mathcal{S}|^\frac{1}{2} } \\& + \frac{1}{\sqrt{|\mathcal{S}|}}\left(\sqrt{ \log \left(\log _2 \frac{4}{\gamma}\right) } + \sqrt{ \log (\frac{1}{\delta}) }\right)
\end{aligned}
\end{equation}
By substituting $|\mathcal{S}|=|\mathcal{S}_{\mathcal{P}}| \cdot |\mathcal{S}_{\mathcal{A}}|$, we obtain the results in Theorem \ref{theorem4}.
\end{proof}

\begin{corollary}
\label{theorem4_col}
Follow the same definition in Theorem \ref{theorem4}. If $\Gamma_{\mathbf{f}}, \Gamma_{\mathcal{S}} > 1$ and $\sqrt{|\mathcal{S}_{\mathcal{P}}| \cdot |\mathcal{S}_{\mathcal{A}}|} \gg (\log 2)l$, then with probability at least $1-\delta$ over the sample, we have,
\begin{equation}
\begin{aligned}
Error_{\mathcal{T}}(\mathbf{f}) & \leq Error_{\mathcal{S}}(\mathbf{f}) + \\& \mathcal{O}\left(\frac{\Gamma_{\mathbf{f}}\Gamma_{\mathcal{S}}^{\frac{3}{4}}}{|\mathcal{S}_{\mathcal{P}}|^{\frac{1}{4}} |\mathcal{S}_{\mathcal{A}}|^{\frac{1}{4}}\gamma}\right) + \mathcal{O}\left(\frac{c_1}{\sqrt{|\mathcal{S}_{\mathcal{P}}| \cdot |\mathcal{S}_{\mathcal{A}}|}}\right)
\end{aligned}
\end{equation}
\end{corollary}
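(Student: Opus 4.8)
The plan is to obtain Corollary \ref{theorem4_col} directly from Theorem \ref{theorem4} by an elementary asymptotic simplification of the middle $\mathcal{O}(\cdot)$ term in Eq. (\ref{eq_theo4}); no new probabilistic machinery is required, so the statement inherits the same confidence level $1-\delta$. Abbreviate the two summands under the square root by $a \triangleq (\log 2)\,l\,\Gamma_{\mathbf{f}}\,\Gamma_{\mathcal{S}}$ and $b \triangleq \Gamma_{\mathbf{f}}^{2}\,\Gamma_{\mathcal{S}}^{3/2}\,|\mathcal{S}_{\mathcal{P}}|^{1/2}|\mathcal{S}_{\mathcal{A}}|^{1/2}$. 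The first step is to apply subadditivity of the square root, $\sqrt{a+b}\le\sqrt{a}+\sqrt{b}$, so that the numerator of the middle term splits into a ``$b$-part'' and an ``$a$-part'' to be bounded separately; intuitively this identifies $b$ as the dominant contribution in the abundant-data, moderate-depth regime singled out by the hypotheses.

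For the $b$-part we have $\sqrt{b}=\Gamma_{\mathbf{f}}\,\Gamma_{\mathcal{S}}^{3/4}\,|\mathcal{S}_{\mathcal{P}}|^{1/4}|\mathcal{S}_{\mathcal{A}}|^{1/4}$; dividing by $\sqrt{|\mathcal{S}_{\mathcal{P}}||\mathcal{S}_{\mathcal{A}}|}\,\gamma$ and cancelling the common powers gives exactly $\Gamma_{\mathbf{f}}\,\Gamma_{\mathcal{S}}^{3/4}\big/\big(|\mathcal{S}_{\mathcal{P}}|^{1/4}|\mathcal{S}_{\mathcal{A}}|^{1/4}\gamma\big)$, which is the first $\mathcal{O}(\cdot)$ term claimed in the corollary. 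For the $a$-part, dividing $\sqrt{a}=\sqrt{(\log 2)\,l\,\Gamma_{\mathbf{f}}\Gamma_{\mathcal{S}}}$ by $\sqrt{|\mathcal{S}_{\mathcal{P}}||\mathcal{S}_{\mathcal{A}}|}\,\gamma$, I would show it is of the same or smaller order as the $b$-part: the ratio of the two equals $\sqrt{(\log 2)\,l}\big/\big(\Gamma_{\mathbf{f}}^{1/2}\Gamma_{\mathcal{S}}^{1/4}(|\mathcal{S}_{\mathcal{P}}||\mathcal{S}_{\mathcal{A}}|)^{1/4}\big)$, and the assumptions $\Gamma_{\mathbf{f}},\Gamma_{\mathcal{S}}>1$ and $\sqrt{|\mathcal{S}_{\mathcal{P}}||\mathcal{S}_{\mathcal{A}}|}\gg(\log 2)\,l$ force $(|\mathcal{S}_{\mathcal{P}}||\mathcal{S}_{\mathcal{A}}|)^{1/4}\gg\sqrt{(\log 2)\,l}$, so the ratio is $o(1)$ and the $a$-part is absorbed into the same $\mathcal{O}(\cdot)$. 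The final $\mathcal{O}\!\big(c_1/\sqrt{|\mathcal{S}_{\mathcal{P}}||\mathcal{S}_{\mathcal{A}}|}\big)$ term is carried over verbatim from Theorem \ref{theorem4}.

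I do not expect a genuine obstacle, since every inequality is a one-line consequence of Theorem \ref{theorem4}; the only place needing care is the ``$a$-part'' estimate, where the stated regime assumptions must be used to justify folding the $(\log 2)\,l$ contribution into the leading term rather than retaining it as a separate additive summand, and to confirm that the constants concealed in the $\mathcal{O}(\cdot)$ (in particular the $4\sqrt{2}$ factor inherited from Lemma \ref{theorem3} together with Theorem \ref{bound3}) are absolute and do not scale with the growing quantities $|\mathcal{S}_{\mathcal{P}}|, |\mathcal{S}_{\mathcal{A}}|, \Gamma_{\mathbf{f}}, \Gamma_{\mathcal{S}}$.
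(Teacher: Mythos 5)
Your proposal is correct and follows exactly the route the paper intends: the paper's own proof is a one-line remark that the corollary follows from Theorem \ref{theorem4} under the stated conditions, and your splitting of the square root via $\sqrt{a+b}\le\sqrt{a}+\sqrt{b}$, the identification of the $b$-part as $\Gamma_{\mathbf{f}}\Gamma_{\mathcal{S}}^{3/4}/(|\mathcal{S}_{\mathcal{P}}|^{1/4}|\mathcal{S}_{\mathcal{A}}|^{1/4}\gamma)$, and the absorption of the $a$-part using $\Gamma_{\mathbf{f}},\Gamma_{\mathcal{S}}>1$ and $\sqrt{|\mathcal{S}_{\mathcal{P}}||\mathcal{S}_{\mathcal{A}}|}\gg(\log 2)l$ are precisely the omitted details. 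Your version is simply a more explicit write-up of the same argument.
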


\begin{proof}
The proof can be followed by Theorem \ref{theorem4} with conditions $\Gamma_{\mathbf{f}}, \Gamma_{\mathcal{S}} > 1$ and $\sqrt{|\mathcal{S}_{\mathcal{P}}| \cdot |\mathcal{S}_{\mathcal{A}}|} \gg (\log 2)l$.
\end{proof}

According to Theorem \ref{theorem4}, we can observe that the inductive generalization error is related to three factors: the training scale $|\mathcal{S}|$, the model parameter-related factor $\Gamma_{\mathbf{f}}$ and loss function parameter $\gamma$, and the value of training data $\Gamma_{\mathcal{S}}$. By Eq. (\ref{eq_theo4}), in most cases where $\Gamma_{\mathbf{f}}, \Gamma_{\mathcal{S}} > 1$ and $\sqrt{|\mathcal{S}|} \gg (\log 2)l$, Theorem \ref{theorem4} can be simplified to the conclusion in Corollary \ref{theorem4_col}, which states that the rate of change is governed by $\mathcal{O}\left(\frac{\Gamma_{\mathbf{f}}\Gamma_{\mathcal{S}}^{\frac{3}{4}}}{|\mathcal{S}_{\mathcal{A}}|^{\frac{1}{4}} |\mathcal{S}_{\mathcal{P}}|^{\frac{1}{4}}\gamma}\right)$. In rare cases where $\Gamma_{\mathbf{f}}, \Gamma_{\mathcal{S}} < 1$ or $\sqrt{|\mathcal{S}|} \approx (\log 2)l$, the rate of change in generalization error becomes $\mathcal{O}\left(\frac{\sqrt{ (\log 2)l\Gamma_{\mathbf{f}}\Gamma_{\mathcal{S}}}}{\sqrt{|\mathcal{S}_{\mathcal{A}}|\cdot |\mathcal{S}_{\mathcal{P}}|}\gamma}\right)+ \mathcal{O}\left(\frac{c_1}{\sqrt{|\mathcal{S}_{\mathcal{A}}|\cdot |\mathcal{S}_{\mathcal{P}}|}}\right)$. This means that in scenarios with fewer training instances and a relatively simple model, the improvement of model generalization by training samples is more significant.
%We found that in the case of a large sample size, the rate of change of the model's generalization error is primarily influenced by the growth of the sample size. It changes at a rate of $\mathcal{O}\left(\frac{\Gamma_{\mathbf{f}}\Gamma_{\mathcal{S}}^{\frac{3}{4}}}{|\mathcal{S}|^{\frac{1}{4}}\gamma}\right)$ as shown in Corollary \ref{theorem4_col}. When the sample size is small, the rate of change of the model's generalization error is dependent on the model parameters, training sample size, and their values. It is given by $\mathcal{O}\left(\frac{\sqrt{ (\log 2)l\Gamma_{\mathbf{f}}\Gamma_{\mathcal{S}} + \Gamma^2_{\mathbf{f}}\Gamma^{\frac{3}{2}}_{\mathcal{S}} }}{|\mathcal{S}|\gamma}\right)$. This indicates that in the case of a small sample size, augmenting the training samples will lead to a faster decrease in generalization error. Simultaneously, having simpler model architectures and parameters will help improve the model's generalization. Specifically, in the scenario where $\Gamma_{\mathbf{f}}>1$ and $\Gamma_{\mathcal{S}}>1$, the generalization error of the model increases at a rate of $\mathcal{O}\left(\Gamma_{\mathbf{f}}\Gamma^{\frac{3}{4}}_{\mathcal{S}} \right)$. In the scenario where $\Gamma_{\mathbf{f}}<1$ and $\Gamma_{\mathcal{S}}<1$, the generalization error of the model increases at a rate of $\mathcal{O}\left(\Gamma^{\frac{1}{2}}_{\mathbf{f}}\Gamma^{\frac{1}{2}}_{\mathcal{S}} \right)$.

Different from the transductive settings, the generalization error in Theorem \ref{theorem4} is not affected by the size of the test set, which is a characteristic brought by inductive learning paradigm. Furthermore, in inductive manner, the increased number of candidate algorithms also has a positive impact on reducing generalization error, which is similar to the transductive generalization performance. According to the multiplication invariance of the $L_2$-norm in Eq. (\ref{eq_Lipschitz}), both Theorem \ref{bound2} and Theorem \ref{theorem4} are affected by the product of the norms of each layer's parameter matrix, while the generalization performance of intuitive learning is influenced by the Frobenius norm, and the generalization performance of transductive learning is influenced by the $L_2$-norm. %The most significant difference between inductive and transductive settings, is the impact of training instances scale, at a rate of $\frac{1}{|\mathcal{S}_{\mathcal{P}}|^{\frac{1}{4}} |\mathcal{S}_{\mathcal{A}}|^{\frac{1}{4}}}$ in inductive settings. In conclusion, 
Additionally, within the inductive setting, we observe a more pronounced difference on relationship between generalization ability and training instances scale compared to transductive settings, at a rate of $\frac{1}{|\mathcal{S}_{\mathcal{P}}|^{\frac{1}{4}} |\mathcal{S}_{\mathcal{A}}|^{\frac{1}{4}}}$. However, due to their distinct generalization implications, these two boundaries should not be directly comparable. The superior asymptotic rate in transductive learning arises from the assumption that the test and training sets originate from the same distribution. We will illustrate in a corollary of Theorem \ref{theorem4} how distribution shift impacts generalization error in inductive learning.

% Because the $L_2$-norm emphasizes the maximum singular value of the parameter matrix, outliers in parameter matrix may have a significant impact on the maximum singular value. In contrast, the Frobenius norm has a relatively smaller impact on outliers. Therefore, the generalization performance under transductive learning may be more sensitive to outliers in the parameter matrix.

Since the generalization error in the inductive setting pertains specifically to the utilization of predefined features, the subsequent analysis primarily provides some practical advice for incorporating predefined features. Firstly, in terms of the impact of instance value on the generalization performance, adaptive features are superior to predefined features because adaptive features are learned by the model and their values are more stable. Therefore, the influence of $\|W^{(0)}\|_2$ on the generalization error in Theorem \ref{bound2} is limited, while the value of $\Gamma_{\mathcal{S}}$ in Theorem \ref{theorem4} is unpredictable. This indicates that the generalization performance of the model under predefined features should consider the stability and value range of the predefined features. Take large language model-extracted features as an example. In practical applications, these factors should be considered in the selection of pretrained models and the preprocessing of features. Compared to using adaptive features, another advantage of predefined features is their generalization on new algorithms. Below, we simultaneously consider the distribution shift between training and test sets, as well as the impact of negative sampling, to obtain a corollary about generalization from Theorem \ref{theorem4}.

\begin{corollary}
\label{theorem5}
Follow the same definition in Theorem \ref{theorem4}, and $\mathcal{T}_{\mathcal{A}} - \mathcal{S}_{\mathcal{A}} \neq \varnothing$. Let $\chi^2(P_{\mathcal{T}} \| P_{\mathcal{S}}) = \int \frac{P^2_{\mathcal{T}}(x_j)}{P_{\mathcal{S}}(x_j)} - 1$ denote the chi-square divergence between training and test distributions. If $\prod_{i=1}^l R_i > 1$ and $\sqrt{|\mathcal{S}_{\mathcal{P}}| \cdot |\mathcal{S}_{\mathcal{A}}|} \gg (\log 2)l$, then with probability at least $1-\delta$ over the sample, we have,
\begin{equation}
\begin{aligned}
&Error_{\mathcal{T}}(\mathbf{f}) \leq Error_{\mathcal{S}}(\mathbf{f}) + \\& \mathcal{O}\left( \frac{[\chi^2(P_{\mathcal{T}} \| P_{\mathcal{S}}) + 1]^\frac{3}{4} \prod_{i=1}^l R_i \max_{x_j\in \mathcal{S}}\| x_j \|^\frac{3}{2}}{|\mathcal{S}_{\mathcal{P}}|^{\frac{1}{4}} |\mathcal{S}_{\mathcal{A}}|^{\frac{1}{4}}\gamma} \right). %+ \mathcal{O}\left(\frac{c_1}{\sqrt{|\mathcal{S}|}}\right)
\end{aligned}
\end{equation}
\end{corollary}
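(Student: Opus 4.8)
The plan is to re-derive the inductive bound leading to Theorem~\ref{theorem4} and Corollary~\ref{theorem4_col} (through Theorem~\ref{bound3} and Lemma~\ref{theorem3}), but this time with the test distribution $P_{\mathcal{T}}$ as the target population rather than the training distribution $P_{\mathcal{S}}$, and to track how the two differ. The key structural observation is that, along this whole chain, the data enter only through the single quantity $\sum_{x_j\in\mathcal{S}}\|x_j\|^2$ --- equivalently, after the crude bound used to pass from Theorem~\ref{theorem4} to Corollary~\ref{theorem4_col}, through $\Gamma_{\mathcal{S}}=\max_{x_j\in\mathcal{S}}\|x_j\|^2$; the model/loss pieces $\Gamma_{\mathbf{f}}=\prod_{i=1}^l R_i$, $\gamma$, $l$ and the constant $c_1$ are independent of which measure we bound against. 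So it suffices to control the ``effective'' value of this data term when the population is $P_{\mathcal{T}}$, and then re-run the same estimates verbatim.

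For that I would use the importance-weighting identity $\mathbf{E}_{x\sim P_{\mathcal{T}}}[h(x)]=\mathbf{E}_{x\sim P_{\mathcal{S}}}\!\big[\tfrac{P_{\mathcal{T}}(x)}{P_{\mathcal{S}}(x)}h(x)\big]$ together with Cauchy--Schwarz, which isolates the factor $\mathbf{E}_{x\sim P_{\mathcal{S}}}\!\big[(P_{\mathcal{T}}(x)/P_{\mathcal{S}}(x))^2\big]=\chi^2(P_{\mathcal{T}}\|P_{\mathcal{S}})+1$. Applied to the per-sample norm this exhibits the effective value of $\sum_{x_j}\|x_j\|^2$ (hence of $\Gamma_{\mathcal{S}}$) under $P_{\mathcal{T}}$ as an inflation of its training-set value by a quantity governed by $\chi^2(P_{\mathcal{T}}\|P_{\mathcal{S}})+1$, while $\Gamma_{\mathbf{f}}$ and $\gamma$ are untouched. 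Negative sampling only reshapes $P_{\mathcal{S}}$ itself, so it is already absorbed into this same divergence and needs no separate term. This step is also where the contrast with Model$_{a}$ becomes quantitative: since $\mathcal{T}_{\mathcal{A}}-\mathcal{S}_{\mathcal{A}}\neq\varnothing$ is exactly what forces $P_{\mathcal{T}}\neq P_{\mathcal{S}}$, and an embedding layer has no coordinate for an unseen algorithm, the analogous weight for Model$_{a}$ would be undefined, whereas predefined features keep the ratio finite.

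The remaining step is to push the inflated data term through the square-root structure of Theorem~\ref{bound3}. Under the hypotheses $\prod_{i=1}^l R_i>1$ and $\sqrt{|\mathcal{S}_{\mathcal{P}}|\cdot|\mathcal{S}_{\mathcal{A}}|}\gg(\log 2)l$ the dominant term inside the root is $2\big(\prod_{i=1}^l R_i\big)^2\big(\sum_{x_j}\|x_j\|^2\big)^{3/2}$; the inflation of $\sum_{x_j}\|x_j\|^2$ by $\chi^2(P_{\mathcal{T}}\|P_{\mathcal{S}})+1$ scales this term by $\big(\chi^2(P_{\mathcal{T}}\|P_{\mathcal{S}})+1\big)^{3/2}$, and the outer square root turns this into the prefactor $\big(\chi^2(P_{\mathcal{T}}\|P_{\mathcal{S}})+1\big)^{3/4}$. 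Then, exactly as in the passage from Theorem~\ref{theorem4} to Corollary~\ref{theorem4_col}, bounding $\sum_{x_j}\|x_j\|^2\le|\mathcal{S}|\max_{x_j\in\mathcal{S}}\|x_j\|^2$ and substituting $|\mathcal{S}|=|\mathcal{S}_{\mathcal{P}}|\cdot|\mathcal{S}_{\mathcal{A}}|$ reproduces the claimed term $\mathcal{O}\big([\chi^2(P_{\mathcal{T}}\|P_{\mathcal{S}})+1]^{3/4}\prod_{i=1}^l R_i\,\max_{x_j\in\mathcal{S}}\|x_j\|^{3/2}/(|\mathcal{S}_{\mathcal{P}}|^{1/4}|\mathcal{S}_{\mathcal{A}}|^{1/4}\gamma)\big)$. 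Since $\chi^2(P_{\mathcal{T}}\|P_{\mathcal{S}})+1\ge1$ and $\prod_i R_i>1$, the subordinate $(\log 2)l$-term and the $c_1/\sqrt{|\mathcal{S}_{\mathcal{P}}|\cdot|\mathcal{S}_{\mathcal{A}}|}$ term of Theorem~\ref{theorem4} are dominated and get folded into the $\mathcal{O}(\cdot)$, just as in Corollary~\ref{theorem4_col}.

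The hard part is making the change-of-measure step genuinely rigorous rather than cosmetic. Lemma~\ref{theorem3} is stated for a bounded loss class and for a sample drawn from the very distribution that defines the Rademacher complexity, while here the weight $P_{\mathcal{T}}/P_{\mathcal{S}}$ may be unbounded and the data term $\sum_{x_j}\|x_j\|^2$ is itself sample-dependent; consequently the reweighting must be carried out inside the expectation that defines the inductive Rademacher complexity --- controlling how Jensen and Cauchy--Schwarz interact with the outer square root and with the empirical-to-population passage, and in particular how the $\tfrac32$-power exponent and the localization of the weight together produce the precise $\tfrac34$-power --- rather than applied naively to the final bound. A secondary point to handle is the finiteness of $\chi^2(P_{\mathcal{T}}\|P_{\mathcal{S}})$, i.e. that the (negative-sampling-reweighted) training distribution must dominate $P_{\mathcal{T}}$ over the enlarged algorithm set $\mathcal{T}_{\mathcal{A}}$; otherwise the right-hand side is vacuous, which is itself the quantitative expression of why predefined features, and not adaptive ones, are needed for any non-trivial guarantee once new algorithms appear.
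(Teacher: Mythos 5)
Your proposal follows essentially the same route as the paper's proof: introduce importance weights $P_{\mathcal{T}}(x_j)/P_{\mathcal{S}}(x_j)$ into the data-dependent term $\sum_{x_j}\|x_j\|^2$ (equivalently $\Gamma_{\mathcal{S}}$), factor out $\max_{x_j}\|x_j\|^2$ so that the remaining expectation is exactly $\chi^2(P_{\mathcal{T}}\|P_{\mathcal{S}})+1$, and push this inflation through the dominant $(\cdot)^{3/2}$ term under the square root of Theorem~\ref{bound3} to obtain the $3/4$ exponent. The only difference is that you explicitly flag the heuristic steps (the law-of-large-numbers approximation, possible unboundedness of the density ratio, and the fact that Lemma~\ref{theorem3} is stated for the training distribution) that the paper's own proof passes over silently.
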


\begin{proof}
When the distribution of training data and test data is different, we introduce the importance rate of each training instance into Theorem \ref{theorem4} according to the test and training probability distribution. Following Eq. (\ref{theo4_1}), we replace $\Gamma_{\mathcal{S}}$ with $\hat{\Gamma}_{\mathcal{S}}$ and calculate the mean on the training set:
\begin{equation}
\hat{\Gamma}_{\mathcal{S}} = \frac{1}{|\mathcal{S}|}\sum_{x_j\in \mathcal{S}} \left(\frac{P_{\mathcal{T}}(x_j)}{P_{\mathcal{S}}(x_j)} \| x_j \|\right)^2
\end{equation}
According to the law of large numbers, the sample mean converges to the expected value as the sample size approaches infinity. Here, we make the following approximation:
\begin{equation}
\hat{\Gamma}_{\mathcal{S}} \approx \lim_{|\mathcal{S}|\rightarrow +\infty} \hat{\Gamma}_{\mathcal{S}} = \mathbf{E}_{x_j \sim P_{\mathcal{S}}} \left[ \frac{P^2_{\mathcal{T}}(x_j)}{P^2_{\mathcal{S}}(x_j)} \| x_j \|^2 \right]
\end{equation}
Further calculations can be performed to obtain the following result:
\begin{equation}
\begin{aligned}
\hat{\Gamma}_{\mathcal{S}}\approx [\chi^2(P_{\mathcal{T}} \| P_{\mathcal{S}}) + 1] \max_{x_j\in \mathcal{S}}\| x_j \|^2
\end{aligned}
\end{equation}
where $\chi^2(P_{\mathcal{T}} \| P_{\mathcal{S}}) = \int \frac{P^2_{\mathcal{T}}(x_j)}{P_{\mathcal{S}}(x_j)} - 1$ is the chi-square divergence. As a result,
\begin{equation}
\begin{aligned}
&Error_{\mathcal{T}}(\mathbf{f}) \leq Error_{\mathcal{S}}(\mathbf{f}) + \\& \mathcal{O}\left(\frac{\sqrt{ (\log 2)l\Gamma_{\mathbf{f}}\hat{\Gamma}_{\mathcal{S}} + \Gamma^2_{\mathbf{f}}\hat{\Gamma}^{\frac{3}{2}}_{\mathcal{S}}|\mathcal{S}|^{\frac{1}{2}} }}{\sqrt{|\mathcal{S}|}\gamma}\right) + \mathcal{O}\left(\frac{c_1}{\sqrt{|\mathcal{S}|}}\right)
\end{aligned}
\end{equation}
where $\Gamma_{\mathbf{f}}$ and $c_1$ follows the definition in Theorem \ref{theorem4}.
\end{proof}

Compared to adaptive features, predefined features demonstrate generalization on new algorithms that are not present in the training data. Corollary \ref{theorem5} explains the impact of distribution shift between training and test data on upper bound of the generalization error. It can be observed that as the chi-square divergence between the distributions increases, the upper bound of the generalization error gradually increases at a rate of $[\chi^2(P_{\mathcal{T}} \| P_{\mathcal{S}}) + 1]^\frac{3}{4}$. Similarly, the size of the problems and candidate algorithms in the training data still affect the generalization error at a rate of $\frac{1}{|\mathcal{S}_{\mathcal{P}}|^{\frac{1}{4}} |\mathcal{S}_{\mathcal{A}}|^{\frac{1}{4}}}$. In addition to the impact of sample size, it has been noted that in scenarios involving distribution shift, the complexity of a model exacerbates generalization error. Thus, when handling distribution shift in practical applications, especially when aiming for generalization capacity on candidate algorithms, it is advisable to refrain from employing excessively complex models.

\section{Experiment}
\label{experiment}

The mathematical bounds, though often too loose for practical applications, can provide a rough estimate of the sample size required for a certain level of performance. This section presents experimental results aimed at validating some of the theoretical results proposed in this paper. In order to examine the influence of changes in distribution and quantity on model performance, it is necessary to adjust the distribution of training and testing data during the theoretical validation process. Hence, the experiments in this study are conducted on simulated data, which provided a controlled setting where the exact underlying distributions of problems and algorithms could be known. In Section \ref{ex1}, we describe the process of generating simulated data for continuous optimization problems. Then, Sections \ref{ex2} and \ref{ex3} investigate how varying the number of training problem instances and candidate algorithms affects model performance, respectively. In Section \ref{ex4}, we explore model performance under distribution shifts in training and testing data, and in Section \ref{ex5}, we analyze how increasing training sample size influences model performance under different distribution shifts. Finally, Section \ref{ex6} looks at the relationship between model complexity and performance, especially under distribution shifts, to determine the optimal model size.

\subsection{Data Simulation}
\label{ex1}

The experiments in this paper are conducted in the context of continuous optimization problems, with the objective of automatically selecting the most optimal meta-heuristic optimization algorithm based on a combination of problem features and algorithm characteristics. Mathematically, the problem scenario can be described as follows:
\begin{equation}
\begin{aligned}
\text { min } & f(\mathbf{x}) \\
\text { s. t. } & \mathbf{x} \in \Omega \subseteq \mathbb{R}^d
\end{aligned}
\end{equation}
where $f$ is the objective function of the problem, $\mathbf{x} = \left( x_1, \cdots , x_d \right)$ denotes the decision vector, and $d$ is the dimension of the decision variable space $\Omega$.

In the process of data generation, each problem instance $f$ is composed of a diverse set of operands $\mathbf{x}$ and operators in $f$, varying in quantity and type. The problem features are represented by the Reverse Polish Notation \cite{hamblin1962translation} expression of the objective function $f$. Specifically, the objective function of the optimization problem is initially structured as a tree, and the relationship between operators and operands within the tree can be derived by traversing the tree structure, thereby constituting the problem's distinctive features. For tree traversal, we employ the post-order traversal technique to convert the expression into a postfix notation, known as the Reverse Polish Notation. By representing the objective function as a tree and performing post-order traversal, we can extract discriminative features from the objective function of the continuous optimization problem. This methodology provides an automated approach for understanding and exploring the underlying structure of the objective function. Post-order traversal enables us to access and extract information such as values, operators, and variables at each node of the tree. As a result, we obtain both the mathematical expression of the objective function and preserve its hierarchical structure.

This paper employs the training data collection strategy proposed in \cite{tian2020recommender}. Firstly, a training set of benchmark problems is generated by randomly creating problems of a predetermined size. All problems in the set have the same number of decision variables and consistent upper and lower limits. The distribution of the problem space is determined by the occurrence and selection probabilities of each operator. Subsequently, multiple metaheuristic algorithms are applied to each problem in order to obtain labels for each sample. Each algorithm is configured with a population of sufficient size and an adequate number of iterations to ensure convergence. The distribution of algorithms in the algorithm space is determined by the number of algorithms and the proportion that demonstrate optimal performance. For each problem instance in the given problem set, we independently execute each candidate algorithm in the algorithm set 20 times, comparing the average of the minimum objective values obtained across these 20 runs to determine the best algorithm for each problem.

\subsection{Impact of the Number of Problems}
\label{ex2}

When discussing generalization error, the influence of training sample size on generalization performance is one of the most important factors to consider. In all theoretical conclusions of this paper, the impact of problem size and algorithm size on the upper bound of generalization error can be observed to varying degrees. Therefore, this section first investigates the effect of problem size growth on model performance under different modeling methods and algorithm feature types. We compare four methods, including two that utilize algorithm features (adaptive features and predefined features) and two that do not use algorithm features (regression and classification modeling approaches). Since this subsection primarily considers the generalization in the transductive learning setting, the distribution of the training and test sets is set to be the same. All models adopt a multi-layer perceptron structure, with the number of layers and neurons per layer determined through grid search to obtain the optimal results.

\begin{figure}[t]
\centering%
\subfigure[Accuracy with 10 Algorithms]{ \centering
    \label{Algorithm10}
    \includegraphics[height=1.5in,width=1.65in]{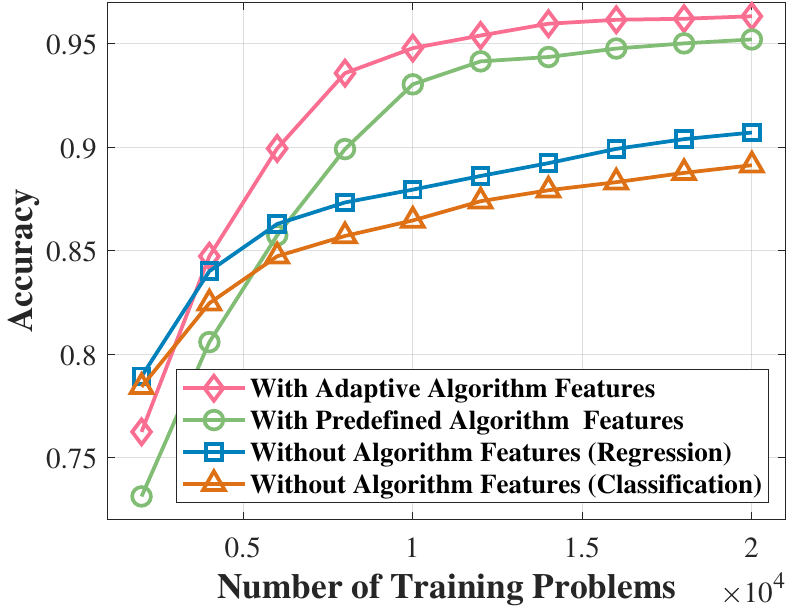}}
\subfigure[Accuracy with 5 Algorithms]{ \centering
    \label{Algorithm5}
    \includegraphics[height=1.5in,width=1.65in]{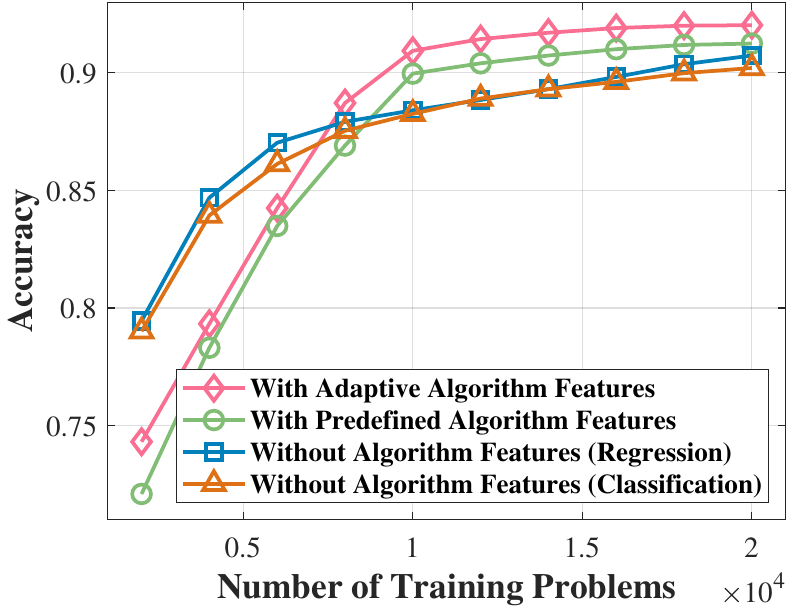}}
\subfigure[Generalization Error with 10 Algorithms]{ \centering
    \label{Algorithm10GE}
    \includegraphics[height=1.5in,width=1.65in]{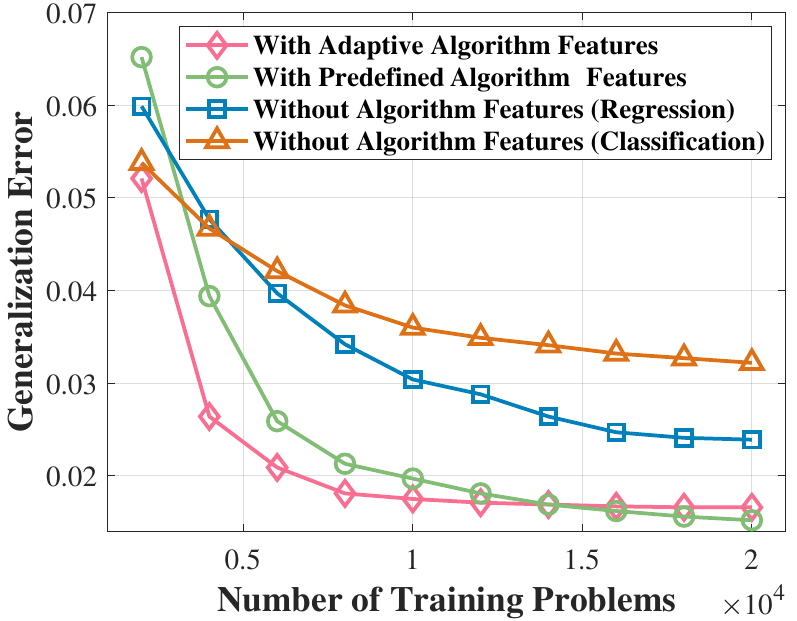}}
\subfigure[Generalization Error with 5 Algorithms]{ \centering
    \label{Algorithm5GE}
    \includegraphics[height=1.5in,width=1.65in]{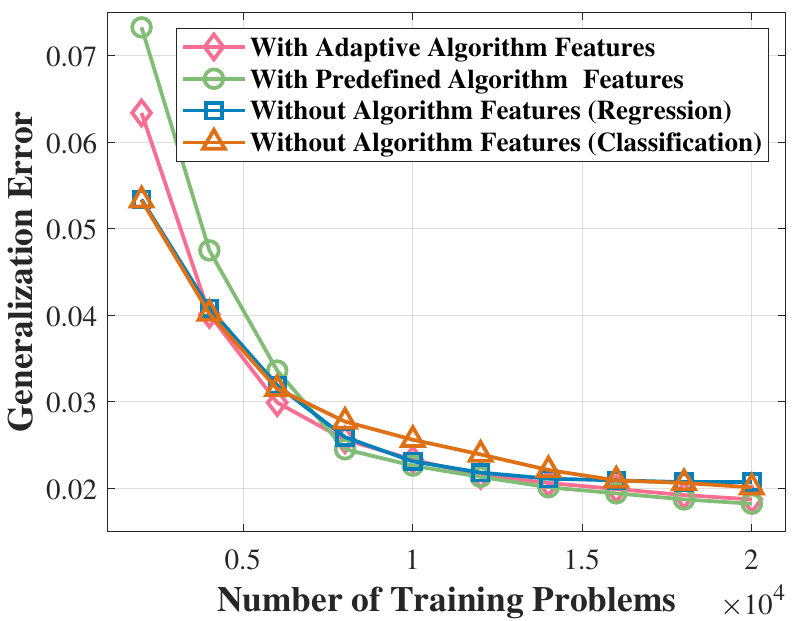}}
\caption{The impact of the number of problem instances on model performance.}
\label{ProblemScale}
\end{figure}

We present the performance curves and error curves of these four methods as the number of problem instances varies for five algorithms and ten algorithms, as shown in Fig. \ref{ProblemScale}. We observe that as the number of problem instances increases, the accuracy of all methods on the test set consistently improves, and the generalization error consistently decreases, albeit at a diminishing rate. This trend aligns with the results of our theoretical analysis. According to Theorem \ref{bound2} and Corollary \ref{class_regre_bound}, when the size of training problem instance is small, the upper bound of generalization error with algorithm features decreases at a rate of $\frac{1}{|\mathcal{S}_{\mathcal{A}}| \cdot |\mathcal{S}_{\mathcal{P}}|}$, while the regression modeling approach and classification modeling approach without algorithm features decrease at rates of $\frac{1}{|\mathcal{S}_{\mathcal{P}}|}$ and $\frac{|\mathcal{S}_{\mathcal{A}}|}{|\mathcal{S}_{\mathcal{P}}|}$, respectively. The number of algorithms $|\mathcal{S}_{\mathcal{A}}|$ has a positive impact on models utilizing algorithm features, no impact on regression models, and a negative impact on classification models. Reflected in the experimental results, models using algorithm features demonstrate a clear advantage, while regression models and classification models perform relatively poorly. Comparing the results in Fig. \ref{Algorithm10} and Fig. \ref{Algorithm5}, as well as Fig. \ref{Algorithm10GE} and Fig. \ref{Algorithm5GE}, we find that the advantage of the method utilizing algorithm features in terms of rate of change becomes more pronounced as the number of algorithms increases, further confirming that one of the benefits of algorithm features is the performance gain with an increasing number of algorithms. In the case of abundant training samples, algorithm features also demonstrate the advantages in generalization performance. The overall asymptotic rate of regression modeling manner, classification modeling manner, and algorithm feature-based modeling manner follow the slack term $\mathcal{O}\left(\frac{c}{\sqrt{\eta |\mathcal{S}_{\mathcal{P}}|}}\right)$, $\mathcal{O}\left(\frac{c}{\sqrt{\eta |\mathcal{S}_{\mathcal{P}}|}}\right)$, and $\mathcal{O}\left(\frac{c}{\sqrt{\eta |\mathcal{S}_{\mathcal{A}}| \cdot |\mathcal{S}_{\mathcal{P}}|}}\right)$, respectively. Similarly, the slopes of all curves in Fig. \ref{ProblemScale} gradually diminish and converge. Models based on regression and classification, in comparison to the utilization of algorithm features, reach the turning point of the curves later and demonstrate inferior overall performance.

\subsection{Impact of the Number of Algorithms}
\label{ex3}

In Section 5.2, we have examined the influence of the number of problem instances on model performance and generalization error, as well as the intervention of candidate algorithm size in this context. To provide a more comprehensive illustration of the impact of candidate algorithm size, this subsection investigates the trend of model performance and generalization error with varying numbers of candidate algorithms. The number of problem instances in the experiment is set to 10,000 and 20,000, while the remaining experimental settings remain consistent with the previous subsection. By gradually increasing the size of the candidate algorithm set in the simulated data, the performance and generalization error changes of each model are depicted in Fig. \ref{AlgorithmScale}. It can be observed that when algorithm features are employed, the model performance generally improves as the number of algorithms increases, and the generalization error consistently decreases. However, in the case of 10,000 problem instances, the accuracy of predefined feature-based model exhibits a different trend when the number of algorithms is small, as shown in Fig. \ref{Problem1W}. This discrepancy can be attributed to the limited training sample size. Nevertheless, this phenomenon does not manifest in the case of 20,000 problem instances in Fig. \ref{Problem2W}. Conversely, models based on classification and regression approaches are adversely affected by an increased number of candidate algorithms, with classification models experiencing a more pronounced negative impact compared to regression models. These trends align with the theoretical conclusions presented in this paper, particularly when an ample number of problem examples are available. From the experiments conducted in this section, it becomes evident that algorithm features find their application in scenarios involving a larger number of candidate algorithms, as models incorporating algorithm features demonstrate superior suitability in such contexts.

\begin{figure}[t]
\centering%
\subfigure[Accuracy with 10,000 Problems]{ \centering
    \label{Problem1W}
    \includegraphics[height=1.5in,width=1.65in]{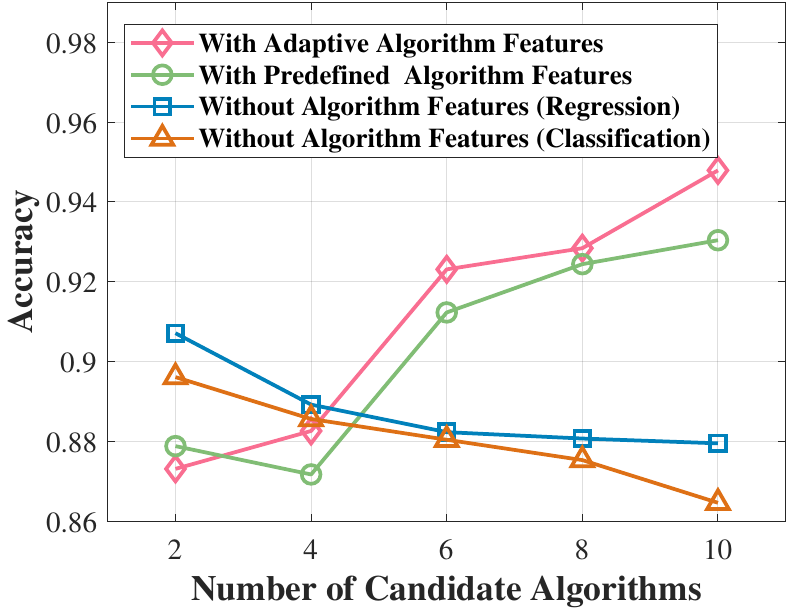}}
\subfigure[Accuracy with 20,000 Problems]{ \centering
    \label{Problem2W}
    \includegraphics[height=1.5in,width=1.65in]{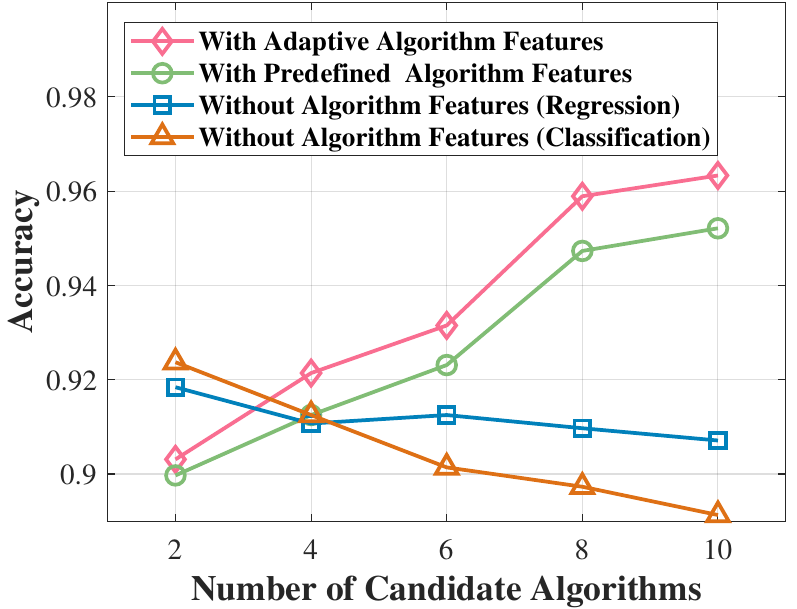}}
\subfigure[Generalization Error with 10,000 Problems]{ \centering
    \label{Problem1WGE}
    \includegraphics[height=1.5in,width=1.65in]{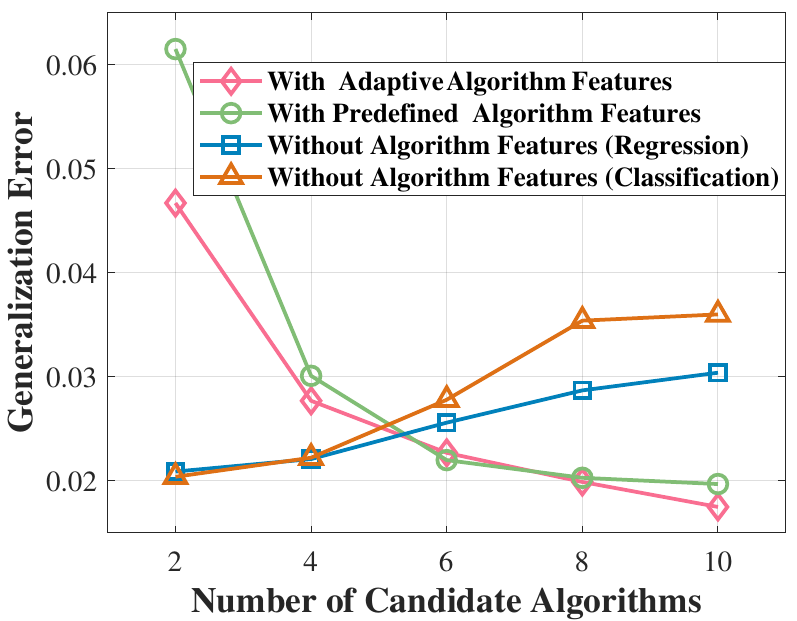}}
\subfigure[Generalization Error with 20,000 Problems]{ \centering
    \label{Problem2WGE}
    \includegraphics[height=1.5in,width=1.65in]{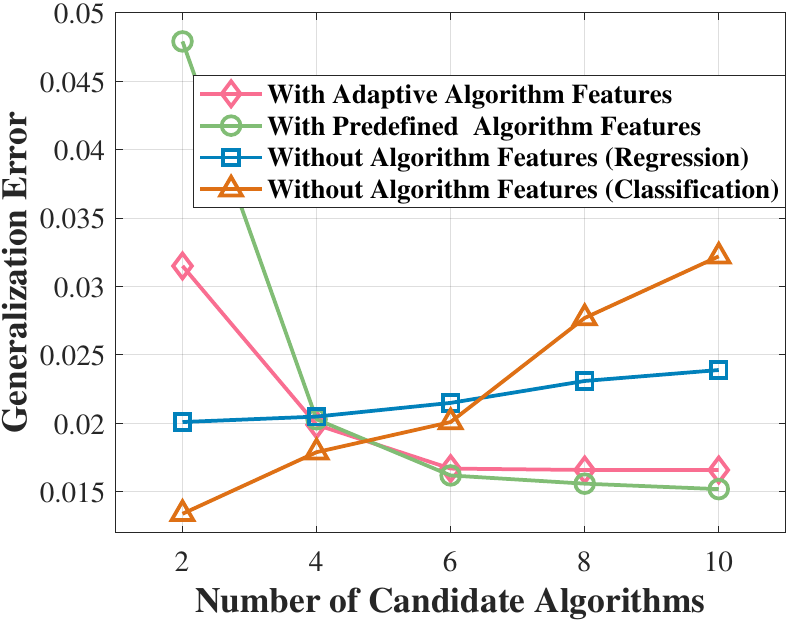}}
\caption{The impact of the number of candidate algorithms on model performance.}
\label{AlgorithmScale}
\end{figure}

\subsection{Impact of the Distribution Shift}
\label{ex4}

\begin{figure}[t]
\centering%
\subfigure[Under Algorithm Distribution Shift]{ \centering
    \label{DistributionShift1}
    \includegraphics[height=1.9in,width=3.4in]{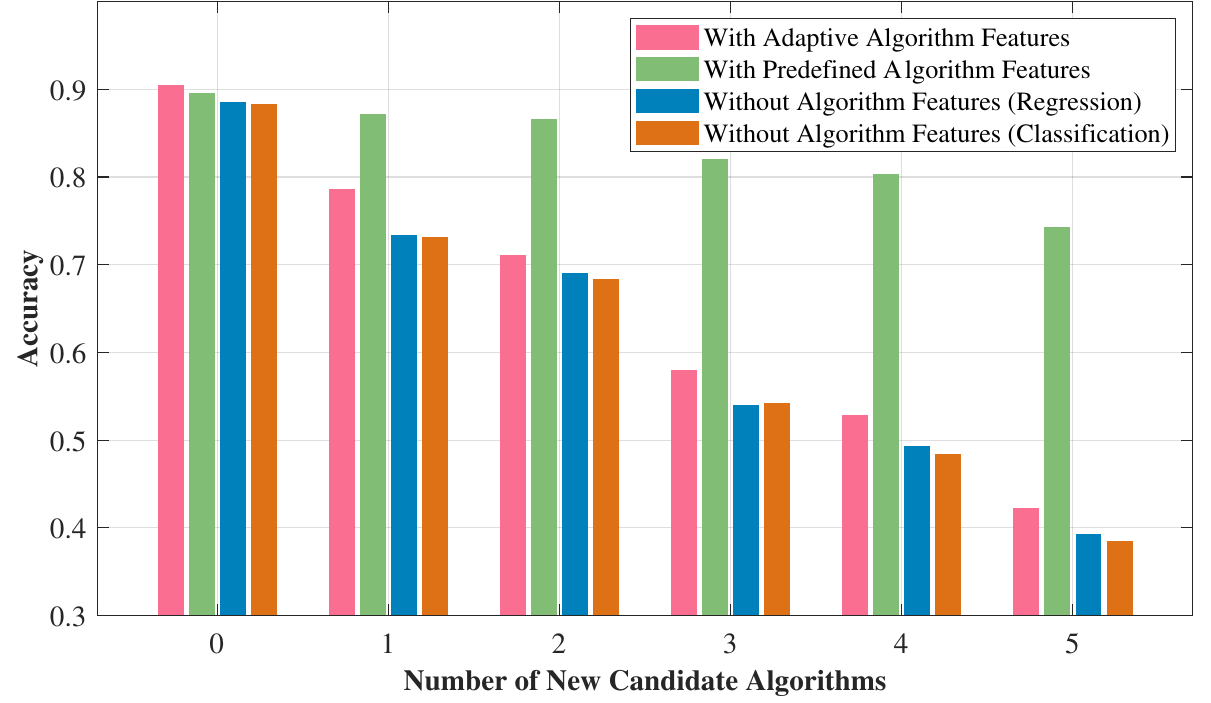}}
\subfigure[Under Problem Distribution Shift]{ \centering
    \label{DistributionShift2}
    \includegraphics[height=1.9in,width=3.4in]{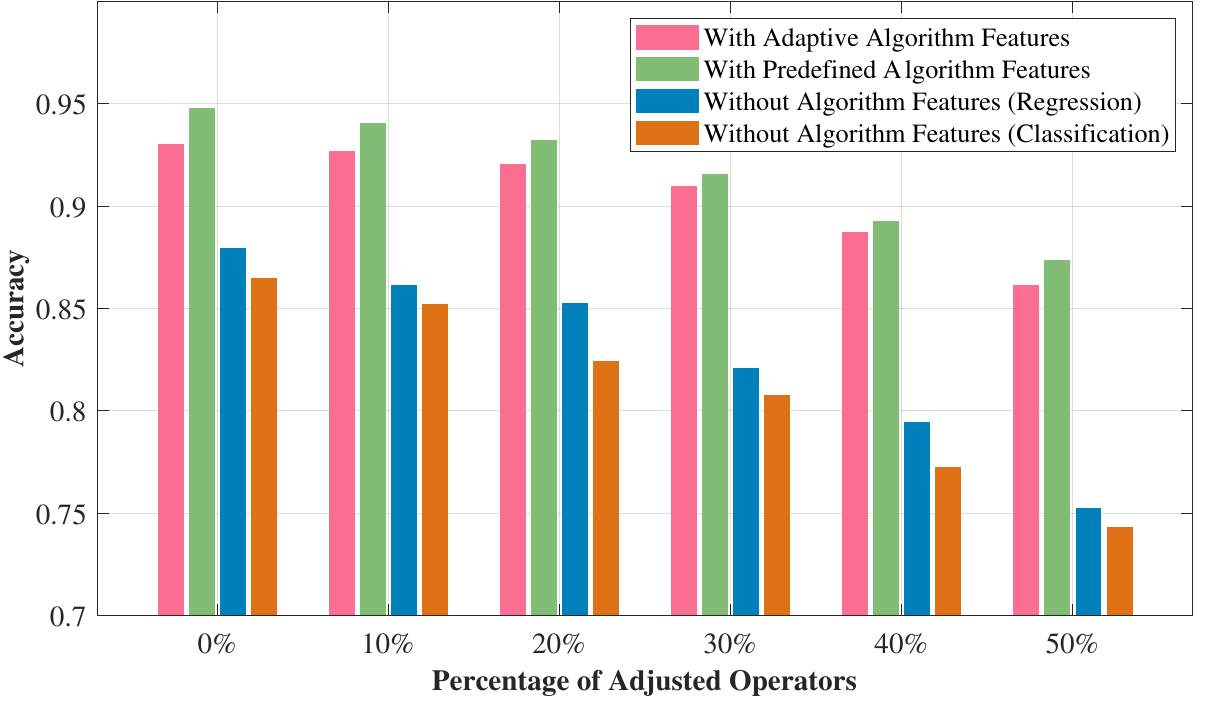}}
\caption{The impact of the distribution shift on model performance.}
\label{DistributionShift}
\end{figure}

Based on Corollary \ref{theorem5}, predefined algorithm feature-based models demonstrate generalization capabilities in the presence of distribution shift between training and testing data, a benefit not shared by models relying on adaptive features or solely problem features. To showcase the robustness of predefined algorithm features under distribution shifts, we manipulate the distributions of both algorithms and problems and assess the performance variations across different methods. This distributional adjustment is achieved by controlling the sets of candidate algorithms and operator usage within the problem set. We compare four methods, including two utilizing algorithm features (adaptive features and predefined features) and two not utilizing algorithm features (regression and classification modeling approaches). In each experiment, we maintain a fixed number of 10,000 training problems and 2,500 testing problems.

The first experiment investigates the influence of algorithm distribution shifts on model performance. We begin by fixing the number of candidate algorithms in the training set at 5 and gradually introduce 0-5 new candidate algorithms into the testing set. The performance differences among the models are visualized in Fig. \ref{DistributionShift1}. Across all experimental groups with distribution shifts (excluding the first group), models constructed using predefined algorithm features consistently exhibit an advantage, with this advantage increasing as the distribution shifts become more pronounced. In contrast, models based on adaptive features, which represent each algorithm through an embedding layer, lack generalization on the algorithmic level, similar to models relying solely on problem features. As new algorithms are introduced in the testing set, these models fail to consider any information related to the new algorithms, resulting in significant performance degradation as the number of new algorithms increases. On the other hand, models leveraging predefined algorithm features can capture patterns within the algorithm set, resulting in smaller performance losses and demonstrating the algorithmic generalization of predefined algorithm features.

The second experiment delves into the impact of problem distribution shifts on model performance. We initially fix the probability of operator usage in the testing set and gradually adjust the usage probabilities of a certain proportion of operators in the training set. By reducing the usage probability of $10\%-50\%$ of operators to $10\%$ of their initial value, we manipulate the problem distribution. The performance differences among the models are illustrated in Fig. \ref{DistributionShift2}. Overall, as more operators are adjusted, each method exhibits slight performance declines. However, the influence of problem distribution shifts on model performance is weaker compared to that of algorithm distribution shifts. Models utilizing algorithm features experience smaller performance declines compared to models relying solely on problem features. This effect can be attributed to the consideration of algorithm features, which enables the models to capture the interrelationship between algorithms and problems, resulting in more robust models with a bidirectional mapping from problems to algorithms. The overall results presented in Fig. \ref{DistributionShift} align with the findings of Corollary \ref{theorem5}, highlighting the impact of distributional shifts on model generalization under predefined algorithm features.

\subsection{Impact of the Training Scale under Distribution Shift}
\label{ex5}

\begin{figure}[t]
\centering%
\subfigure[Accuracy]{ \centering
    \label{DistributionShift3}
    \includegraphics[height=1.5in,width=1.65in]{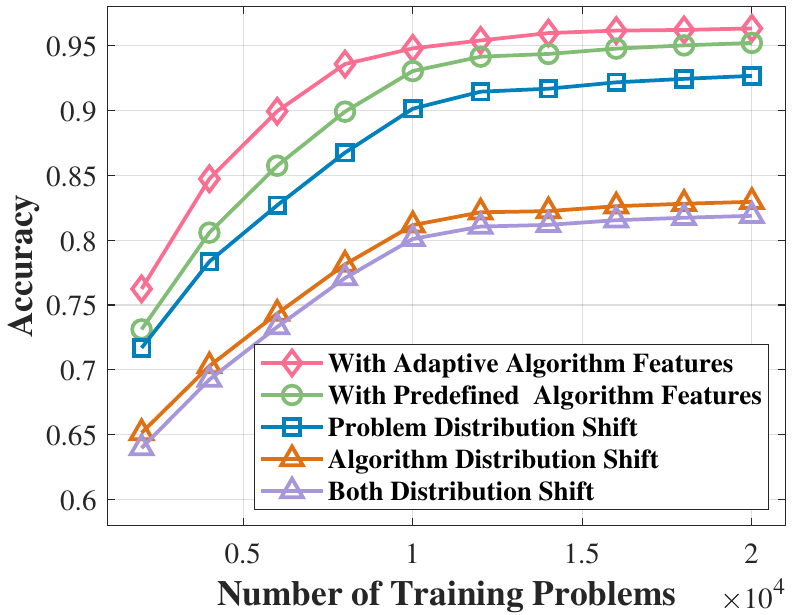}}
\subfigure[Generalization Error]{ \centering
    \label{DistributionShift3GE}
    \includegraphics[height=1.5in,width=1.65in]{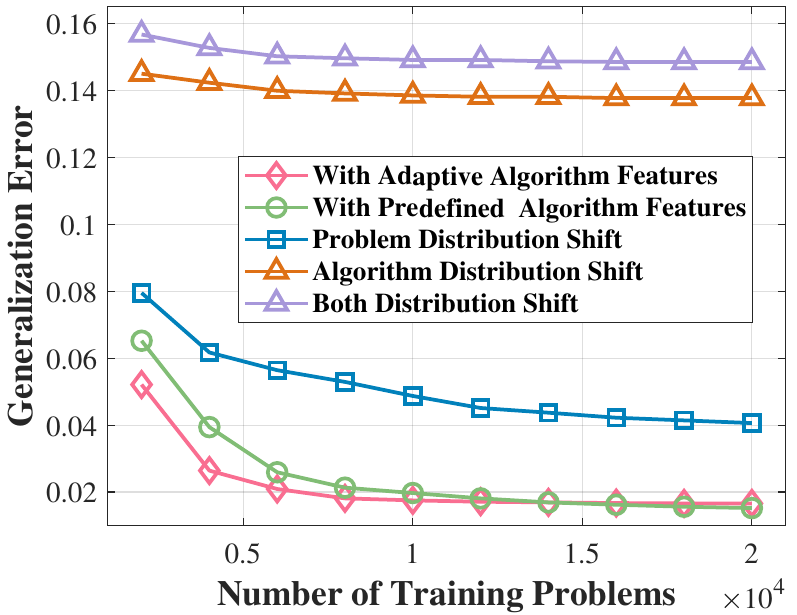}}
\caption{The impact of the number of problem instances on model performance under distribution shift.}
\label{ProblemScale_DistributionShift}
\end{figure}

Comparing Theorem \ref{bound2} and Theorem \ref{theorem4}, we observe that the influence of training sample size on generalization error differs between transductive learning and inductive learning. This discrepancy primarily stems from the fact that the performance-enhancing effect of increasing the training data is attenuated by distributional differences when accounting for distribution shifts. To demonstrate the mediating effect of distribution shifts on performance improvement during the augmentation of training data, this subsection investigates performance variations under different distributional shifts. We progressively augment the number of problem instances used for training and compare five distinct scenarios, including two methods utilizing algorithm features (adaptive features and predefined features) under no distribution shift, the performance of the method utilizing predefined algorithm features under problem distribution shift, algorithm distribution shift, and both distribution shifts simultaneously, with the implementation of distribution shifts following the same procedure as in Section 5.4. Fig. \ref{ProblemScale_DistributionShift} illustrates the performance trends across different scenarios. It is evident that as the number of problem instances in the training set increases, all models show a growth trend in performance, though with varying magnitudes. Notably, the impact of algorithmic distribution shift on performance is more pronounced. According to Theorem \ref{bound2}, in the absence of distribution shifts, the upper bound of generalization error in transductive learning increases at a rate of approximately $\frac{1}{|\mathcal{S}_{\mathcal{P}}|^{\frac{1}{2}}}$ (reaching $\frac{1}{|\mathcal{S}_{\mathcal{P}}|}$ when the number of samples is small), while in inductive settings, the rate is approximately $\frac{1}{|\mathcal{S}_{\mathcal{P}}|^{\frac{1}{4}}}$ according to Theorem \ref{theorem4}. The slope differences depicted in Figure \ref{ProblemScale_DistributionShift} align with the aforementioned theoretical conclusions, as models exhibit more rapid performance improvement in the absence of distributional shifts, whereas the presence of distribution shifts, particularly changes in algorithmic distribution, leads to flatter performance growth curves.

\subsection{Impact of the Model Complexity}
\label{ex6}

\begin{figure}[t]
\centering%
\subfigure[Accuracy on the test set.]{ \centering
    \label{ModelComplexity1}
    \includegraphics[height=1.5in,width=1.65in]{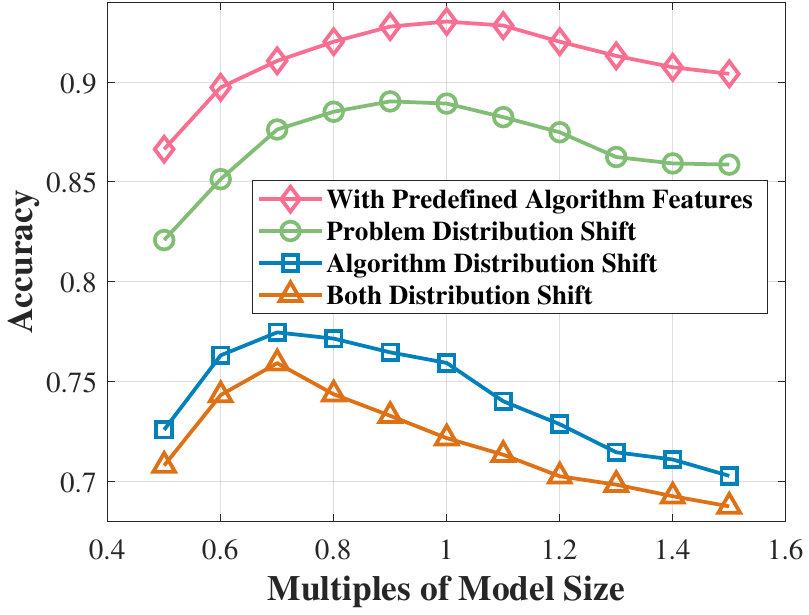}}
\subfigure[Error in accuracy between the test set and training set]{ \centering
    \label{ModelComplexity2}
    \includegraphics[height=1.5in,width=1.65in]{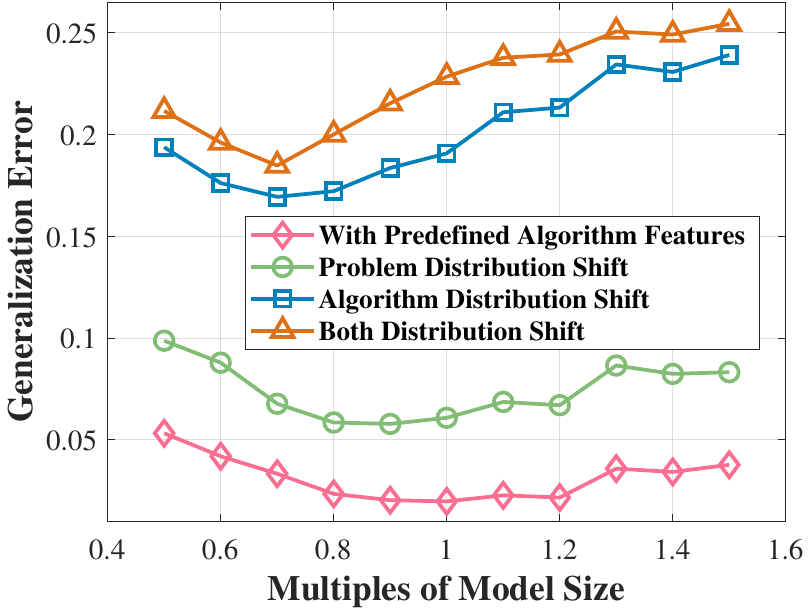}}
\caption{The impact of the model complexity on model performance.}
\label{ModelComplexity}
\end{figure}

Based on Corollary \ref{theorem5}, both $\chi^2(P_{\mathcal{T}} \| P_{\mathcal{S}})$ and $\prod_{i=1}^l R_i$ jointly affect the upper bound of the model's generalization error. The latter is typically associated with the complexity of the model. Therefore, we recommend avoiding excessively complex models in scenarios with significant distributional shifts. To validate this proposition, this section conducts experiments to examine the relationship between model complexity and performance. We fix the number of candidate algorithms and problem instances in the training set at 10 and 10,000, respectively. Following the reference model used in Section 5.2, we adjust the number of neurons in all hidden layers of the multilayer perceptron to be $k$ times that of the reference model ($k\in\{0.5,0.6,\cdots,1.5\}$), thereby obtaining models with varying degrees of complexity. The adjusted models are trained on the same training data until convergence, and the optimal model performance and error results are recorded. We compare the differences in the relationship between model complexity and performance across scenarios with/without distribution shifts. Fig. \ref{ModelComplexity1} and Fig. \ref{ModelComplexity2} illustrate the changes in model accuracy on the test set and the error in accuracy between the test set and training set, respectively. From the figures, it is evident that when distribution shifts are present, the model complexity corresponding to the optimal performance is lower than that in scenarios without distributional shifts. Specifically, when only the problem distribution shift exists, the model size corresponding to the optimal accuracy/minimum error is $90\%$ of the size of the reference model. In the presence of an algorithmic distribution shift, regardless of whether a problem distribution shift exists or not, the model size corresponding to the optimal accuracy/minimum error is $70\%$ of the size of the reference model. The reason why models that are even simpler than these optimal complexities fail to achieve better performance is primarily because overly simple models cannot fully capture the underlying mechanisms of algorithm selection. This indicates that model complexity amplifies the generalization error caused by distribution shifts. When the test data's distribution deviates significantly from that of the training data, it is advisable to design models of an appropriate size based on the scale of the training set. Excessively complex models are not conducive to generalizing to test scenarios with substantial distributional differences. These findings align with the theoretical results in Corollary \ref{theorem5} of this paper.

\section{Conclusion}
\label{conclusion}

This paper delves into the complex interplay between algorithm features and problem features in the realm of algorithm selection, offering both theoretical and empirical insights. Our comprehensive study reveals several key findings and provides actionable guidelines for the practical application of algorithm features in real-world scenarios.

Our theoretical analysis highlights the distinct impacts of training set size on the Rademacher complexity under both transductive and inductive learning paradigms. We found that transductive learning models benefit more from larger training sets in terms of reducing Rademacher complexity compared to inductive learning models. This insight underscores the importance of tailoring training strategies to the learning paradigm employed. In the context of transductive learning, our results indicate that algorithm features improve model generalization, especially when dealing with a large number of candidate algorithms. The generalization error bounds for models leveraging algorithm features decrease more rapidly with increasing training data size, compared to models based solely on problem features. This advantage becomes more pronounced as the amount of training data grows, suggesting that algorithm feature-based models are particularly suited for scenarios with extensive training problems and a large number of candidate algorithms. When considering distribution shifts between the training and test sets, models based on algorithm features generally exhibit better generalization than those relying only on problem features. However, predefined algorithm features demonstrate superior generalization performance compared to adaptive features. While adaptive features adapt to the training data, they lack the robustness to handle new candidate algorithms in different distributions effectively. On the other hand, predefined features, which capture the intrinsic properties of algorithms, provide a more consistent generalization capability even under significant distribution shifts. Nonetheless, even predefined features suffer an increase in generalization error proportional to the chi-square distance between the training and test distributions. This finding emphasizes the need for careful consideration of distribution shifts in practical applications and suggests a cautious approach to using large-scale models in such scenarios.

In summary, our research provides a nuanced understanding of when and how to utilize algorithm features effectively. We recommend leveraging algorithm features in settings with a large number of candidate algorithms and considering the use of predefined features to handle potential distribution shifts. For situations involving significant distribution shifts, predefined features offer a more robust option, though the model complexity should be managed to mitigate potential increases in generalization error. By following these guidelines, practitioners can better harness the power of algorithm features to improve the accuracy and reliability of algorithm selection processes.

\ifCLASSOPTIONcaptionsoff
  \newpage
\fi

\bibliographystyle{IEEEtran}
\bibliography{bare_jrnl}

%\graphicspath{{./Author_photo_biographies/}}

%\begin{IEEEbiography}[{\includegraphics[width=1in,height=1.25in,clip,keepaspectratio]{Xingyu_Wu_photo}}]{Xingyu Wu} received the B.Sc degree from University of Electronic Science and Technology of China (UESTC), Chengdu, China, in 2018, and the Ph.D degree in the School of Computer Science and Technology, University of Science and Technology of China (USTC), Hefei, China, in 2023. He is currently a postdoctoral fellow in the Department of Computing, The Hong Kong Polytechnic University (PolyU), Hong Kong SAR, China. His research interests include causality-based machine learning and automated machine learning.
%\end{IEEEbiography}

\end{document}